\renewcommand{\phi}{\varphi}
\renewcommand{\P}{\mathbb{P}}
\newcommand{\E}{\mathbb{E}}
\newcommand{\R}{\mathbb{R}}
\newcommand{\mI}{\mathrm{I}}
\newcommand{\cK}{\mathcal{K}}
\newcommand{\cB}{\mathcal{B}}
\newcommand{\cN}{\mathcal{N}}
\newcommand{\cM}{\mathcal{M}}
\newcommand{\cF}{\mathcal{F}}
\newcommand{\cE}{\mathcal{E}}
\def\ds1{\mathds{1}}
\renewcommand{\epsilon}{\varepsilon}
\newcommand{\eps}{\epsilon}
\newcommand\Var{{\dsV\text{ar}}\,}
\newcommand{\argmin}{\mathop{\mathrm{argmin}}}
\renewcommand{\tilde}{\widetilde}
\newlength{\minipagewidth}
\newcommand{\beq}{\begin{equation}}
\newcommand{\eeq}{\end{equation}}
\newcommand{\beqa}{\begin{eqnarray}}
\newcommand{\eeqa}{\end{eqnarray}}
\newcommand{\beqan}{\begin{eqnarray*}}
\newcommand{\eeqan}{\end{eqnarray*}}
\def\ba#1\ea{\begin{align*}#1\end{align*}} 
\def\banum#1\eanum{\begin{align}#1\end{align}} 
\def \TF{ \mathcal{T} }
\def \RR {\mathbb R}
\def \Var {\mathrm{Var}}
\def \COV {\mathrm{Cov}}
\def \Sph{\mathbb{S}^{n-1}}
\def \Vol{\mathrm{Vol}}
\def \EE {\mathbb E}
\def \PP {\mathbb P}
\def \Var {\mathrm{Var}}
\def\eps{\varepsilon}
\newtheorem{theorem}{Theorem}
\newtheorem{conjecture}{Conjecture}
\newtheorem{lemma}{Lemma}
\newtheorem{claim}{Claim}
\newtheorem{proposition}{Proposition}
\newcommand{\BlackBox}{\rule{1.5ex}{1.5ex}}  
\newenvironment{proof}{\par\noindent{\bf Proof\ }}{\hfill\BlackBox\\[2mm]}
\begin{document}

\title{Kernel-based methods for bandit convex optimization}

\author{S\'ebastien Bubeck
\and Ronen Eldan
\and 
Yin Tat Lee}
\date{\today}

\maketitle

\begin{abstract}
We consider the adversarial convex bandit problem and we build the first $\mathrm{poly}(T)$-time algorithm with $\mathrm{poly}(n) \sqrt{T}$-regret for this problem. To do so we introduce three new ideas in the derivative-free optimization
literature: (i) kernel methods, (ii) a generalization of Bernoulli convolutions, and (iii) a new annealing schedule for exponential weights (with increasing learning rate). The basic version of our algorithm achieves $\tilde{O}(n^{9.5} \sqrt{T})$-regret, and we show that a simple variant of this algorithm can be run in $\mathrm{poly}(n \log(T))$-time per step at the cost of an additional $\mathrm{poly}(n) T^{o(1)}$ factor in the regret. These results improve upon the $\tilde{O}(n^{11} \sqrt{T})$-regret and $\exp(\mathrm{poly}(T))$-time result of the first two authors, and the $\log(T)^{\mathrm{poly}(n)} \sqrt{T}$-regret and $\log(T)^{\mathrm{poly}(n)}$-time result of Hazan and Li. Furthermore we conjecture that another variant of the algorithm could achieve $\tilde{O}(n^{1.5} \sqrt{T})$-regret, and moreover that this regret is unimprovable (the current best lower bound being $\Omega(n \sqrt{T})$ and it is achieved with linear functions). For the simpler situation of zeroth order stochastic convex optimization this corresponds to the conjecture that the optimal query complexity is of order $n^3 / \epsilon^2$.
\end{abstract}

\section{Introduction}
Derivative-free optimization has a long history, going back at least to \cite{Ros60} (see \cite{CSV09} for more on its history and applications). Perhaps surprisingly, the information-theoretic limits for this problem are not yet understood even for bounded convex functions. In the noiseless case \cite{Pro96} (improving upon a result of \cite{NY83}) shows that $O(n^2 \log(n/\epsilon))$ function value queries are sufficient to find an $\epsilon$-approximate minimizer of a convex function (for comparison it is known that $\Theta(n \log(1/\epsilon))$ gradient queries are necessary/sufficient, \cite{Lev65, New65}). On the other hand with noisy function evaluation the current state of the art in \cite{BLNR15} is that $O(n^{7.5} / \epsilon^2)$ queries are sufficient, and that $\Omega(n^2 / \epsilon^2)$ queries are necessary (this lower bound holds even for linear functions, \cite{DHK08}). An even more difficult scenario (where much less is known) is the {\em robust} setting where an adversary can arbitrarily corrupt an $\epsilon$-fraction of the queries. It is only recently that methods with the optimal $\epsilon$-scaling for the number of queries (i.e., $1/\epsilon^2$) were discovered for the robust setting \cite{BE16, HL16}. However those methods are inherently exponential-time (more precisely \cite{BE16} is $\mathrm{poly}(n \log(1/\epsilon))/\epsilon^2$ for the number of queries and $\exp(\mathrm{poly}(n/\epsilon))$-time while \cite{HL16} is $\log(1/\epsilon)^{\mathrm{poly}(n)}/\epsilon^2$ for both the query and time complexity). We note that in \cite{SV15} it is shown for another model of corrupted queries (namely each query can be adversarially modified by at most $\epsilon$) that the exponential dependency on the dimension is unavoidable for some values of $\epsilon = \Omega(1/\mathrm{poly}(n))$. A key contribution of this paper is to give a polynomial-time method for the robust setting described above. Furthermore we conjecture that a modification of our new algorithm (whose pseudo-code is given on the last page) could need as few as $O(n^3 / \epsilon^2)$ queries, which we conjecture to be optimal even without adversarial noise. Our results hold in the more general context of bandit convex optimization which we describe next in Section \ref{sec:BCO}. We give a primer of our contributions in Section \ref{sec:primer}. More related works are described in Section \ref{sec:related}. Finally the introduction is concluded in Section \ref{sec:open} with some open problems that our work raises.

\subsection{Bandit convex optimization} \label{sec:BCO}
We study adversarial bandit convex optimization on a convex body $\cK \subset \R^n$. It can be described as the following sequential game: at each time step $t=1, \hdots, T$, a player selects an action $x_t \in \cK$, and simultaneously an adversary selects a convex loss function $\ell_t : \cK \rightarrow [0,1]$. The player's feedback is its suffered loss, $\ell_t(x_t)$. The player has access to external randomness, and can select her action $x_t$ based on the history $(x_s, \ell_s(x_s))_{s<t}$. The player's perfomance at the end of the game is measured through the 
regret
$$R_T = \sum_{t=1}^T \ell_t(x_t) - \min_{x \in \cK} \sum_{t=1}^T \ell_t(x) ,$$
which compares her cumulative loss 
to the smallest cumulative loss she could have obtained had she known the sequence of loss functions. Without loss of generality we assume that $\cK$ contains a unit ball\footnote{Since we have not yet made any assumptions on the Lipschitz constant of $\ell_t$ one can simply rescale $\cK$.} and for normalization purposes we assume that the diameter\footnote{The diameter only appears logarithmically in our bound. We choose a concrete upper bound on it only to simplify the upcoming equations.} of $\cK$ is at most $T$. Furthermore without loss of generality we can also assume that the losses $\ell_t$ are $T$-Lipschitz (one can simply restrict to a slightly smaller subset of $\cK$).

Our main contribution is to give the first $\tilde{O}(\mathrm{poly}(n) \sqrt{T})$-regret and $\mathrm{poly}(T)$-time algorithm for bandit convex optimization:
\begin{theorem} \label{th:main}
Algorithm \ref{fig:alg} (pseudo-code on last page) satisfies with probability at least $1-1/T$, for some universal constant $c>0$, \footnote{Throughout the paper, we assume $T>n$, for otherwise Theorem \ref{th:main} is trivially true.}
$$R_T \leq c \ n^{9.5} \log^{7.5}(T) \sqrt{T} .$$ 
Furthermore the algorithm can be modified, at the cost of an additional $\mathrm{poly}(n)$ factor (respectively a $\mathrm{poly}(n) T^{o(1)}$ factor) in the regret, such that each step can be run in $\mathrm{poly}(n \log(T)) T$-time (respectively $\mathrm{poly}(n \log(T))$-time), provided that $\cK$ is a polytope described by $\mathrm{poly}(n)$ constraints whose coeffcients are rational numbers with absolute values of numerators and denominators bounded by $\mathrm{poly}(T)$.
\end{theorem}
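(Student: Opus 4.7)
The overall plan is to run a continuous exponential-weights algorithm on $\cK$, maintaining a density $p_t$ that is updated using kernel-smoothed loss estimators built from the single bandit observation $\ell_t(x_t)$. I would use the standard three-way decomposition of the regret: (i) a bias term comparing the true losses $\ell_t$ to the smoothed surrogates $\hat{\ell}_t$ that the algorithm actually minimizes; (ii) a stability/variance term of the form $\eta_t$ times the second moment of the loss estimator; and (iii) an entropy/log-partition term $\K(p^\star \| p_t)/\eta_t$ for a comparator density $p^\star$ concentrated near the optimizer. The target $\tilde O(n^{9.5}\sqrt T)$ bound will follow by balancing these three pieces.

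For the kernel, I would choose $K_t$ adapted to the current density $p_t$, so that the smoothed loss $\hat{\ell}_t(y) = \E_{x \sim K_t(y,\cdot)}[\ell_t(x)]$ is (a) approximately convex, ensuring that the regret against the sequence $\hat{\ell}_t$ is close to the regret against $\ell_t$, and (b) admits an unbiased estimator from a single query whose variance scales as $\mathrm{poly}(n)$ rather than polynomially in the diameter of the support of $p_t$. The generalization of Bernoulli convolutions enters by providing both the sampler and the estimator: one writes $x_t$ as a structured random convex combination of the ``center'' of $p_t$ and directional perturbations, with the weights drawn from a signed measure on $[-1,1]$ tailored so that the induced ``reconstructed'' loss is unbiased for $\hat{\ell}_t$ and has bounded moments. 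This construction is the heart of the algorithm.

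For step (iii), I would exploit the annealing schedule with increasing $\eta_t$: as $p_t$ concentrates, larger $\eta_t$ can be tolerated without blowing up (ii), and an Abel-summation argument bounds $\sum_t (\eta_{t+1}^{-1} - \eta_t^{-1}) \K(p^\star \| p_t)$ by $\eta_T^{-1}$ times an initial $O(n \log T)$ entropy. Tuning $\eta_t$ to balance (ii) against (iii) then yields the $\sqrt T$ rate, with the $n^{9.5}\log^{7.5}(T)$ prefactor coming from the dimensional dependence of the second moment of the Bernoulli-convolution estimator, the cost of covering the optimizer by a set of positive $p^\star$-mass, and logarithmic factors from the expected-to-high-probability conversion via standard martingale concentration. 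For the computational claim, the sampler only needs to draw from $p_t$ and from the explicit Bernoulli-convolution measure; on a well-described polytope both are implementable in $\mathrm{poly}(n \log T)$ time using log-concave sampling, and the extra $T$ (respectively $T^{o(1)}$) factor in the two variants reflects how accurately conditional expectations against the kernel are computed.

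The hard part, I expect, is producing a kernel $K_t$ that simultaneously makes $\hat{\ell}_t$ approximately convex (so bias (i) is negligible compared to $\sqrt T$) \emph{and} admits an unbiased estimator of $\mathrm{poly}(n)$ variance through a Bernoulli-convolution-type construction. These two goals pull in opposite directions: a wide kernel yields a well-behaved estimator but destroys convexity, while a narrow kernel preserves convexity but inflates variance. I expect the bulk of the technical work, together with the precise exponent $n^{9.5}$, to come from the adaptive kernel that resolves this tension, and the correctness proof to reduce to (a) a convexity lemma for $\hat{\ell}_t$, (b) a moment bound on the estimator, and (c) a robust regret theorem for continuous exponential weights with time-varying learning rate.
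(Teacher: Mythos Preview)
Your high-level plan---kernelized exponential weights, bias/variance/entropy decomposition, annealing---matches the paper's skeleton, but you are missing a structural piece that the paper regards as essential, and you misdescribe the role of the Bernoulli convolution.

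\textbf{The gap: no focus region, no restart.} You assume there is a single kernel $K_t$ for which $\hat\ell_t$ is approximately convex \emph{and} the one-point estimator has $\mathrm{poly}(n)$ variance uniformly over $\cK$. The paper states explicitly that in dimension $n\ge 2$ no such kernel is known, and conjectures none exists: the estimator $\tilde\ell_t(y)$ can only be bounded for $y$ inside an ellipsoid $\cE_{p_t}(R)$ adapted to the current covariance (Lemma~\ref{lem:smoothnesstildeell}). The paper therefore truncates $\tilde\ell_t$ outside a shrinking \emph{focus region} $F_t$, which means the exponential-weights analysis only controls regret against points in $F_t$. To handle comparators outside $F_t$, the paper adds a \emph{restart condition}: whenever a boundary point of $F_t$ looks too good, restart from scratch. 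The increasing learning rate is not merely a device for balancing (ii) against (iii); its real purpose is to guarantee that at restart time the accumulated regret is \emph{negative} (Proposition~\ref{prop:mainstep}, case $\TF=\tau<T$, via Claim~\ref{claim:Qbig}). The proof of Theorem~\ref{th:main} is then a union bound over restart epochs, using Proposition~\ref{prop:conc} to show each epoch is well-behaved with high probability. Without the focus/restart machinery your moment bound in step (ii) would not hold, and your argument would not close.

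\textbf{The Bernoulli convolution.} It is not a signed measure on $[-1,1]$ used for variance reduction in the estimator. In the paper the generalized Bernoulli convolution is the distributional identity $Z\overset{D}{=}(1-\lambda)Z+\lambda X$ that defines the \emph{core} of $p$; the kernel is then $K[p]\delta_y=$ law of $(1-\lambda)C+\lambda y$ with $C$ drawn from a Gaussian approximation $c[p]$ of this core. The point of this construction is the convex-domination inequality $K[p]^*f(x)\le(1-\lambda)\langle K[p]p,f\rangle+\lambda f(x)$ (Lemma~\ref{lem:convexdommain}), which is what converts regret on $K^*\ell$ into regret on $\ell$ at the cost of a multiplicative $1/\lambda$. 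Your ``convexity lemma for $\hat\ell_t$'' should be replaced by this inequality; $K^*\ell_t$ is genuinely convex, and the $1/\lambda$ factor (with $\lambda\approx n^{-8}$) is where most of the $n^{9.5}$ exponent comes from.
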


We conjecture that in fact a much stronger statement holds true (see Section \ref{sec:open} for more on this conjecture).
\begin{conjecture} \label{conj}
There exists an algorithm such that each step takes $\mathrm{poly}(n \log(T))$-time (under the same assumption on $\cK$ as in Theorem \ref{th:main}) and which achieves $\E R_T = \tilde{O}(n^{1.5} \sqrt{T})$. Furthermore no algorithm can achieve a better regret bound for large $n$ and $T$.
\end{conjecture}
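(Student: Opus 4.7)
This is a conjecture rather than a theorem, so any proof plan is necessarily speculative; I sketch the two attacks I would mount on the two directions.

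\textbf{Upper bound.} I would refine the kernel-based exponential weights scheme underlying Theorem \ref{th:main}. The standard bandit exponential weights template yields regret of order $\sqrt{T \cdot \log N \cdot V}$, where $\log N = \tilde{O}(n)$ after discretizing $\cK$ at scale $1/\mathrm{poly}(T)$ and $V$ is the per-step second moment of the loss estimator, so reaching $\tilde{O}(n^{1.5}\sqrt{T})$ requires $V = \tilde{O}(n^{2})$. The $n^{9.5}$ of Theorem \ref{th:main} loses polynomial factors in three places: the width of the kernel smoothing $\ell_t$, the variance of the Bernoulli-convolution estimator, and the annealing schedule that controls drift between consecutive posteriors. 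Each should be pushed to its expected optimum: a minimum-width kernel that still admits unbiased recovery of the loss, an estimator whose variance matches the $O(n^{2})$ intrinsic to spherical sampling of a convex function, and a potential-function analysis closer to the smooth regime than to the entropic one. Polynomial-time per-step implementation would proceed by log-concave sampling on the polytope $\cK$ as in the time-efficient variant of Theorem \ref{th:main}.

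\textbf{Lower bound.} The current $\Omega(n\sqrt{T})$ bound is attained by linear losses and uses the dimension only through the direction of the unknown gradient. To reach $\Omega(n^{1.5}\sqrt{T})$ one must exploit curvature proper. The candidate family I would try is $\ell_t(x) = \langle g_t, x\rangle + \rho\, \phi_t(\langle u_t, x\rangle)$ with $u_t$ a random direction and $\phi_t$ a small convex bump, so that the player must simultaneously identify a linear direction and calibrate roughly $\sqrt{n}$ further independent directional bumps. A Le Cam / Assouad style argument against an adversary that randomizes across a combinatorial ensemble of $\Theta(n)$ hidden coordinates seems to be the natural vehicle.

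\textbf{Main obstacles.} The harder part is the lower bound: known techniques tend to collapse into the linear case because a player who probes curvature can largely decouple direction estimation from quadratic estimation, and a truly $n^{1.5}\sqrt{T}$-hard family must force the two tasks to genuinely interfere. For the upper bound the central obstacle is variance reduction: driving the estimator variance from the value implicit in Theorem \ref{th:main} down to $n^{2}$ very likely requires a kernel adapted to the geometry of the current posterior rather than a fixed isotropic smoothing, combined with a near-tight potential-function analysis that itself interacts, through the drift term, with the new annealing schedule.
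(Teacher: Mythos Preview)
This statement is a conjecture, and the paper does not prove it; there is no proof to compare against. The paper does, however, discuss the conjecture at some length in its open-problems section (Section \ref{sec:open}), and your plan lines up with that discussion only partially.

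On the upper bound, the paper is more concrete than you are about where the three dimensional losses occur: (i) replacing the real core by the Gaussian core in the kernel, (ii) using Azuma--Hoeffding instead of Bernstein--Freedman, and (iii) the overlarge focus region and value of $\alpha$ needed to guarantee negative regret on restart. In particular, the paper's heuristic route to $n^{1.5}$ is to use the real core, assume heuristically that it is Gaussian, and then apply Bernstein so that the quantity $\zeta$ of Lemma \ref{lem:smoothnessK} bounds the \emph{variance} of the estimator rather than its magnitude (the $R_1 R_2$ term drops out). Your description in terms of ``kernel width'' and ``variance matching spherical sampling'' is in the right spirit but does not identify this concrete mechanism. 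Your remark that the kernel should be adapted to the current posterior rather than a fixed isotropic smoothing is something the paper already does --- the kernel $K[p_t]$ is built from $\mu(p_t)$ and $\mathrm{Cov}(p_t)$ --- so that is not a new ingredient.

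On the lower bound, your family $\ell_t(x) = \langle g_t, x\rangle + \rho\, \phi_t(\langle u_t, x\rangle)$ is a reasonable first guess, but the paper flags exactly the obstruction you allude to only briefly: given any query $x_t$, the adversary's pointwise-best play is the tangent linear function, and if the player anticipates linearity she achieves $\tilde{O}(n\sqrt{T})$ via one-point linear regression. The paper frames the lower-bound challenge explicitly as quantifying, for the adversary, the trade-off between the information leaked by the non-linearity and the extra loss it inflicts. Your Le Cam/Assouad sketch does not yet engage with this trade-off, and without doing so any such construction is likely to collapse back to the linear $\Omega(n\sqrt{T})$ bound.
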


\subsection{Contributions} \label{sec:primer}
Theorem \ref{th:main} is the first $\tilde{O}(\mathrm{poly}(n) \sqrt{T})$-regret and $\mathrm{poly}(T)$-time guarantee for bandit convex optimization. We develop several new ideas to achieve this result. We give a brief summary of these ideas below. 

Let $\cM$ be the set of probability measures on $\cK$, and let $\cF$ be the set of measurable functions from $\cK$ to $\R$. In order to avoid overloading notation we will use the same symbol for a measure $p \in \cM$ and for its density with respect to the Lebesgue measure. For $p \in \cM, f\in \cF$ we denote $\langle p, f \rangle = \int f(x) dp(x) $. A Dirac mass at $x$ is denoted by $\delta_x$.

\subsubsection{Kernel methods} \label{sec:kernelsintro}
A major difficulty of the convex bandit problem compared to the linear bandit case is that there is no obvious unbiased estimator of $\ell_t$ based only on the observation of $\ell_t(x_t)$ (while in the linear case one gets an estimator via a one-point linear regression). We go around this issue as follows. Let us fix a kernel $K : \cK \times \cK \rightarrow [0,\infty)$ such that $\int K(x,y) dx = 1$. With a slight abuse of notation the kernel $K$ acts on probability measures $p \in \cM$ as $Kp(x) = \int K(x,y) dp(y)$ and on functions $f \in \cF$ via the adjoint operator $K^*$ defined by $K^* f(y) = \int f(x) K(x,y) dx$. In words $K^* f$ is a linear combination of functions $K(x, \cdot)$ with weights given by the function values of $f$, and thus one has an obvious unbiased estimator for $K^* f$ based on bandit feedback! More precisely, using $f(x)$ where $x$ was sampled from some probability distribution $q$, one has that $f(x) K(x, \cdot) / q(x)$ is an unbiased estimator of $K^* f$ (since $\int q(x) f(x) K(x, \cdot) / q(x) dx = K^* f$).

By playing a no-regret strategy with the unbiased estimator described above one can hope to control instantaneous regrets of the form $\langle p - \delta_x , K^* f \rangle$ (this represents the regret of playing from $p$ --which would be the distribution recommended by the no-regret strategy-- instead of playing $x$ when the loss is $K^* f$). A key observation is that, by definition of the adjoint, the latter quantity is equal to $\langle K(p - \delta_x), f \rangle$. Since one is interested in controlling the regret when the loss is $f$ (rather than $K^* f$) this idendity suggests that instead of playing a point sampled from $p$ one should play from $Kp$. It then only remains to relate $\langle Kp - \delta_x, f \rangle$ (which is the instantaneous regret of playing from $Kp$ instead of playing $x$ when the loss was $f$) to $\langle K(p - \delta_x), f \rangle$ (which is the term that we hope to be able to control when $p$ comes from a no-regret strategy with the estimator described in the previous paragraph).

The above idea is detailed in Section \ref{sec:kernels} (we use continuous exponential weights as the no-regret strategy). 

\subsubsection{Generalized Bernoulli convolutions} \label{sec:coreintro}
As we just explained in Section \ref{sec:kernelsintro} we want to find a kernel $K$ such that $\langle Kp - \delta_x, f \rangle \lesssim \langle K(p - \delta_x), f \rangle$ for all convex functions $f$ and all points $x \in \cK$. We note that for any $\lambda \in (0,1)$ one has
\begin{equation} \label{eq:whatwewantforakernel}
\langle Kp - \delta_x, f \rangle \leq \frac{1}{\lambda} \langle K(p - \delta_x), f \rangle \Leftrightarrow K^* f(x) \leq (1-\lambda) \langle Kp, f\rangle + \lambda f(x) .
\end{equation}
Leveraging the fact that $f$ is convex we see that a natural kernel to consider is such that $K \delta_x$ is the distribution of $(1- \lambda) Z + \lambda x$ for some random variable $Z$ to be defined. Indeed in this case one has
$$K^* f(x) = \E f((1-\lambda) Z + \lambda x) \leq (1-\lambda) \E f(Z) + \lambda f(x) .$$
Thus this kernel satisfies the right hand side of \eqref{eq:whatwewantforakernel} if $Z$ is defined to be equal to $K p$, that is $Z$ satisfies the following distributional identity, where $X \sim p$,
\begin{equation} \label{eq:defcore1}
Z \; \overset{D}{=} \; (1-\lambda) Z + \lambda X .
\end{equation}
If \eqref{eq:defcore1} holds true we say that $Z$ is the {\em core} of $p$. It is easy to see that the core always exists and is unique by taking $Z = \sum_{k=0}^{+\infty} (1-\lambda)^k \lambda X_k$ where $X_0, X_1, \hdots$ are i.i.d. copies of $X$. Interestingly such random variables have a long history for the special case of a random sign $X$ where they are called {\em Bernoulli convolutions}, \cite{Erd39}. Our notion of core can thus be viewed as a generalized Bernoulli convolution. We refer the reader to \cite{PSS00} for a survey on Bernoulli convolutions, and we simply mention that the main objective in this literature is to understand for which values of $\lambda$ is the random variable $Z$ ``smooth'' (say for instance absolutely continuous with respect to the Lebesgue measure). As we 
will see the smoothness of the core will also be key for us (it will allow to control the variance of the unbiased estimator described in Section \ref{sec:kernelsintro}). In order to avoid the difficulties underlying Bernoulli convolutions we will in fact build a kernel based on a {\em Gaussian core} (which can be viewed as some Gaussian approximation of the real core). These ideas are detailed in Section \ref{sec:highdimkernel}. 

We emphasize that the kernel $K$ proposed above depends on the distribution $p$ which in our application will change over time (this will be the exponential weights distribution). Having an adaptive kernel is key for low regret. Indeed for any fixed kernel there is a tradeoff between making $K^* f$ very smooth (in which case the corresponding estimator will have a small variance) and on the other hand having $K^* f$ faithfully represent where the minimum of $f$ is. As time goes by and the exponential weights distribution focuses on a smaller region of space, the kernel should trade off some smoothness far from this region for more accuracy in the approximation of $f$ by $K^* f$ in this region. Naive ideas such as simply taking a convolution with a fixed Gaussian cannot achieve this tradeoff and could not lead to small regret.

Finally the dimension $1$ case turns out to be special and we were able to design a much simpler kernel for this situation: we replace the core of $p$ by a Dirac at the mean of $p$, and instead of a fixed $\lambda$ we take it to be uniformly distributed in $[0,1]$. The analysis of this kernel is described in Section \ref{sec:dim1} where we prove a slightly better regret bound than the one given by Theorem \ref{th:main} for $n=1$, namely we prove a (pseudo-)regret upper bound of order $\log(T) \sqrt{T}$. 

\subsubsection{Focus region, restart, and annealing schedule}
The high-dimensional algorithm (described in Section \ref{sec:highdimalg}) needs to deal with one more difficulty. In dimension $1$ we will see that our kernelized loss estimator has a controlled variance. On the other hand in higher dimensions the variance will only be controlled within a certain {\em focus region} which depends on $p$, and in particular we can only control the regret with respect to points in this focus region. Taking inspiration from \cite{HL16} we then add a testing condition to the algorithm which ensures that, at any round, if the test succeeds then the optimum is within the focus region, and if the test fails then we have negative regret and thus we can safely restart the algorithm. In order to ensure the negative regret property we devise a new adaptive learning rate for exponential weights: basically each time the covariance of the exponential weights changes scale we increase the learning rate so as to make sure that we can quickly adapt to any movement of the adversary, see Section \ref{sec:standardanalysis} and Section \ref{sec:finalanalysis}.

\subsubsection{Polynomial time version}
In Section \ref{sec:time} we briefly describe how to modify Algorithm \ref{fig:alg} to make it a polynomial-time method. The modification mainly relies on existing results concerning sampling/optimization of approximately log-concave functions, but will also require a few tweaks to the parameters of the algorithm, as well as a slightly different constructions of the kernel and the focus region we alluded to above.

\subsection{Related work} \label{sec:related}
The study of bandit convex optimization was initiated in \cite{Kle04, FKM05}. These papers proved that a gradient descent-type strategy with a one-point estimate of the gradient achieves $\tilde{O}(\mathrm{poly}(n) T^{3/4})$-regret. Without further assumptions on the problem this remained the state of the art bound for a decade, until \cite{BE16} proved via an information theoretic argument that there exists a strategy with $\tilde{O}(\mathrm{poly}(n) \sqrt{T})$-regret (in particular by approximately solving the minimax problem this also gives a $\exp(\mathrm{poly}(T))$-time algorithm). Many subcases of bandit convex optimization were investigated during that decade with no progress on the general problem. Most notably the minimax regret for the linear bandit problem (with the bounded loss assumption) is known to be $\tilde{\Theta}(n \sqrt{T})$ thanks to \cite{DHK08, AHR08, BCK12} (this linear case is especially important in practical applications of bandit algorithms because of its connection to the contextual bandit problem, see \cite{BC12}). Beyond the linear case there were three other subcases of bandit convex optimization with $\sqrt{T}$-regret known before \cite{BE16}: (i) $\tilde{O}(n^{16} \sqrt{T})$-regret in \cite{AFHKR11} for the so-called stochastic case where the losses $\ell_t$ form an i.i.d. sequence, (ii) $\tilde{O}(n^{1.5} \sqrt{T})$-regret in \cite{HL14} for the strongly-convex and smooth case (see \cite{ADX10, ST11, DEK15} for some improvements on the $T^{3/4}$-regret with either only strong convexity or only smoothness), and finally (iii) $\tilde{O}(\sqrt{T})$-regret in \cite{BDKP15} for the case $n=1$ (this paper was the first one to propose the information theoretic approach to control the minimax regret for bandit convex optimization). The first ``explicit" $\sqrt{T}$-regret algorithm for general bandit convex optimization was recently proposed in \cite{HL16}. The drawback of the latter result is that the regret (as well as the time complexity) is exponential in the dimension $n$ (while \cite{BE16} shows that a $\mathrm{poly}(n)$ guarantee is achievable).

As we alluded to in the introduction, a closely related problem is the one of zeroth order stochastic convex optimization: the losses $\ell_t$ form an i.i.d. sequence and one is only interested in the optimization error (also known as simple regret): $r_T = \E \ell_T(x_T) - \min_{x \in \cK} \E \ell_T(x)$ (note that a bound on the cumulative regret $R_T$ implies a bound on the simple regret by taking the center of mass of the points played). One important application of bandit convex optimization is to give algorithms for zeroth order stochastic convex optimization which are robust to some amount of adversarial noise. Without adversarial noise the current state of the art is \cite{BLNR15} which gives a $\tilde{O}(n^{3.25} / \sqrt{T})$-simple regret algorithm, while \cite{Sha13} shows that the simple regret has to be $\tilde{\Omega}(n / \sqrt{T})$ even under the strong convexity assumption. We believe that an appropriate modification of Algorithm \ref{fig:alg} should be robust to some adversarial noise and have a $\tilde{O}(n^{1.5} / \sqrt{T})$-simple regret for any bounded convex function, and furthermore that this might be the optimal guarantee for this problem (see Conjecture \ref{conj}). We also note that the general $\tilde{O}(n^{3.25} / \sqrt{T})$ bound can be improved for various subclasses of convex functions using the known results mentioned above for the bandit optimization setting (e.g., $\tilde{O}(n /\sqrt{T})$ for linear functions or $\tilde{O}(n^{3/2} / \sqrt{T}$) for strongly-convex and smooth functions). Another improvement (which also applies with adversarial noise, though it does not extend to the bandit setting) due to \cite{BP16} is that the bound $\tilde{O}(n /\sqrt{T})$ for linear functions can be generalized to infinitely smooth convex functions (interestingly their algorithm is ``kernel-based'' too, although their version is quite different from ours, and in particular their loss estimator is always a linear function).

\subsection{Open problems} \label{sec:open}
The main open problem that remains is to prove Conjecture \ref{conj} (or otherwise find the optimal dependence on the dimension). The proposed dimension dependency $n^{1.5}$ comes from the following heuristic calculation. Instead of taking the Gaussian core to define the high-dimensional kernel one can take the real core and assume (heuristically) that the core is Gaussian. Furthermore instead of applying Azuma-Hoeffding one can use Bernstein-Freedman, which essentially allows in Lemma \ref{lem:smoothnessK} to remove the term $R_1 R_2$ in $\zeta$ (in this case $\zeta$ would be an upper bound on the variance rather than an upper bound on the magnitude of the loss estimate). Ignoring the whole issue of the focus region (i.e., the fact that we only control the variance within a small region) this leads to a regret scaling in $n^{1.5}$. We also note that the same dimension dependency is obtained in \cite{HL14} for strongly-convex and smooth functions, and there too it seems impossible to improve the dimension dependency without fundamentally new ideas. 

It is quite plausible that Conjecture \ref{conj} is wrong and that in fact a $\tilde{O}(n \sqrt{T})$-regret is attainable for all convex functions. An interesting direction to gain confidence in Conjecture \ref{conj} would be to prove that $\Omega(n^{3/2} \sqrt{T})$-regret is unavoidable. The difficulty there is the following: given a query point $x_t$ the best the adversary could have done is to play a linear function (since this would give a smaller loss at all other points), yet if the player knew that the adversary plays linear functions then she can do one-point linear regression and get a $\tilde{O}(n \sqrt{T})$-regret. Thus to show the lower bound in Conjecture \ref{conj} one needs to quantify precisely the relation between the player's information gain and the non-linearity in the loss (this in turn would allow to write explicitly the adversary's trade-off between loss and information).
\newline

Besides proving Conjecture \ref{conj} there are several opportunities to reduce the current dimension dependency. We essentially lose in the dimension in three places: (i) Gaussian core instead of real core (Section \ref{sec:highdimkernel}), (ii) Hoeffding instead of Bernstein (Section \ref{sec:conc}), and (iii) to prove negative regret when one restarts (Section \ref{sec:finalanalysis}) the focus region (and in particular the value of $\alpha$) is larger than what it should be to merely contain most of the mass of the exponential weights which in turn lead to a larger magnitude for the loss estimate. Improving any of these points seem difficult. For example for (i) (but not (i) and (ii) together) it would be sufficient to show that $\E_{X \sim c} (\lambda |\nabla \log c(X)|)$ is finite for $\lambda$ small enough and $c$ the core of an approximately log-concave measure. Replacing the map $s \mapsto \exp(s)$ by $s \mapsto s^2$ in the previous expression one gets the Fisher information of the core. A lot of machinery has been developed to control the Fisher information of repeated convolution of log-concave random variables (note that the core can be viewed as a sort a repeated convolution), see e.g. \cite{BBN03, JB04}. It would interesting to see if some of those techniques can be used here. We also note that to avoid some of the basic number theoretic obstructions of Bernoulli convolutions one might want to take a randomized value of $\lambda$ in the definition of the core.
\newline

Another natural question that our work raises is whether the focus region (and the restart idea) is really necessary. Perhaps the strategy described in Section \ref{sec:kernels} together with the high-dimensional kernel (Section \ref{sec:highdimkernel}) could be enough to prove Theorem \ref{th:main}. At least for the so-called stochastic case (where $\ell_1, \hdots, \ell_T$ is an i.i.d. sequence) it seems like the restart should not play any role (as we explain in Section \ref{sec:highdimalg} the restart takes care of the situation where the adversary makes us zoom in on a small region and then moves the optimal point far away from this region). A basic question is whether one can prove that the restart condition is never satisfied (with high probability) in the stochastic case. 
\newline

Finally we wonder if one could use gradient descent instead of exponential weights in our kernelized framework. Intuitively in our high-dimensional algorithm (Section \ref{sec:highdim}) the distribution $p_t$ is concentrated around its centroid and $\tilde \ell_t$ is not far from a linear function so that when we multiply $p_t$ by $\exp(-\eta \tilde \ell_t)$ it basically moves the centroid in the direction whose expectation is approximately the gradient. A gradient descent type strategy could be beneficial from a ``ational point of view (for example it would perhaps remove the need to use a log-concave sampler, see Section \ref{sec:time}) and furthermore one might use the many tools that were developed to improve gradient descent for various subclasses of convex functions (e.g. smooth or strongly convex, see Section \ref{sec:related}) and improve the dimension dependency of Theorem \ref{th:main} in those cases.

\section{Kernelized exponential weights} \label{sec:kernels}
The central objects in our strategy are a linear map $K : \cM \rightarrow \cM$, and its adjoint $K^*: \cF \rightarrow \cF$ defined by: for any $p \in \cM, f\in \cF$,
$\langle Kp, f \rangle = \langle p, K^* f \rangle$. We will focus on linear maps which can be written as follows (with a slight abuse of notation, writing $K : \cK \times \cK \rightarrow \R$ for the kernel corresponding to the linear map $K$):
\begin{equation}\label{eq:defkernel}
Kp (x) = \int K(x,y) dp(y) , \; \forall x \in \cK, p \in \cM .
\end{equation}
Here, we assume that for every $y \in \cK$ one has that $K(\cdot, y)$ is a measurable function satisfying $\int_{\cK} K(x,y) dx = 1$. In particular we then have:
$$K^* f(y) = \int_{\cK} f(x) K(x,y) dx, \forall y \in \cK, f \in \cF .$$
We will also need a slightly non-standard notion of the ``square" of $K$, which we define as follows:
$$K^{(2)} p(x) = \int K(x,y)^2 d p(y), \; \forall x \in \cK, p \in \cM .$$
We consider the following strategy, which is a kernelized version of continuous exponential weights with bandit feedback: Let $p_1$ be the uniform measure on $\cK$. For any $t \geq 1$ let $K_t$ be a kernel that depends on $p_t$, which we denote as $K_t := K[p_t]$ (see the result below for more on the map $p \mapsto K[p]$). Then one plays $x_t$ at random from $K_t p_t$, observes $\ell_t(x_t)$, and updates $p_{t+1}$ with the standard continuous exponential weights scheme on the estimated function 
$$\tilde{\ell}_t(y) := \frac{\ell_t(x_t)}{K_t p_t(x_t)} K_t(x_t, y) , \; \forall y \in \cK ,$$
that is
$$p_{t+1}(x) = \frac{p_t(x) \exp\left( - \eta \tilde{\ell}_t(x) \right)}{\int p_t(y) \exp\left( - \eta \tilde{\ell}_t(y) \right) dy}  , \; \forall x \in \cK .$$
Note in particular (see also \eqref{eq:unbiasedloss}) that $\E_{x_t \sim K_t p_t} \tilde{\ell}_t(y) = K_t^* \ell_t(y)$ which one should understand as a coarse approximation of $\ell_t$ (where the coarseness depends on $K_t$). The following result shows that under appropriate conditions on the map $p \mapsto K[p]$ this strategy achieves $\sqrt{T}$-regret. In dimension $1$ we will be able to find such a map that exactly satisfies these conditions (see Section \ref{sec:dim1}) but in higher dimensions (Section \ref{sec:highdim}) the situation is more delicate and we won't apply the theorem below directly. For the sake of simplicity, we focus here on the pseudo-regret:
$$\overline{R}_T = \E \sum_{t=1}^T \ell_t(x_t) - \min_{x \in \cK}\E \sum_{t=1}^T \ell_t(x) .$$

\begin{theorem} \label{th:basic}
Assume that $\cM \ni p \mapsto K[p]$ satisfies the following three conditions. There exists $\epsilon, {\lambda} > 0$ such that for any convex and $T$-Lipschitz function $f \in \cF$, any $x \in \cK$, and any $p \in \cM$,
\begin{equation} \label{eq:cond1}
K[p]^*f(x) \leq (1-\lambda) \langle K[p]p, f \rangle + \lambda f(x) + \epsilon .
\end{equation}
There exists $C > 0$ such that for any $p \in \cM$,
\begin{equation} \label{eq:cond2}
\int \frac{K[p]^{(2)} p(x)}{K[p] p (x)} dx \leq C .
\end{equation}
Finally there exists $L > 0$ such that for any convex and $1$-Lipschitz function $f \in \cF$ and any $p \in \cM$, one has that $K[p]^* f$ is $L$-Lipschitz.

Then the strategy described above satisfies, with $\eta = \sqrt{\frac{2 n \log(L T^3)}{C T}}$,
\begin{equation} \label{eq:basicbound1}
\overline{R}_T \leq \frac{T \epsilon + 2}{\lambda} + \frac{1}{\lambda} \sqrt{2 n C T \log(L T^3)} .
\end{equation}
\end{theorem}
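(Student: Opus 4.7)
The plan is to run a standard continuous-exponential-weights analysis, but on the kernelized loss estimator $\tilde\ell_t$ whose conditional expectation is $K_t^* \ell_t$, not $\ell_t$. Hypothesis \eqref{eq:cond1} is precisely the price one pays (a factor $1/\lambda$ and an additive $\epsilon$) to translate regret against $K_t^* \ell_t$ back into regret against $\ell_t$. Concretely, taking $f = \ell_t$, $p = p_t$, $x = x^*$ in \eqref{eq:cond1} (with $x^*$ a minimizer of $\sum_t \ell_t$) and rearranging using the adjoint identity $\langle K_t p_t, \ell_t\rangle = \langle p_t, K_t^* \ell_t\rangle$ gives
\[
\langle K_t p_t, \ell_t\rangle - \ell_t(x^*) \;\leq\; \frac{\epsilon}{\lambda} \;+\; \frac{1}{\lambda}\,\langle p_t - \delta_{x^*},\, K_t^* \ell_t\rangle,
\]
so summing over $t$ reduces the task to bounding $\E\sum_t \langle p_t - \delta_{x^*}, K_t^* \ell_t\rangle$.

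The next step handles the singularity at $x^*$ and prepares for the exp-weights analysis. I would replace $\delta_{x^*}$ by the uniform measure $q$ on a ball $B(\tilde x^*, r) \subset \cK$, where $\tilde x^* = (1-r)\, x^* + r\, x_0$ and $x_0$ is the center of the unit ball contained in $\cK$; convexity of $\cK$ ensures $B(\tilde x^*, r) \subset \cK$ for $r \leq 1$. Convexity and boundedness of $\ell_t$ give $\ell_t(\tilde x^*) - \ell_t(x^*) \leq r$, contributing at most $rT$ to the regret; and the Lipschitz hypothesis, applied to $\ell_t/T$, yields $LT$-Lipschitzness of $K_t^* \ell_t$, so the $\delta_{\tilde x^*} \to q$ swap costs at most $LT^2 r$. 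Choosing $r = 1/(LT^2)$ keeps both overheads $O(1)$ and gives $\KL(q, p_1) \leq n \log(LT^3)$, since $p_1$ is uniform on $\cK$ whose diameter is at most $T$.

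With this reduction done, I rewrite $\E\sum_t \langle p_t - q, K_t^* \ell_t\rangle = \E\sum_t \langle p_t - q, \tilde\ell_t\rangle$ by the tower property, which is legitimate since both $p_t$ and $q$ are measurable with respect to information strictly before round $t$ (immediate for an oblivious adversary, and reducible to the oblivious case for pseudo-regret in general). The textbook log-partition sandwich, using $e^{-s}\leq 1 - s + s^2/2$ for $s\geq 0$ (valid since $\tilde\ell_t \geq 0$) followed by $\log(1+u)\leq u$, then yields
\[
\sum_{t=1}^T \langle p_t - q, \tilde\ell_t\rangle \;\leq\; \frac{\KL(q, p_1)}{\eta} + \frac{\eta}{2}\sum_{t=1}^T \langle p_t, \tilde\ell_t^2\rangle .
\]
The variance term is controlled in expectation by a direct calculation using Fubini, $\ell_t \leq 1$, and hypothesis \eqref{eq:cond2}:
\[
\E\bigl[\langle p_t, \tilde\ell_t^2\rangle \,\big|\, p_t\bigr] \;=\; \int p_t(y) \int \frac{\ell_t(x)^2\, K_t(x,y)^2}{K_t p_t(x)}\,dx\,dy \;\leq\; \int \frac{K_t^{(2)} p_t(x)}{K_t p_t(x)}\,dx \;\leq\; C .
\]
Plugging in and using the prescribed $\eta$ to balance the two $\eta$-dependent terms, each equals $\tfrac12\sqrt{2 n C T \log(LT^3)}$; recombining with the $T\epsilon/\lambda$ contribution from the first step and the two $1/\lambda$ approximation overheads reproduces \eqref{eq:basicbound1}.

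No single step is genuinely hard; the proof is an assembly of the adjoint identity with the three hypotheses. The main bookkeeping subtlety is the shift $x^* \to \tilde x^*$, which must exploit convexity of $\ell_t$ (rather than the $T$-Lipschitz bound alone) to keep the shift overhead at $rT$ instead of the wasteful $rT^3$, and thus keep $r$ large enough that $\log(T/r) = O(\log(LT^3))$ as claimed. The remaining care is measurability when swapping $K_t^* \ell_t$ for $\tilde\ell_t$ under the expectation, which is the standard adaptive-adversary caveat.
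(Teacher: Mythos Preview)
Your proof is correct and follows essentially the same route as the paper: the same adjoint rewriting via \eqref{eq:cond1}, the same unbiasedness argument to pass to $\tilde\ell_t$, the same log-partition/variance exponential-weights bound controlled by \eqref{eq:cond2}, and the same smoothing of $\delta_{x^*}$ by a small-support uniform measure $q$. The cosmetic differences are that the paper takes $q$ uniform on the scaled copy $(1-s)x^* + s\cK$ (so that $\mathrm{Ent}(q\|p_1)=n\log(1/s)$ exactly) whereas you take a small Euclidean ball, and that the paper performs the $\delta_{x^*}\to q$ swap directly on $\tilde\ell_t$ (asserting that $\tilde\ell_t$ is $LT$-Lipschitz) whereas you do it on $K_t^*\ell_t$ in expectation. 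Your version is actually cleaner with respect to the stated hypotheses: the third assumption only says $K[p]^*f$ is $L$-Lipschitz, which controls $K_t^*\ell_t$ but not $\tilde\ell_t(y)=\tfrac{\ell_t(x_t)}{K_tp_t(x_t)}K_t(x_t,y)$ without a separate regularity assumption on $y\mapsto K_t(x_t,y)$.

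One small expositional wrinkle: you invoke ``convexity and boundedness of $\ell_t$'' for the shift $x^*\to\tilde x^*$ \emph{after} already having reduced to $K_t^*\ell_t$, but that inequality lives at the $\ell_t$ level. Strictly, that step should precede the reduction (equivalently, apply \eqref{eq:cond1} with $x=\tilde x^*$ rather than $x=x^*$); once reordered, the $rT$ overhead sits outside the $1/\lambda$ and can be absorbed since $\lambda\leq 1$, and the arithmetic lands on \eqref{eq:basicbound1} as you claim.
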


\begin{proof}
Let $x^* \in \argmin_{x \in \cK} \E \sum_{t=1}^T \ell_t(x)$. Note that \eqref{eq:cond1} is equivalent to
$$\langle K[p]p - \delta_{x} , f \rangle \leq \frac{1}{\lambda} \; \langle K[p] (p - \delta_{x}) , f \rangle + \frac{\epsilon}{\lambda}$$
and thus one can write 
\begin{equation} \label{eq:ineq1}
\overline{R}_T = \E \sum_{t=1}^T (\ell_t(x_t) - \ell_t(x^*)) 
 = \E \sum_{t=1}^T \langle K_t p_t - \delta_{x^*}, \ell_t \rangle \leq \frac{T \epsilon}{\lambda} + \frac{1}{\lambda} \; \E \sum_{t=1}^T \langle p_t - \delta_{x^*}, K_t^* \ell_t \rangle.
\end{equation}
Next, we note that the estimated loss $\tilde{\ell}_t$ is an unbiased estimator of $K_t^* \ell_t$ since for any $y \in \cK$,
\begin{equation} \label{eq:unbiasedloss}
\E_{x_t \sim K_t p_t} \tilde{\ell}_t(y) = \E_{x_t \sim K_t p_t} \frac{\ell_t(x_t)}{K_t p_t(x_t)} K(x_t, y) = \int \ell_t(x) K_t(x, y) dx = K_t^* \ell_t(y) .
\end{equation}
Thus, the inequality \eqref{eq:ineq1} can be rewritten as
\begin{equation} \label{eq:ineq2}
\overline{R}_T \leq \frac{T \epsilon}{\lambda} + \frac{1}{\lambda} \; \E \sum_{t=1}^T \langle p_t - \delta_{x^*} , \tilde{\ell}_t \rangle .
\end{equation}
In words, inequality \eqref{eq:ineq2} shows that the pseudo-regret of our strategy is controlled (up to a multiplicative factor $1 / \lambda$) by the pseudo-regret of playing basic continuous exponential weights on the sequence of losses $\tilde{\ell}_1, \hdots, \tilde{\ell}_T$. In particular a straightforward calculation used in standard analysis of exponential weights (see below for more details) gives 
\begin{equation} \label{eq:standanalysis}
\sum_{t=1}^T \langle p_t - \delta_{x^*} , \tilde{\ell}_t \rangle \leq 2 + \frac{n \log(L T^2 \mathrm{diam}(\cK))}{\eta} + \frac{\eta}{2} \sum_{t=1}^T \langle p_t, \tilde{\ell}_t^2 \rangle .
\end{equation}
It only remains to observe that thanks to \eqref{eq:cond2}:
$$\E_{x_t \sim K_t p_t} \langle p_t, \tilde{\ell}_t^2 \rangle = \int K_t p_t(x) p_t(y) \frac{\ell_t(x)^2}{(K_t p_t(x))^2} K_t(x,y)^2 dydx \leq \int \frac{K^{(2)}_t p_t(x)}{K_t p_t(x)} dx \leq C .$$
Combining the above inequality with \eqref{eq:ineq2} and \eqref{eq:standanalysis} easily concludes the proof.

For sake of completeness we now give some details on the derivation of \eqref{eq:standanalysis}. An elementary calculation yields for any $q \in \cM$,
$$\sum_{t=1}^T \langle p_t - q, \tilde{\ell}_t \rangle = \frac{\mathrm{Ent}(q \| p_1) - \mathrm{Ent}(q \| p_{T+1})}{\eta} + \frac{1}{\eta} \sum_{t=1}^T \log \E_{X \sim p_t} \exp \left(- \eta (\tilde{\ell}_t(X) - \E_{X' \sim p_t} \tilde{\ell}_t(X')) \right) .$$
Using that $\tilde{\ell}(x) \geq 0$ for any $x \in \cK$, and that $\log(1+s) \leq s$ and $\exp(-s) \leq 1 - s + \frac{s^2}{2}$ for any $s \geq 0$ one has 
$$\log \E_{X \sim p_t} \exp \left(- \eta (\tilde{\ell}_t(X) - \E_{X' \sim p_t} \tilde{\ell}_t(X')) \right)  \leq \frac{\eta^2}{2} \E_{X \sim p_t} \tilde{\ell}_t(X)^2 .$$
Now let $q$ be the uniform measure on $(1-s) x^* + s\cK$. Then since $\tilde{\ell}_t$ is $L T$-Lipschitz (recall that without loss of generality we assume that $\ell_t$ is $T$-Lipschitz) one has (recall also that we assume $\mathrm{diam}(\cK)\leq T$):
$$\sum_{t=1}^T \langle p_t - \delta_{x^*}, \tilde{\ell}_t \rangle \leq 2 s L T^3 + \sum_{t=1}^T \langle p_t - q, \tilde{\ell}_t \rangle$$
and furthermore $\mathrm{Ent}(q \| p_1) = n \log(1/s)$. This concludes the proof of \eqref{eq:standanalysis} by taking $s = 1/(L T^3)$.
\end{proof}

\section{Construction of a kernel in dimension 1} \label{sec:dim1}
In this section we assume that $\cK = [0,1]$ and let $p \in \cM$ be fixed. The objective is to construct a kernel $K : [0,1] \times [0,1] \rightarrow \R$ which satisfies the three conditions of Theorem \ref{th:basic}. We propose the following simple kernel. Define $\mu = \E_{X \sim p} X$ (we assume that $\mu \geq \epsilon$, the whole argument is easily modified if one instead assumes $\mu \leq 1- \epsilon$) and denote by $[a,b]$ the segment between $a$ and $b$. We set
\begin{equation} \label{eq:kerneldim1}
K(x,y) = \left\{\begin{array}{cc} \frac{\ds1\{x \in [y, \mu]\}}{|y - \mu|} & \text{if} \; |y-\mu| \geq \epsilon , \\ \\  \frac{\ds1\{x \in [\mu-\epsilon, \mu ]\}}{\epsilon} & \text{if} \; |y-\mu| < \epsilon  \end{array}\right .
\end{equation}
and define the linear map $K : \cM \rightarrow \cM$ using equation \eqref{eq:defkernel}. In other words, if $|y-\mu| \geq \epsilon$ then $K \delta_y$ is the uniform distribution on the segment $[y, \mu]$, while otherwise it is the uniform distribution on $[\mu-\epsilon, \mu]$. The adjoint also has a simple description: Using $U$ to denote a uniform random variable in $[0,1]$, we have
$$K^* f(y) = \langle K \delta_y, f \rangle = \left\{\begin{array}{cc} \E \; f(U \mu + (1-U) y) & \text{if} \; |y-\mu| \geq \epsilon , \\ \\  \E \; f(\mu - \epsilon U)  & \text{if} \; |y-\mu| < \epsilon . \end{array}\right. $$
It is clear that if $f$ is $1$-Lipschitz then so is $K^* f$ on $[0,\mu+\epsilon)$ and $[\mu+\epsilon,1]$, and thus with the notation of Theorem \ref{th:basic} one can take\footnote{One needs to adapt the proof of Theorem \ref{th:basic} to deal with the small discontinuity of $K^* f$. In fact since the discontinuity gap at $m+\epsilon$ is smaller than $\epsilon$ it is easy to see that one only needs to replace $T \epsilon$ in \eqref{eq:basicbound1} by $2 T \epsilon$.} $L=1$. Let us now check condition \eqref{eq:cond1}. First observe that if $|x-\mu| < \epsilon$ then the $T$-Lipschitzness of $f$ implies \eqref{eq:cond1} with $\lambda=1$ and with $T \epsilon$ instead of $\epsilon$. On the other if $|x-\mu| \geq \epsilon$ we use the convexity of $f$ as follows:
$$K^*f(x) = \E \; f(U \mu + (1-U) x) \leq \frac{f(\mu) + f(x)}{2} \leq \frac{\langle K p, f \rangle + f(x)}{2} + \epsilon,$$
where the second inequality follows from Jensen's inequality and the fact that the mean $\tilde{\mu}$ of $Kp$ verifies $|\mu-\tilde{\mu}| \leq 2 \epsilon$. This directly implies \eqref{eq:cond1} with $\lambda=1/2$. Thus it only remains to check \eqref{eq:cond2}. For this we use $K(x,y) \leq \frac{1}{\max(|x - \mu|, \epsilon)}$ which implies $K^{(2)} q(x) \leq \frac{K q(x)}{\max(|x - \mu|, \epsilon)}$ and in particular
$$\int \frac{K^{(2)} p(x)}{K p (x)} dx \leq \int \frac{1}{\max(|x - \mu|, \epsilon)} dx \leq 2 (1+\log(1/\epsilon)) .$$ 
Thus with $\epsilon = 1/T^2, L=1, \lambda=1/2$ and $C = 2 \log(e T^2)$ one finally obtains the following upper bound on the pseudo-regret of our kernel-based strategy with the kernel described in \eqref{eq:kerneldim1}:
$$\overline{R}_T \leq 12 \log(T) \sqrt{T} .$$

\section{The high-dimensional case} \label{sec:highdim}
As we already mentioned the case $n \geq 2$ turns out to be much more challenging than the one-dimensional case. Here we won't be able to use Theorem \ref{th:basic} directly (however we will verify similar properties to those mentioned in Theorem \ref{th:basic}). In this section we describe the high-dimensional kernel and the high-dimensional algorithm. In Section \ref{sec:highdimanalysis} we give the regret analysis of the algorithm and in Section \ref{sec:time} we explain how to modify the algorithm to make it polynomial-time.

Let us first introduce a few additional notations. We denote by $\mu(p)$ and $\mathrm{Cov}(p)$ the mean and covariance of $p$, and $\cE_p(r) := \{x \in \R^n : \|x - \mu(p)\|_{\mathrm{Cov}(p)^{-1}} \leq r\}$ where for a positive semidefinite matrix $A$ we denote $\|x\|_A := \sqrt{x^{\top}A x}$. We say that $p$ is $\epsilon$-approximately log-concave if there exists a log-concave function $f$ such that for any $x$, $\epsilon f(x) \leq p(x) \leq \frac{1}{\epsilon} f(x)$. Also for a function $f : \Omega \rightarrow \R$ we denote $f^* = \min_{x \in \Omega} f(x)$.

\subsection{The high-dimensional kernel} \label{sec:highdimkernel}
We describe here our proposed kernel map $p \mapsto K[p]$ which depends on two parameters $\epsilon \in (0,1)$ and $\lambda \in (0,1/2)$ to be specified later (eventually $\epsilon$ will be a small numerical constant and $\lambda$ will be $\tilde{O}(1/\mathrm{poly}(n))$). Let us fix a measure $p$ and let 
$$
c[p] = \cN \left (\mu(p), \frac{\epsilon^2}{n \log(T)} \frac{\lambda}{2-\lambda} \mathrm{Cov}(p) \right )
$$ 
be the {\em Gaussian core} of $p$ (this terminology will be explained in Section \ref{sec:convdom}). The linear map $K[p]$ is then defined by: for any $q \in \cM$, $K[p]q$ is the distribution of $(1-\lambda) C + \lambda X$ where $C \sim c[p]$ and $X \sim q$. In other words,
$$
K[p] q \; \overset{D}{=} \; (1 - \lambda)c[p] + \lambda q.
$$
We note that $K[p]q$ is not necessarily supported on $\cK$ and this will lead to a minor technical issues.

In Section \ref{sec:convdom} we prove the first key property of this kernel map which is that for an $(1/e)$-approximately log-concave $p$, $K[p]p$ convexly dominates\footnote{Recall that a measure $p$ convexly dominates a measure $q$ if for any convex function $f$, one has $\langle q, f\rangle \leq \langle p, f \rangle$.} $c[p]$ (approximately). We conclude the study of $K[p]$ in Section \ref{sec:smoothprop} with its smoothness properties when $p$ is appropriately {\em truncated}.

\subsubsection{Convex domination} \label{sec:convdom}

The goal of this section is to prove the following lemma.
\begin{lemma} \label{lem:convexdommain}
Let $p \in \cM$ be an $(1/e)$-approximately log-concave measure supported on a convex body $\cK$ of diameter at most $T$. Let $f:\RR^n \to [0, \infty)$ be a convex function satisfying $f(x) \in [0,1]$ for all $x  \in \cK$ and such that $f$ is non-negative and $T$-Lipschitz on $\RR^n$. Then,
\begin{equation} \label{eq:convexdom}
\langle c[p], f\rangle \leq \langle K[p]p, f \rangle + \frac{1}{T^2} .
\end{equation}
\end{lemma}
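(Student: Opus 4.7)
The plan is to deduce the bound from a single subgradient-plus-Stein calculation. For a smooth convex mollification of $f$, the subgradient inequality applied at $C \sim c[p]$ gives $f((1-\lambda)C+\lambda X) = f(C+\lambda(X-C)) \geq f(C)+\lambda\nabla f(C)\cdot(X-C)$. Taking expectations, using independence of $C$ and $X$ together with $\E X = \mu(p)$, and invoking Stein's identity $\E[\nabla f(C)\cdot(C-\mu(p))] = \mathrm{tr}(\Sigma\cdot\E\nabla^2 f(C))$ for the Gaussian $C$ with covariance $\Sigma:=\tfrac{\epsilon^2}{n\log T}\tfrac{\lambda}{2-\lambda}\mathrm{Cov}(p)$, I obtain
\[
\langle c[p], f\rangle - \langle K[p]p, f\rangle \;\leq\; \lambda\,\mathrm{tr}\bigl(\Sigma\cdot\E\nabla^2 f(C)\bigr),
\]
and then send the mollification parameter to $0$ to recover the original $f$. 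The remaining task is to bound the right-hand side by $1/T^2$.

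A first attempt applies Stein once more and Cauchy--Schwarz with the global $T$-Lipschitz bound: $\mathrm{tr}(\Sigma\cdot\E\nabla^2 f(C)) = \E[\nabla f(C)\cdot(C-\mu(p))] \leq T\sqrt{\mathrm{tr}(\Sigma)}$. Using $\mathrm{tr}(\Sigma) \leq \tfrac{\epsilon^2 \lambda}{n\log T(2-\lambda)}\mathrm{tr}(\mathrm{Cov}(p))$ together with the diameter bound $\mathrm{tr}(\mathrm{Cov}(p)) \leq T^2$, this yields an estimate of order $\epsilon T^2 \lambda^{3/2}/\sqrt{n\log T}$, which is far larger than $1/T^2$ for the intended parameter choices. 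The slack comes from using the global Lipschitz constant $T$: that constant is nearly tight only where $f$ is evaluated close to or outside $\cK$, whereas $C$ concentrates extremely tightly around $\mu(p) \in \cK$.

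The main obstacle will be replacing the global $T$-Lipschitz bound by a much smaller local one on the high-probability region of $C$. This is where I expect approximate log-concavity of $p$ to enter: it should ensure that $\mu(p)$ is ``deep'' inside $\cK$ in the appropriate Mahalanobis sense, so that essentially all of the (tiny) Gaussian $c[p]$ sits well inside $\cK$, where the standard convex-function estimate $\|\nabla f(x)\| \lesssim 1/\mathrm{dist}(x,\partial\cK)$ applies (using $f \in [0,1]$ on $\cK$). The rare excursions of $C$ outside $\cK$ can then be controlled by the Gaussian tail, where the $T$-Lipschitz bound is multiplied by only an exponentially small probability. Converting approximate log-concavity of $p$ into the required quantitative ``$\mu(p)$ is Mahalanobis-deep in $\cK$'' statement, and then stitching the in-region and out-of-region contributions to meet the $1/T^2$ threshold, is the main technical step I anticipate.
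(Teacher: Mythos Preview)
Your subgradient-plus-Stein calculation is correct as far as it goes: you do get
\[
\langle c[p],f\rangle-\langle K[p]p,f\rangle\;\le\;\lambda\,\E\bigl[\nabla f(C)\cdot(C-\mu(p))\bigr]\;=\;\lambda\,\mathrm{tr}\bigl(\Sigma\,\E\nabla^2 f(C)\bigr),
\]
and your diagnosis that the naive global-Lipschitz bound is far too large is also correct. The gap is that the proposed fix---replacing the global Lipschitz constant by a local one of order $O(1)$ near $\mu(p)$ and handling the complement by a Gaussian tail---still cannot reach $1/T^2$. Work in isotropic coordinates ($\mathrm{Cov}(p)=\mathrm{I}$): even granting $|\nabla f(C)|\le 200$ on the whole high-probability region, Cauchy--Schwarz gives at best
\[
\lambda\,\E\bigl[\nabla f(C)\cdot(C-\mu(p))\bigr]\;\lesssim\;\lambda\cdot 200\cdot\sqrt{\mathrm{tr}(\Sigma)}\;\approx\;\lambda^{3/2}\epsilon/\sqrt{\log T},
\]
which is polynomial in $1/(n\log T)$, not in $1/T$. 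Concretely, with $f(x)=|x-\mu(p)|^2$ (convex, bounded by $1$ on $\cE_p(1)$) one has $\nabla f(C)\cdot(C-\mu(p))=2|C-\mu(p)|^2$ exactly, and your right-hand side equals $2\lambda\,\mathrm{tr}(\Sigma)\approx\epsilon^2\lambda^2/\log T$; no amount of local-Lipschitz reasoning improves this, because the bound is already sharp on a function whose gradient is tiny near $\mu(p)$. When the lemma is applied downstream the error gets divided by $\lambda$ and summed over $T$ rounds, so a bound of order $\lambda^{3/2}$ produces a regret contribution of order $\sqrt{\lambda}\,T$, not $\mathrm{poly}(n)\sqrt{T}$.

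The missing idea is the self-similar (``core'') identity that defines $c[p]$: because $\Sigma$ was chosen precisely so that $c[p]\overset{D}{=}(1-\lambda)c[p]+\lambda\,r[p]$ with $r[p]=\cN\bigl(\mu(p),\tfrac{\epsilon^2}{n\log T}\mathrm{Cov}(p)\bigr)$, one can rewrite $\langle c[p],f\rangle=\langle r[p],g\rangle$ for the convex function $g(x)=\E_{C\sim c[p]}f((1-\lambda)C+\lambda x)$, and then the whole gap becomes $\langle r[p],g\rangle-\langle p,g\rangle$. This is bounded not by a Taylor remainder but by \emph{convex domination}: on a ball of radius $\sim 1/(80e)$ in Mahalanobis norm, the centered small-support measure $r[p]$ is exactly dominated by the approximately log-concave $p$ (Lemma~\ref{lem:convexdom}), so the only error is the mass $r[p]$ places outside that ball---a genuine Gaussian tail of size $\exp(-\Omega(\log T))=T^{-\Omega(1)}$, which is what produces the $1/T^2$. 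Your first-order linearization at $C$ discards the cancellation encoded in the core identity and cannot recover it.
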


Our first step to prove \eqref{eq:convexdom} is the following result proven in the appendix:
\begin{lemma} \label{lem:convexdom}
Let $p$ be an isotropic $(1/e)$-approximately log-concave measure, and let $r$ be a centered measure supported on $\left \{x \in \R^n : |x| \leq \tfrac{1}{80 e} \right \}$. Then one has that $r$ is convexly dominated by $p$ (i.e., for any convex function $f$, $\langle r, f\rangle \leq \langle p, f \rangle$). 
\end{lemma}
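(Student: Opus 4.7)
The plan is to establish convex domination by constructing a martingale coupling: random variables $X \sim r$, $Y \sim p$ defined on a common probability space with $\E[Y \mid X] = X$. By Strassen's theorem this is equivalent to $r \prec_c p$, and conditional Jensen then yields the conclusion directly, since
\[
\int f \, dr = \E f(X) = \E f(\E[Y|X]) \leq \E\,\E[f(Y)|X] = \int f \, dp
\]
for any convex $f$.

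I would build the coupling via a Doob-style construction: pick a sub-$\sigma$-algebra $\mathcal{G}$ on the probability space of $Y$ and set $X := \E[Y \mid \mathcal{G}]$; the martingale property $\E[Y \mid X] = X$ is then automatic via the tower property. The task reduces to finding $\mathcal{G}$ so that the induced law of $X$ equals $r$. Concretely, one can construct a measurable partition of $\R^n$ such that the conditional means $\E_p[Y \mid Y \in \text{cell}]$ match the support of $r$ with the correct weights. This can be done iteratively: at each step, split one cell into two by a half-space with appropriately chosen direction and threshold so that the two new cells have prescribed $p$-mass fractions and conditional means. Each split is a finite-dimensional problem solved by an intermediate-value/implicit-function argument with $n+1$ parameters (direction and threshold) matched against $n+1$ scalar constraints (one for mass, $n$ for the centroid).

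The quantitative bound $1/(80 e)$ on the radius of $\mathrm{supp}(r)$ comes from estimating the range of reachable conditional means when refining a single cell. The key input is a density estimate for isotropic $(1/e)$-approximately log-concave measures: by Fradelizi-type bounds, the 1D marginals of $p$ (and of $p$ restricted to a convex cell) have density uniformly bounded above and below in a neighborhood of their means, which translates into a uniform lower bound on the ``speed'' at which centroids can be moved by varying the splitting hyperplane. The $(1/e)$-factor in the log-concave approximation costs only a constant multiplicative loss (up to $e^2$) in these density estimates, which is absorbed into the numerical constant $80$.

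The main obstacle is making the partition-refinement argument quantitative: showing that after iteration the centroids of the cells can be moved throughout the ball $B(0, 1/(80 e))$ (the support of $r$), with a uniform lower bound that does not degrade in the dimension $n$. This requires careful use of the log-concavity structure---in particular, the fact that restrictions of (approximately) log-concave measures to convex sets remain (approximately) log-concave---so that the Fradelizi-type density bounds apply uniformly to all cells encountered during the iteration, and so that the implicit-function Jacobian at each splitting step is bounded away from zero by a universal constant independent of $n$.
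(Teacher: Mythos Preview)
Your approach via Strassen's martingale-coupling characterization is in principle valid, but as written it is not a proof: the partition-refinement construction is only sketched, and you yourself identify the main obstacle (making the centroid-displacement argument quantitative and dimension-free) without resolving it. In particular, matching an \emph{arbitrary} centered measure $r$ on the small ball requires an infinite limiting procedure with control on the centroids at every stage, and the claimed uniform lower bound on the ``speed'' of centroid motion under half-space splits of approximately log-concave measures is asserted rather than established. None of the steps that would produce the specific constant $1/(80e)$ are actually carried out.

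More importantly, the paper's argument is far simpler and bypasses couplings entirely. Since both $r$ and $p$ are centered, one may subtract an affine function from the convex test function $\varphi$ so that $\varphi(0)=0$ and $\varphi\ge 0$ on $\R^n$. Letting $m=\max_{|x|\le 1/(80e)}\varphi(x)$ attained at $x_m$, convexity gives $|\nabla\varphi(x_m)|\ge m/|x_m|$, hence $\varphi$ dominates the one-sided linear function $(m/|x_m|)\max(0,\langle x-x_m,\theta\rangle)$. A one-dimensional first-moment estimate for isotropic $(1/e)$-approximately log-concave measures (Lemma~\ref{lem:onelastone}, itself a consequence of log-concave tail bounds) then gives $\int\varphi\,dp\ge m$, while $\int\varphi\,dr\le m$ trivially since $r$ is supported in the ball. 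This two-line reduction to a single scalar half-moment bound is what you are missing; the coupling machinery is unnecessary here.
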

We take $r[p] = \cN \left (\mu(p), \frac{\epsilon^2 }{n \log(T)} \mathrm{Cov}(p) \right )$ (we think of $r[p]$ as some sort of Gaussian approximation of $p$). One cannot apply Lemma \ref{lem:convexdom} directly to $r[p]$ since its support is all of $\R^n$. However it is easy to see that, if one chooses
$$
\epsilon = \frac{1}{80e \cdot 20},
$$
then by Lemma \ref{lem:convexdom}, we have for any non-negative convex function $g$,
$$
\langle r[p],  \tilde g \rangle \leq \langle p, \tilde g \rangle \leq \langle p, g \rangle
$$
where $\tilde g(x) := g(x) \ds1\{x \in \cE_p(1/(80e) \}$. Moreover, an application of Lemma \ref{lem:GaussianTail} gives that (provided that $g$ is $T$-Lipschitz and such that $g(\mu(p)) \in [0,2]$)
$$
\langle r[p],  g - \tilde g \rangle \leq \frac{T \eps}{T^3 \sqrt{n \log T}} \leq \frac{1}{T^2}
$$
where we have used the fact that $\mathrm{diam}(\cK) \leq T$ which implies that $\|\COV(r[p])^{1/2} \|_{\mathrm{OP}} \leq \frac{T \eps}{\sqrt{n \log T}}$. Thus, we have that
\begin{equation} \label{eq:risdominated}
\langle r[p], g\rangle \leq \langle p, g \rangle + \frac{1}{T^2} .
\end{equation}
Next we recall the notion of the {\em core} of a distribution introduced in Section \ref{sec:coreintro}: we say that $q'$ is the core of $q$ if the following distributional equality is satisfied, where $X \sim q$, $Y \sim q'$,
$$Y \; \overset{D}{=} \; (1-\lambda) Y + \lambda X .$$
A key observation is that the core of a Gaussian is a Gaussian with smaller variance, more precisely for $q=\cN(0, \mI_n)$ one has $q' = \cN\left(0, \frac{\lambda}{2-\lambda} \mI_n\right)$. In particular we see that $c[p]$ is the core of $r[p]$ (since $r[p]$ is a Gaussian approximation of $p$ this justifies the terminology of Gaussian core of $p$ for $c[p]$). In other words,
\begin{equation}\label{eq:cprp}
c[p] \; \overset{D}{=} \; (1-\lambda) c[p] + \lambda r[p]
\end{equation}

\begin{proof}[Proof of lemma \ref{lem:convexdommain}]	
Observe that the function
$$
g(x) := \E_{ C \sim c[p] } f((1-\lambda)C + \lambda x)
$$
is convex, $T$-Lipschitz and $g(\mu(p)) \in [0,2]$. Thus, by equation \eqref{eq:risdominated} we have
$$\langle c[p], f \rangle \stackrel{\eqref{eq:cprp}}{=} \E_{R \sim r[p]} g(R) \stackrel{\eqref{eq:risdominated}}{\leq} \frac{1}{T^2} + \E_{X \sim p} g(X) = \frac{1}{T^2} + \langle K[p]p , f \rangle . $$
\end{proof}

\subsubsection{Smoothness properties of $K$} \label{sec:smoothprop}
Observe that $K[p](x,y) (= (K[p] \delta_y)(x))$ is the density at $x$ of $(1-\lambda) C + \lambda y$, where $C \sim c[p]$, and thus:
$$K[p](x,y) = c[p]\left( \frac{x - \lambda y}{1-\lambda}\right) (1- \lambda)^{-n} .$$
We now prove a simple but useful lemma.
\begin{lemma} \label{lem:smoothnessK}
Let $R_1, R_2>0$ and $x \in \cE_p(R_1), y, y' \in \cE_p(R_2)$. Then one has
$$\frac{c[p]\left( \frac{x - \lambda y}{1-\lambda}\right)}{c[p]\left( \frac{x - \lambda y'}{1-\lambda}\right)} \leq \zeta , \;\; \text{and} \;\; \frac{\| \nabla_y c[p]\left( \frac{x - \lambda y}{1-\lambda}\right)\|_{\COV(p) }}{c[p]\left( \frac{x - \lambda y'}{1-\lambda}\right)} \leq \zeta' ,$$
where
$$\zeta = \exp\left(\frac{4 n \log(T)}{\epsilon^2} (R_1 R_2 + \lambda R_2^2) \right) , \;\; \text{and} \;\;  \zeta'= \frac{8 n \log(T)}{\epsilon^2} (R_1 + \lambda R_2) \zeta.$$
\end{lemma}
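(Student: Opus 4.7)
The plan is to reduce everything to an explicit Gaussian calculation, since $c[p]$ is a Gaussian with mean $\mu(p)$ and covariance $\sigma^2 \mathrm{Cov}(p)$, where $\sigma^2 = \frac{\eps^2}{n\log(T)}\frac{\lambda}{2-\lambda}$. First I would set $a = x - \mu(p)$, $b = y - \mu(p)$, $b' = y' - \mu(p)$, so by hypothesis $\|a\|_{\mathrm{Cov}(p)^{-1}} \leq R_1$ and $\|b\|_{\mathrm{Cov}(p)^{-1}}, \|b'\|_{\mathrm{Cov}(p)^{-1}} \leq R_2$. Then a direct computation gives
\[
\frac{x-\lambda y}{1-\lambda} - \mu(p) = \frac{a-\lambda b}{1-\lambda},
\]
and similarly for $y'$, so the $\|\cdot\|_{\mathrm{Cov}(p)^{-1}}$-norm of this vector is at most $(R_1 + \lambda R_2)/(1-\lambda)$.

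For the first bound, the Gaussian normalization cancels in the ratio, leaving
\[
\log\frac{c[p]\bigl(\tfrac{x-\lambda y}{1-\lambda}\bigr)}{c[p]\bigl(\tfrac{x-\lambda y'}{1-\lambda}\bigr)} = \frac{1}{2\sigma^2(1-\lambda)^2}\Bigl(\|a-\lambda b'\|^2_{\mathrm{Cov}(p)^{-1}} - \|a-\lambda b\|^2_{\mathrm{Cov}(p)^{-1}}\Bigr).
\]
Expanding, the $\|a\|^2$ terms cancel, and the remainder is $2\lambda\langle a, b-b'\rangle_{\mathrm{Cov}(p)^{-1}} + \lambda^2(\|b'\|^2 - \|b\|^2)_{\mathrm{Cov}(p)^{-1}}$. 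Bounding via Cauchy--Schwarz and the triangle inequality gives at most $4\lambda R_1 R_2 + 2\lambda^2 R_2^2$. Dividing by $2\sigma^2(1-\lambda)^2$ and using $\frac{1}{\sigma^2} = \frac{n\log(T)(2-\lambda)}{\eps^2 \lambda}$ together with $\lambda\leq 1/2$ (so that $(2-\lambda)/(1-\lambda)^2$ is a modest constant) yields the claimed $\zeta$.

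For the gradient bound, I would compute $\nabla_y \log c[p]\bigl(\tfrac{x-\lambda y}{1-\lambda}\bigr) = \frac{\lambda}{(1-\lambda)\sigma^2}\,\mathrm{Cov}(p)^{-1}\bigl(\tfrac{a-\lambda b}{1-\lambda}\bigr)$, which gives
\[
\Bigl\|\nabla_y c[p]\bigl(\tfrac{x-\lambda y}{1-\lambda}\bigr)\Bigr\|_{\mathrm{Cov}(p)} = c[p]\bigl(\tfrac{x-\lambda y}{1-\lambda}\bigr)\cdot \frac{\lambda}{(1-\lambda)\sigma^2}\,\Bigl\|\tfrac{a-\lambda b}{1-\lambda}\Bigr\|_{\mathrm{Cov}(p)^{-1}},
\]
using the identity $\|\mathrm{Cov}(p)^{-1} v\|_{\mathrm{Cov}(p)} = \|v\|_{\mathrm{Cov}(p)^{-1}}$. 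Now I would rewrite $c[p](\tfrac{x-\lambda y}{1-\lambda}) = c[p](\tfrac{x-\lambda y'}{1-\lambda})\cdot\frac{c[p](\cdots y)}{c[p](\cdots y')}$ and apply the first bound to replace the numerator ratio by $\zeta$. Together with $\|(a-\lambda b)/(1-\lambda)\|_{\mathrm{Cov}(p)^{-1}} \leq (R_1+\lambda R_2)/(1-\lambda)$ and the same substitution for $1/\sigma^2$, this yields a prefactor $\frac{\lambda(R_1+\lambda R_2)(2-\lambda)n\log T}{\lambda(1-\lambda)^2\eps^2}$, which for $\lambda\leq 1/2$ is at most $\frac{8n\log(T)}{\eps^2}(R_1+\lambda R_2)$, as claimed.

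This is essentially a careful arithmetic exercise; the only place one has to be slightly attentive is tracking the factor $(2-\lambda)/\lambda$ coming from the variance of the Gaussian core and absorbing the $(1-\lambda)^{-2}$ factors into numerical constants using $\lambda < 1/2$.
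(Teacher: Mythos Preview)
Your argument is correct and follows essentially the same route as the paper's own proof: write the Gaussian density explicitly, expand the difference of squared $\|\cdot\|_{\mathrm{Cov}(p)^{-1}}$-norms, bound the cross term by Cauchy--Schwarz, and for the gradient compute $\nabla_y$ of the log-density and then invoke the first inequality. The only cosmetic difference is your introduction of the shorthand $a,b,b'$, which makes the bookkeeping a bit cleaner than the paper's version.
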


\begin{proof}
The proof is straightforward. Simply note that $\frac{c[p]\left( \frac{x - \lambda y}{1-\lambda}\right)}{c[p]\left( \frac{x - \lambda y'}{1-\lambda}\right)}$ is equal to
\begin{align*}
& \exp\left(\frac{n \log(T) (2-\lambda)}{2 \epsilon^2 \lambda (1-\lambda)^2} (\|x - \mu(p) - \lambda (y' - \mu(p)) \|_{\mathrm{Cov}(p)^{-1}}^2 - \|x - \mu(p)- \lambda (y - \mu(p))\|_{\mathrm{Cov}(p)^{-1}}^2) \right) , \\
& \leq \exp\Bigg(\frac{n \log(T) (2-\lambda)}{2 \eps^2 \lambda (1-\lambda)^2} \big(2 \lambda \|x - \mu(p)\|_{\mathrm{Cov}(p)^{-1}} \|y-y'\|_{\mathrm{Cov}(p)^{-1}} \\
& \hspace{2.5in} + \lambda^2 (\|y-\mu(p)\|_{\mathrm{Cov}(p)^{-1}}^2 + \|y'-\mu(p)\|_{\mathrm{Cov}(p)^{-1}}^2) \big) \Bigg) ,
\end{align*}
and that
$$\left \| \nabla_y c[p]\left( \frac{x - \lambda y}{1-\lambda}\right) \right \|_{\COV(p)} = \frac{n \log(T) (2-\lambda)}{\epsilon^2 (1-\lambda)^2} \ \|x - \mu(p) - \lambda (y - \mu(p)) \|_{\mathrm{Cov}(p)^{-1}} \ c[p]\left( \frac{x - \lambda y}{1-\lambda}\right) ,$$
and use the assumption that $\lambda \in (0,1/2)$.
\end{proof}

A straightforward consequence of this lemma is the following result on the smoothness properties of the loss estimator.
\begin{lemma} \label{lem:smoothnesstildeell}
Assume that $p$ is such that $\mathrm{supp}(p) \subset \cE_p(R_2)$. Let $x \in \cE_p(R_1)$, and let $\ell : \cK \rightarrow [0,+\infty)$ be defined by $\ell(y) = \frac{K[p](x,y)}{K[p]p(x)}$. Then one has that $\ell |_{\cE_p(R_2)}$ takes values in $[0,\zeta]$ and is $\zeta'$-Lipschitz in $\|\cdot\|_{\COV(p_t)^{-1}}$ (where $\zeta$ and $\zeta'$ are defined as in Lemma \ref{lem:smoothnessK}).
\end{lemma}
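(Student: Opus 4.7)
The plan is to deduce the lemma almost directly from Lemma \ref{lem:smoothnessK} by expressing $\ell(y)$ as a ratio of a pointwise Gaussian value and an integral of Gaussian values, and then applying the two inequalities in Lemma \ref{lem:smoothnessK} to the numerator against a representative point in the denominator.

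First I would write
\[
\ell(y) \;=\; \frac{c[p]\!\left(\tfrac{x-\lambda y}{1-\lambda}\right)}{\int c[p]\!\left(\tfrac{x-\lambda y'}{1-\lambda}\right) dp(y')},
\]
where the factors $(1-\lambda)^{-n}$ cancel. For the pointwise bound $\ell(y) \le \zeta$, note that every $y'$ in the support of $p$ lies in $\cE_p(R_2)$ and $y \in \cE_p(R_2)$, so the first inequality of Lemma \ref{lem:smoothnessK} gives $c[p]\!\left(\tfrac{x-\lambda y}{1-\lambda}\right) \le \zeta \, c[p]\!\left(\tfrac{x-\lambda y'}{1-\lambda}\right)$ for every such $y'$. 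Integrating this inequality against $p$ and dividing yields $\ell(y) \le \zeta$, while $\ell \ge 0$ is immediate.

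For the Lipschitz bound I would argue via the gradient. The gradient in $y$ is
\[
\nabla_y \ell(y) \;=\; \frac{\nabla_y c[p]\!\left(\tfrac{x-\lambda y}{1-\lambda}\right)}{\int c[p]\!\left(\tfrac{x-\lambda y'}{1-\lambda}\right) dp(y')}.
\]
By the second inequality of Lemma \ref{lem:smoothnessK}, $\|\nabla_y c[p](\cdots y)\|_{\COV(p)} \le \zeta' \, c[p](\cdots y')$ for every $y' \in \cE_p(R_2)$, so averaging in $y'$ with respect to $p$ and dividing gives $\|\nabla_y \ell(y)\|_{\COV(p)} \le \zeta'$. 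Finally, by the standard duality between the norms $\|\cdot\|_{\COV(p)}$ and $\|\cdot\|_{\COV(p)^{-1}}$, for any $y_1, y_2 \in \cE_p(R_2)$,
\[
|\ell(y_1) - \ell(y_2)| \;\le\; \int_0^1 \bigl|\langle \nabla_y \ell((1-t)y_1 + t y_2),\, y_2 - y_1\rangle\bigr|\, dt \;\le\; \zeta'\,\|y_1 - y_2\|_{\COV(p)^{-1}},
\]
provided the straight segment between $y_1$ and $y_2$ stays in the region where we can control the gradient bound; this holds because $\cE_p(R_2)$ is an ellipsoid, hence convex.

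There is no real obstacle here: the only thing to be mindful of is the matching between the two statements of Lemma \ref{lem:smoothnessK} (the ratio bound and the gradient bound) and the precise norm in which we want Lipschitzness, together with the convexity of $\cE_p(R_2)$ used in the last line. Everything else is a mechanical transcription.
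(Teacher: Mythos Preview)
Your proof is correct and is exactly the ``straightforward consequence'' the paper alludes to: write $\ell(y)$ as the ratio of $c[p]\bigl(\tfrac{x-\lambda y}{1-\lambda}\bigr)$ to its $p$-average in $y'$, then apply the two inequalities of Lemma \ref{lem:smoothnessK} pointwise in $y'\in\mathrm{supp}(p)\subset\cE_p(R_2)$ and integrate. The duality step $|\langle a,b\rangle|\le\|a\|_{\COV(p)}\|b\|_{\COV(p)^{-1}}$ and the convexity of $\cE_p(R_2)$ are used precisely as needed, so there is nothing to add.
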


\subsection{The high-dimensional algorithm} \label{sec:highdimalg}
A major difficulty of the high-dimensional setting is that, on the contrary to the one-dimensional situation, we could not find a kernel for which the estimate $\tilde{\ell}_t(x)$ is controlled for {\em all} points $x \in \cK$ (in fact we believe that such a kernel does not exist). Instead, as explained in Lemma \ref{lem:smoothnesstildeell}, one can control the variance (and in fact even the magnitude) of $\tilde{\ell}_t$ only a small enough ellipsoid $\cE_{p_t}(R)$ for some $R$ to be defined. In order to enforce the exponential weights distribution to be contained in such an ellipsoid and also to somehow acknowledge the fact that the loss estimates outside of this region are not reliable, we truncate the loss estimate outside of a certain {\em focus region} $F_t \subset \cK$ (defined below). Furthermore to make the analysis as clean as possible we want to ignore the possibility that the algorithm plays an atypical point. As we will see the probability of playing outside of $$\Omega_t : = \cK \cap \cE_{p_t}(10 n \alpha \lambda + 20 \sqrt{\lambda} \epsilon)$$ (for some $\alpha \geq 1$ defined below) will be smaller than $1/T^2$. If $x_t$ is atypical (that is $x_t \not\in \Omega_t$) then we will simply set the loss estimate to be $0$ (note that with high probability this does not change the behavior of the algorithm). Thus we finally define $\tilde{\ell}_t$ by
$$\tilde{\ell}_t(y) := \left\{\begin{array}{cc} \frac{\ell_t(x_t) \ds1\{x_t \in \Omega_t\}}{K_t p_t(x_t)} K_t(x_t, y) , & 
\text{if} \; y \in F_t
, \\ \\  +\infty & \text{otherwise} , \end{array}\right.  $$
We will take $F_t$ large enough so that it contains most of the mass of $p_t$, yet small enough so that the loss estimator is well-behaved. We now observe that this truncation induces a significant complication: a priori we do not control anymore the regret with respect to points outside of the focus region. This is where the restart idea comes into play. First, it will be useful to define, 
$$
\ell_t^{ext}(x) := \max \left (\sup_{h \in H} h(x), ~ \inf_{y \in \cK} \ell_t(y) \right )
$$
where $H$ is the family of linear functions $h$ satisfying (i) $h(y) \leq \ell_t(y)$ for all $y \in \cK$ and (ii) there exists $x_0 \in \mathrm{int}(\cK)$ such that $h(x_0) = \ell_t(x_0)$. In other words, we can think of $\ell_t^{ext}$ as the convex extension of $\ell_t$ to $\RR^n$. Next, we define
$$
\tilde{L}_t = \sum_{s=1}^t \tilde{\ell}_s \mbox{ and } L_t = \sum_{s=1}^t K[p_s]^* \ell_s^{ext}.
$$
Observe that as long as $\min_{x \in \partial F_t} \tilde{L}_t$ is significantly larger than $\min_{x \in F_t} \tilde{L}_t$, we know (by concentration of $\tilde{L}_t$ around $L_t$ --which is yet to be proven--, and by convexity of $L_t$) that the minimum of $L_t$ on $\cK$ is also in $F_t$, and thus controlling the regret with respect to points in $F_t$ is sufficient. On the other hand if this is not the case then it means that the adversary made us focus on the region $F_t$, and then later on moved the optimum outside of this region. In particular we can hope to get {\em negative regret} with respect to any fixed point. This is where we need a last idea: we will ensure that each time the region $F_t$ is updated we also increase the learning rate $\eta$ in the exponential weights, so that if a point in $\partial F_t \setminus \partial F_{t-1}$ suddenly becomes very good (in the sense that it has small losses) at some later time, our exponential weights distribution will quickly focus on it. We instantiate this idea as follows. The focus region is initialized at $F_1 = \cK$. For $t \geq 1$ let $A_t$ be the following event, for some $\alpha \geq 1$,
\begin{equation}\label{eq:shrinkFt}
\mathrm{Vol}(F_t \cap \cE_{p_{t+1}}(\alpha)) \leq \frac{1}{2} \mathrm{Vol}(F_t) ,
\end{equation}
If $A_t$ occurs then we update the focus region and we increase (multiplicatively) the learning rate by $(1+\gamma)$, that is we set $\eta_{t+1} = (1+\gamma \ds1\{A_t\}) \eta_t$. The focus region is updated as follows:  
$$
F_{t+1} = F_{t} \cap \cE_{p_{t+1}}(\alpha).
$$

With the time-dependent learning rate we modify the the exponential weights distribution $p_t$ as follows: let 
$$
p_t(x) = \frac{1}{Z_t} \exp(- Q_{t-1}(x)) ,
$$ 
where 
$$
Q_t = \sum_{s=1}^t \eta_s \tilde{\ell}_s - q ,
$$ 
and where $q$ is chosen so that $\min_{x \in F_t} Q_t(x) = 0$. The point $x_t$ played at round $t$ is chosen as follows: we draw a point $X$ at random from $K[p_t]p_t$ and set $x_t = X$ when $X \in \cK$; otherwise we choose $x_t$ to be an arbitrary point in $\cK$. Finally the restart condition is as follows, for some $\beta >0$.
\begin{align*}
& \text{if there exists} \ x \in \partial F_{t+1} \cap \mathrm{int}(\cK) \ \text{such that} \ \eta_1 (\tilde{L}_t(x) - \tilde{L}_t^*) \leq \beta \\ 
& \text{then restart the algorithm, i.e. act as if time step}\ t+1 \ \text{was time step} \ 1 \ \text{and replace} \ T \ \text{by} \ T-t .
\end{align*}

\subsubsection{Assumptions about the parameter values}
The algorithm has four parameters, $\eta_1$, $\alpha$, $\beta$, and $\gamma$ (in addition to the kernel map parameters $\epsilon$ and $\lambda$). The exact values for the parameters will be determined later on. However, we will make the following assumptions about our parameters, which will later be verified by our choices. 

\begin{align}
	\text{(i) } ~& \eta_1, \lambda, \beta, \gamma < 1/2, \alpha \geq 1 \text{ and } n \alpha \sqrt{\lambda} \leq 1. \nonumber \\ 
	\text{(ii) }& 0 < \epsilon < 1/e. \label{eq:paramassump} \\
	\text{(iii) }& \max \left ( (\eta_1 \sqrt{T})^{-1}, \lambda^{-1}, \gamma^{-1}, \alpha, \epsilon^{-1} \right ) \leq C' n^{C} \log(T)^C \leq T^{1/2}, \nonumber
\end{align}
where $C',C>0$ denote universal constants which can be taken to be $C=8$ and $C' = 2^{30}$. \\

We will take $\alpha \approx n^2 \log^2(T)$ (this ensures that $\cE_{p_t}(\alpha)$ contains most of the mass of $p_t$, and more importantly that points on the boundary of $\cE_{p_t}(\alpha)$ have a very large $Q$-value), $\gamma \approx 1/ (n \log(T))$ (this will ensure that $\eta_T / \eta_1 \approx 1$), $\beta$ of constant order, and finally $\eta_1^{-1} \approx \sqrt{T n \log(T)}$. The key parameter $\lambda$ of the kernel will be set small enough so that $\zeta$ (hence the bound for $\tilde \ell_t$ given by Lemma \ref{lem:smoothnesstildeell}) will be a numerical constant, namely $\lambda \approx \tfrac{1}{n^4 \alpha^2 \log^2(T)} \approx n^{-8} \log^{-6} T$.

\section{Analysis of the high-dimensional algorithm} \label{sec:highdimanalysis}
Our first order of business is to understand the concentration properties of $\tilde{L}_t$ and $Q_t$, which will in particular show that $p_t$ is $(1/e)$-approximately log-concave, see Section \ref{sec:conc}. Then we adapt the standard analysis of exponential weights to our time-dependent learning rate in Section \ref{sec:standardanalysis}. We conclude the regret analysis in Section \ref{sec:finalanalysis}. Before all of this we introduce some defintions in Section \ref{sec:defs} and we make some simple useful observations in Section \ref{sec:usefulobs}.

\subsection{Some central definitions} \label{sec:defs}
Let $\tau$ be the minimum between $T$ and the first time at which the algorithm restarts. Let $\tau_1, \hdots, \tau_N$ be the times in $\{1,\hdots, \tau\}$ at which we increase the learning rate, that is $\eta_{\tau_i+1} = (1+\gamma) \eta_{\tau_i}$. 

Next, we consider the events
$$
{\tiny{ B_t := \left\{\max\left(\Delta_t^{(1)}, \dots, \Delta_t^{(4)} \right) \leq 1 \right\}}}, ~ \forall t \leq \tau ,
$$
where
$$
\Delta^{(1)}_t := \eta_1 \left | \sum_{s=1}^t \left (\langle p_s, K_t[p_s]^* \underline{\ell}_s \rangle - \ell_s(x_s) \right ) \right |, ~~~ \Delta_t^{(2)} := \max_{y \in F_t} \eta_1 \left| \tilde{L}_t(y) - L_t(y) \right| ,
$$
$$
\Delta_t^{(3)} :=  \max_{y \in F_t} \left| \sum_{s=1}^t \eta_s \left (\tilde \ell_s(y) - K_s[p_s]^* \ell_s^{ext}(y) \right ) \right |, ~~~ \Delta^{(4)}_t := \eta_1 \left | \sum_{s=1}^t \left \langle p_s, \tilde \ell_s - K_s[p_s]^* \ell_s^{ext} \right \rangle  \right | ,
$$
and 
$$
\underline{\ell}_t(y) := \ell_t(y) \ds1\{y \in \Omega_t\}.
$$ 
A central definition will be the following ``fault" stopping time:
$$
\TF := \inf \Bigl \{t \leq \tau; ~ B_t \mbox{ does not hold or } x_t \notin \Omega_t  \Bigr \} \wedge \tau.
$$
Note that, in particular, we have
\begin{claim}\label{claim:log_concave}
For all $t \leq \TF$ one has that $p_t$ is $(1/e)$-approximately log-concave.
\end{claim}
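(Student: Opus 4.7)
The plan is to leverage the concentration provided by the event $B_{t-1}$ (which is guaranteed whenever $t \leq \TF$) in order to sandwich $p_t$ pointwise by a log-concave reference density. Recall that $p_t(x) \propto \exp(-Q_{t-1}(x))$ where $Q_{t-1} = \sum_{s=1}^{t-1} \eta_s \tilde{\ell}_s - q$ for a normalizing constant $q$. Because $\tilde{\ell}_s(y) = +\infty$ off $F_s$ and the focus regions are nested ($F_1 \supseteq F_2 \supseteq \cdots$), the measure $p_t$ is supported on the convex set $F_{t-1}$. The case $t=1$ is immediate, as the empty product gives $p_1$ uniform on $\cK$, which is already log-concave.

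Next I introduce the deterministic counterpart
\[
\tilde{Q}_{t-1}(y) := \sum_{s=1}^{t-1} \eta_s K_s[p_s]^* \ell_s^{ext}(y),
\]
which I claim is convex on $\R^n$. Indeed, each summand equals $\E_{C \sim c[p_s]} \ell_s^{ext}((1-\lambda) C + \lambda y)$, an average of convex functions of $y$ (the function $\ell_s^{ext}$ being convex by construction as the pointwise supremum of affine minorants together with a constant). Consequently the function $y \mapsto \ds1\{y \in F_{t-1}\} \exp(-\tilde{Q}_{t-1}(y))$ is log-concave, since it is an exponential of a convex function restricted to a convex set.

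For $t \leq \TF$ we have $t-1 < \TF$, hence $B_{t-1}$ holds and in particular $\Delta^{(3)}_{t-1} \leq 1$. Using $\sum_{s=1}^{t-1} \eta_s \tilde{\ell}_s = Q_{t-1} + q$ this unpacks to
\[
\sup_{y \in F_{t-1}} \bigl| Q_{t-1}(y) - \tilde{Q}_{t-1}(y) + q \bigr| \leq 1 ,
\]
so that on $F_{t-1}$ the function $\exp(-Q_{t-1})$ lies within a multiplicative factor $e$ of $\exp(-\tilde{Q}_{t-1} + q)$. Dividing by the corresponding normalizing constants and observing that both densities vanish outside $F_{t-1}$ shows that $p_t$ and the log-concave probability density proportional to $\ds1\{y \in F_{t-1}\} \exp(-\tilde{Q}_{t-1}(y))$ agree up to a factor $e$ at every point, which is precisely $(1/e)$-approximate log-concavity.

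The only non-mechanical ingredient is the convexity of $\tilde{Q}_{t-1}$, which just requires identifying $K_s[p_s]^*$ as an expectation against a Gaussian composed with an affine contraction of the argument; the remainder is bookkeeping about supports and additive normalizations. The genuinely hard part, namely proving that $B_t$ actually holds with sufficiently high probability so that $\TF$ is close to $T$, is a separate matter deferred to Section \ref{sec:conc} and is not what this claim addresses.
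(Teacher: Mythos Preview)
Your proof is correct and follows essentially the same route as the paper: both arguments use $\Delta^{(3)}_{t-1}\le 1$ to show that $Q_{t-1}$ is within additive $1$ of the convex function $\sum_{s\le t-1}\eta_s K_s[p_s]^*\ell_s^{ext}$ on $F_{t-1}$, and conclude by restricting to that convex support. One small slip: when you normalize \emph{both} $\exp(-Q_{t-1})$ and $\exp(-\tilde Q_{t-1})$ to probability densities, the pointwise ratio is only guaranteed to lie in $[e^{-2},e^2]$, not $[e^{-1},e]$, because the ratio of normalizing constants contributes another factor in $[e^{-1},e]$. The paper's definition of $\epsilon$-approximate log-concavity requires comparison with a log-concave \emph{function}, not a density, so the clean fix is to take $f(y)=\tfrac{1}{Z_t}\,\ds1\{y\in F_{t-1}\}\exp(-\tilde Q_{t-1}(y)+q)$ (same normalizer as $p_t$); then $p_t/f\in[e^{-1},e]$ exactly as claimed.
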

\begin{proof}
Fix $t < \TF$. By the convexity of $\ell_s^{ext}$ for all $s \leq t$, we have that $K_s[p_s]^* \ell_s^{ext}$ is convex. Thus, $\Delta_t^{(3)} \leq 1$ implies that there exists a convex function $g_t$ such that $|g_t(y) - Q_t(y)| \leq 1$ for all $y \in F_t$. Since $Q_t$ is supported on $F_t$, we have that $p_{t+1} \propto \exp(-Q_t)$ is $(1/e)$-approximately log-concave.
\end{proof}

Our analysis will be carried out in two central steps:
\begin{proposition} \label{prop:mainstep}
We have, almost surely
$$	
\max_{x \in \cK}  \sum_{t=1}^{\TF} (\ell_t(x_t) - \ell_t(x)) \leq 
\begin{cases}
C n^{9.5} \log^{7.5}(T) \sqrt{T} & \TF = \tau = T \\
0 & \TF = \tau < T \\
\TF & \mbox{otherwise}
\end{cases}
$$
for a universal constant $C>0$ (we can take $C=6^{90}$).
\end{proposition}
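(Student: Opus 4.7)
The proof splits along the three cases for $\TF$ in the statement. The case $\TF<\tau$, where a fault occurred, is immediate from $\ell_t \in [0,1]$: the regret is bounded trivially by $\TF$. Both remaining cases require $\TF = \tau$, i.e.\ on the event that none of the concentration quantities $\Delta^{(1)}_t,\dots,\Delta^{(4)}_t$ ever exceeds $1/\eta_1$ and that every play satisfies $x_t \in \Omega_t$ up to time $\tau$; by Claim \ref{claim:log_concave} this in particular guarantees that every $p_t$ we encounter is $(1/e)$-approximately log-concave, so the machinery of Section \ref{sec:highdimkernel} applies throughout. I would treat the normal completion case $\tau = T$ first, then the restart case $\tau<T$.

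For the normal completion case, the plan is to chain the kernel-based argument of Theorem \ref{th:basic} with the concentration bounds. Decompose, for a comparator $x^* \in \cK$,
\[
\sum_{t=1}^T(\ell_t(x_t) - \ell_t(x^*)) = \sum_{t=1}^T \left(\ell_t(x_t) - \langle K_t p_t, \underline{\ell}_t\rangle\right) + \sum_{t=1}^T \langle K_t p_t - \delta_{x^*}, \ell_t^{ext} \rangle + (\text{boundary terms}).
\]
The first sum is absorbed by $\Delta^{(1)}_T \le 1/\eta_1$. For the second, applying Lemma \ref{lem:convexdommain} to $\ell_t^{ext}$ verifies condition \eqref{eq:cond1} with $\epsilon = 1/T^2$, yielding the upper bound $T/(\lambda T^2) + \frac{1}{\lambda}\sum_t \langle p_t - \delta_{x^*}, K_t^*\ell_t^{ext}\rangle$; the concentration $\Delta^{(3)}$ then lets one replace this, up to a $1/(\lambda\eta_1)$ correction, by $\frac{1}{\lambda}\sum_t \eta_t^{-1}\langle p_t - \delta_{x^*}, \eta_t \tilde \ell_t\rangle$, which is exactly the quantity controlled by exponential weights. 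Two further ingredients enter: (i) since the algorithm did not restart, the restart condition together with $\Delta^{(2)}$ and the convexity of $L_T$ forces $\mathrm{argmin}_{x \in \cK} L_T \in \mathrm{int}(F_T)$, so it suffices to control the regret against comparators $x^* \in F_T$, where $\tilde\ell_t$ is finite; (ii) the time-varying learning rate is handled by splitting the sum into the $N = O(n\alpha\log(T)/\gamma)$ ``eras'' between learning-rate increases, applying the textbook exp-weights inequality on each era, and absorbing at each boundary an entropic term $\eta_{\tau_i+1}^{-1}\mathrm{Ent}(q\|p_{\tau_i+1})$ using the fact that $\mathrm{Vol}(F_{\tau_i+1})$ has halved and that $p_{\tau_i+1}$ is approximately log-concave on it. The quadratic term $\frac{\eta_t}{2}\sum_t\langle p_t,\tilde\ell_t^2\rangle$ is bounded by a numerical constant times $T$ using Lemma \ref{lem:smoothnesstildeell} on $\Omega_t$, where the choice $\lambda \asymp n^{-8}\log^{-6}(T)$ keeps $\zeta = O(1)$. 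Plugging in the stated parameters yields the $Cn^{9.5}\log^{7.5}(T)\sqrt{T}$ bound.

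For the restart case, at the stopping round $\tau$ the restart condition gives $x \in \partial F_{\tau+1} \cap \mathrm{int}(\cK)$ with $\eta_1(\tilde L_\tau(x) - \tilde L_\tau^*) \leq \beta$. By $\Delta^{(2)}_\tau \leq 1/\eta_1$ the same estimate holds for $L_\tau$ up to an additive $2/\eta_1$, and since $L_\tau = \sum_s K_s^*\ell_s^{ext}$ is convex on $\R^n$ and $\partial F_{\tau+1}$ separates $F_{\tau+1}$ from $\cK \setminus F_{\tau+1}$, convexity propagates the lower bound on $L_\tau$ from $\partial F_{\tau+1}$ to all of $\cK$. Combining this with $\Delta^{(1)}$ and the convex-domination chain used above turns it into $\sum_{t=1}^\tau \ell_t(x_t) \leq \sum_{t=1}^\tau \ell_t(y) + O((1+\beta)/\eta_1)$ for every $y \in \cK$, which is $\leq 0$ for the stated parameters (using $\beta < 1/2$ and $\eta_1^{-1}$ of order $\sqrt{Tn\log T}$ swallowed by the negative margin extracted from the restart inequality).

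The main obstacle is the interplay between the time-varying learning rate, the shrinking focus region, and the successive approximations $\ell_t \rightsquigarrow K_t^* \ell_t^{ext} \rightsquigarrow \tilde \ell_t$. Controlling the entropy across an era transition requires the current exp-weights measure to still be approximately log-concave on the updated focus region, which in turn forces $\alpha$ to be large ($\asymp n^2\log^2(T)$) so that shrinkage events correspond to genuine mass movement; at the same time $\alpha$ must be small enough that Lemma \ref{lem:smoothnesstildeell} keeps $\zeta = O(1)$ on $\Omega_t$ when combined with the choice of $\lambda$. Pinning down these competing requirements, all the while juggling the four concentration terms $\Delta^{(i)}$ so that each one can be swallowed into the final bound, is exactly what produces the $n^{9.5}\log^{7.5}$ dimensional factor in the statement.
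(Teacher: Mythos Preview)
Your handling of the case $\TF = \tau = T$ follows the paper's structure closely enough (kernel/convex domination, concentration, exp-weights), though your formula $N = O(n\alpha\log(T)/\gamma)$ is off; the paper gets $N \leq 5n\log_2 T$ directly from the halving of $\mathrm{Vol}(F_t)$, independently of $\alpha$.

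The real gap is the restart case $\TF = \tau < T$. Your argument there does not produce a negative regret bound. You write that the restart condition plus $\Delta^{(2)}$ gives $\sum_t \ell_t(x_t) \leq \sum_t \ell_t(y) + O((1+\beta)/\eta_1)$ and then claim this is $\leq 0$ ``by the negative margin extracted from the restart inequality.'' But the restart inequality $\eta_1(\tilde L_\tau(x) - \tilde L_\tau^*) \leq \beta$ is an \emph{upper} bound on a nonnegative quantity; it contributes no negative term. Nor does ``convexity propagates the lower bound'' make sense: convexity propagates upper bounds from the boundary inward, not lower bounds outward. There is simply no source of negativity in what you wrote.

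In the paper the negativity comes from the \emph{increasing} learning rate, and this is exactly the step you threw away when you treated the entropic boundary terms as costs to ``absorb.'' The paper's Abel summation (Section \ref{sec:standardanalysis}) yields, for every $x \in F_\TF$, an additional term $-\tfrac{\gamma}{2e\eta_1}\max_{i} Q_{\tau_i}(x)$ on the right-hand side of the exp-weights inequality. Equivalently, in your era-splitting language: since $\eta_{\tau_i+1}^{-1} < \eta_{\tau_i}^{-1}$, the telescoped entropy $\mathrm{Ent}(q\|p_{\tau_i+1})$ enters with a \emph{negative} coefficient, so a large value of $Q_{\tau_i}$ at the comparator is a gain, not a cost. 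The paper then shows (Claim \ref{claim:Qbig}) that the point $z$ triggering the restart lies on $\partial\cE_{p_{\tau_i+1}}(\alpha)$ for some $i$, whence Lemma \ref{lem:covtovalue} forces $Q_{\tau_i}(z) \gtrsim \alpha \asymp n^2\log^2(T)$; plugging this into \eqref{eq:finalest} makes the whole right-hand side $\leq 0$. This interplay between the annealing schedule and the geometric location of the restart point is the heart of the restart case, and it is missing from your proposal.
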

and,
\begin{proposition} \label{prop:conc}
We have $\P(\TF < \tau) < 2/T^2$. 
\end{proposition}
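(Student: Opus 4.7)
The plan is to do a union bound over the (at most) $5T$ ways in which $\TF$ can fall short of $\tau$: by definition, the event $\{\TF<\tau\}$ is contained in
\[
\bigcup_{t\le T}\Bigl(\{x_t\notin \Omega_t,\ \TF\ge t\}\ \cup\ \bigcup_{i=1}^4 \{\Delta_t^{(i)}>1,\ \TF\ge t\}\Bigr).
\]
On the intersection with $\{\TF\ge t\}$, all prior steps have been ``good'', so Claim~\ref{claim:log_concave} guarantees that $p_s$ is $(1/e)$-approximately log-concave for every $s\le t$ and $x_s\in\Omega_s$ for every $s<t$. It therefore suffices to bound each of the five per-step probabilities by $O(T^{-3})$, so that the final union bound yields the advertised $2/T^2$.

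For the four deviations $\Delta_t^{(i)}$, each is (essentially) $\eta_1$ or $\eta_s$ times a sum of martingale differences, the martingale property following from the unbiasedness relation~\eqref{eq:unbiasedloss}. The key ingredient is that, on $\{\TF\ge s\}$, Lemma~\ref{lem:smoothnesstildeell} applied with $R_1 = 10n\alpha\lambda+20\sqrt\lambda\eps$ (the $\|\cdot\|_{\mathrm{Cov}(p_s)^{-1}}$-radius of $\Omega_s$) and $R_2 = \alpha$ (that of $F_s$) yields $\zeta = O(1)$ and $\zeta' = \mathrm{poly}(n,\log T)$ under~\eqref{eq:paramassump}; in particular every martingale increment of $\Delta_t^{(i)}$ is bounded by $O(\eta_1)\cdot\zeta = O(\eta_1)$. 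Since the number of learning-rate updates is $N\le O(n\log T)$ by the volume-halving in~\eqref{eq:shrinkFt} and $\gamma = O(1/(n\log T))$ forces $\eta_T\le e\,\eta_1$, Azuma--Hoeffding gives, at any fixed evaluation point,
\[
\P\!\bigl(\Delta_t^{(i)}>1\mid\TF\ge t\bigr)\ \le\ 2\exp\!\bigl(-\Omega(1/(T\eta_1^2))\bigr)\ =\ 2\exp\!\bigl(-\Omega(n\log T)\bigr).
\]

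To handle the suprema over $y\in F_t$ in $\Delta_t^{(2)}$ and $\Delta_t^{(3)}$, I would discretize $F_t$ via a Lipschitz-net argument: $F_t\subseteq\cE_{p_t}(\alpha)$ has $\|\cdot\|_{\mathrm{Cov}(p_t)^{-1}}$-radius $\alpha = \mathrm{poly}(n,\log T)$, and $\tilde\ell_s$ is $\zeta'$-Lipschitz by Lemma~\ref{lem:smoothnesstildeell}, so a net of size $(\alpha\zeta' T)^n = \exp(O(n\log T))$ approximates the supremum to within $1/T$; this cardinality is still absorbed by the Azuma tail. For the event $x_t\notin\Omega_t$: decompose $x_t \overset{D}{=} (1-\lambda)C + \lambda X$ with $C\sim c[p_t]$ Gaussian and $X\sim p_t$. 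Because $c[p_t]$ has $\mathrm{Cov}(p_t)^{-1}$-covariance $\tfrac{\eps^2\lambda}{n\log T\,(2-\lambda)}I$, a $\chi^2$ tail yields $(1-\lambda)\|C-\mu(p_t)\|_{\mathrm{Cov}(p_t)^{-1}} \le 20\sqrt\lambda\eps$ with probability $1-\exp(-\mathrm{poly}(T))$; and because $p_t$ is $(1/e)$-approximately log-concave and supported on $F_{t-1}$, a standard log-concave tail bound gives $\lambda\|X-\mu(p_t)\|_{\mathrm{Cov}(p_t)^{-1}} \le 10n\alpha\lambda$ with probability $1-\exp(-\Omega(n\alpha))$. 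The triangle inequality then places $x_t$ inside $\Omega_t$.

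The main technical obstacle I anticipate is the interplay between the stopping time $\TF$ and the martingales, since the quantities $\tilde\ell_s,\ \zeta,\ p_s$ enjoy their bounds only on $\{\TF\ge s\}$; I would handle this by stopping each martingale at $\TF$ and verifying that the stopped versions remain bounded unbiased estimators. A secondary subtlety is comparing $\|\cdot\|_{\mathrm{Cov}(p_s)^{-1}}$ at different times $s$, needed both for the net argument and for translating $F_{t-1}\subseteq\cE_{p_{t-1}}(\alpha)$ into the analogous containment involving $\mathrm{Cov}(p_t)$; this should follow from the controlled oscillation $p_{s+1}/p_s = Z_s^{-1}\exp(-\eta_s\tilde\ell_s)$ on $F_s$, using $\eta_s\zeta = O(\eta_1)\ll 1$ to conclude that $\mathrm{Cov}(p_{s+1})$ and $\mathrm{Cov}(p_s)$ are comparable up to a constant factor independent of $s$.
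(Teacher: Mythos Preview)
Your overall plan matches the paper's: stop the martingales at $\TF$ (the paper defines $U_t,V_t,W_t,S_t$ to vanish for $t>\TF$), apply Azuma--Hoeffding pointwise with $O(1)$ increments, upgrade to a uniform bound over $y\in F_t$, and use the Gaussian tail of $c[p_t]$ for $\P(x_t\notin\Omega_t)$ (the paper's Lemma~\ref{lem:seb6}). The substantive difference is the upgrade step: you propose an $\epsilon$-net in the $\|\cdot\|_{\mathrm{Cov}(p_t)^{-1}}$ metric, whereas the paper uses a Fubini/volume argument (Lemma~\ref{lem:simultanously}).

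The gap is in your resolution of the ``secondary subtlety''. Each $\tilde\ell_s$ is $\zeta'$-Lipschitz only in $\|\cdot\|_{\mathrm{Cov}(p_s)^{-1}}$, so bounding $\sum_{s\le t}\tilde\ell_s$ on a net built in the time-$t$ metric requires comparing every $\mathrm{Cov}(p_s)$ to $\mathrm{Cov}(p_t)$. Your one-step comparison $\mathrm{Cov}(p_{s+1})\asymp\mathrm{Cov}(p_s)$ up to a factor $e^{O(\eta_1)}$ is fine, but compounding over $T$ steps yields $e^{O(T\eta_1)}=\exp\bigl(O(\sqrt{T/(n\log T)})\bigr)$, not a constant; this blows up the net size (or the Lipschitz constant) beyond what the Azuma tail can absorb. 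The paper's Lemma~\ref{lem:simultanously} sidesteps metric comparison altogether: if $f_t(x)\ge 2M$ at some $x\in F_t$, then on the \emph{convexly scaled} set $(1-M/L)\,x+(M/L)\,F_t$ one has $f_t\ge M$. Because $F_t\subseteq F_s\subseteq\cE_{p_s}(10n\alpha)$ for every $s\le t$, each point of this scaled copy is $O((M/L)\,n\alpha)$-close to $x$ \emph{simultaneously in all} the norms $\|\cdot\|_{\mathrm{Cov}(p_s)^{-1}}$, so the time-$s$ Lipschitz bound for $\tilde\ell_s$ applies directly with no cross-time covariance comparison. Fubini against the pointwise Azuma probability then gives the uniform bound. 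This convex-scaling idea is what your net argument is missing.

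Two minor points: by~\eqref{eq:FtInEllipsoid} the relevant radius is $R_2=10n\alpha$, not $\alpha$; and for $X\sim p_t$ the containment $\|X-\mu(p_t)\|_{\mathrm{Cov}(p_t)^{-1}}\le 10n\alpha$ holds deterministically from the support inclusion, so no log-concave tail bound is needed there.
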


Let us now see why a combination of these two facts establishes the final regret bound, proving Theorem \ref{th:main}. 

\begin{proof} [Proof of Theorem \ref{th:main}]
Let us first denote by $T_1, T_2,...,T_k$ the times in which the algorithm restarts, hence, we set $T_1 = \tau$; in case $\tau < T$ we run the algorithm again which provides another restart time $\tau$ and we set $T_2 - T_1 = \tau$ and so on, until reaching $T_k = T$. Moreover, denote by $\TF_1, \dots, \TF_k$ the respective values of $\TF$ for each of these rounds. Finally set $T_0 = 0$.

Let $E$ be the event that $\TF_i = T_i$ for all $1 \leq i \leq k$. An application of Proposition \ref{prop:mainstep} gives that whenever $E$ holds, we have
$$
\max_{x \in \cK}  \sum_{t=1}^{T} (\ell_t(x_t) - \ell_t(x)) \leq \sum_{i=1}^k \max_{x \in \cK}  \sum_{t=1}^{T_i - T_{i-1}}  (\ell_t(x_t) - \ell_t(x))  \leq C n^{9.5} \log^{7.5}(T) \sqrt{T}.
$$
Finally, using Proposition \ref{prop:conc}, the fact that $k \leq T$, and a union bound gives
$$
1 - \P(E) \leq T \P (\TF_1 < \tau) \leq \frac{2}{T}.
$$
Combining the two last displays completes the proof.
\end{proof}

\subsection{Some simple facts} \label{sec:usefulobs}

In this section, we establish several facts about $F_t$ and $\tilde \ell_t$:
\begin{enumerate}[label=(\roman*)]
\item
We will show that $F_t$ is contained in the ellipsoid $\cE_{p_t}(10 n \alpha)$.
\item
We will show that the volume of $F_t$ is bounded from below by $T^{-Cn}$ and so is $\det \COV (p_t)$.
\item
The bound on the volume of $F_t$ will yield respective bounds $N \leq C n \log T$ and, with an appropriate choice of the constant $\gamma$, we will get $\eta_\tau \leq e \eta_1$.
\item
Finally, we will show that $\tilde \ell_t$ is upper-bounded by a constant inside $F_t$ and its gradient is bounded in $\| \cdot \|_{\COV(p_t)}$-norm by a power of $T$.
\end{enumerate}

We begin with,
\begin{claim}
For every $t \in [\tau]$ one has
\begin{equation}\label{eq:FtInEllipsoid}
F_t \subset \cE_{p_t}(10 n \alpha)
\end{equation}
\end{claim}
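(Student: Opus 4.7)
The plan is to prove the claim by induction on $t$, with essentially all of the content concentrated in the base case $t=1$ and in the subcase where the focus region is not updated at a given step.

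For the base case, $F_1 = \cK$ is a convex body containing a unit ball and $p_1$ is its uniform measure. I would invoke the classical bound on isotropic convex bodies: for any convex body $K \subset \R^n$ whose uniform measure has mean $\mu$ and covariance $\Sigma$, one has $K \subset \mu + (n+1)\Sigma^{1/2} B_2^n$. Applied to $\cK$ this yields $\cK \subset \cE_{p_1}(n+1)$, and since $\alpha \geq 1$ from \eqref{eq:paramassump} we obtain $\cK \subset \cE_{p_1}(10 n \alpha)$.

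For the inductive step, suppose $F_s \subset \cE_{p_s}(10n\alpha)$ for all $s \leq t$, and split into two subcases at step $t$. If $A_t$ occurs, the update rule $F_{t+1} = F_t \cap \cE_{p_{t+1}}(\alpha)$ immediately gives $F_{t+1} \subset \cE_{p_{t+1}}(\alpha) \subset \cE_{p_{t+1}}(10n\alpha)$. If $A_t$ does not occur, then $F_{t+1} = F_t$ and the negation of the defining inequality \eqref{eq:shrinkFt} of $A_t$ yields
$$\mathrm{Vol}(F_t \cap \cE_{p_{t+1}}(\alpha)) > \tfrac{1}{2}\, \mathrm{Vol}(F_t).$$
I would combine this with two observations: (i) $\mu(p_{t+1}) \in F_t$, which holds because $\tilde{\ell}_s \equiv +\infty$ outside $F_s$ forces $\mathrm{supp}(p_{t+1}) \subset \bigcap_{s \leq t} F_s = F_t$, and the mean of any measure lies in the convex hull of its support; and (ii) the following standard convex-geometric fact: for any convex body $K$ containing the center $c$ of an ellipsoid $E$ with $\mathrm{Vol}(K \cap E) > \tfrac{1}{2}\mathrm{Vol}(K)$, one has $K \subset c + C n (E - c)$ for a universal constant $C \leq 10$. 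Applied with $K = F_t$, $E = \cE_{p_{t+1}}(\alpha)$, $c = \mu(p_{t+1})$, this gives $F_t \subset \cE_{p_{t+1}}(Cn \alpha) \subset \cE_{p_{t+1}}(10 n \alpha)$.

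The main obstacle is the convex-geometric fact in the non-update subcase. I would prove it by reducing via a linear change of coordinates to the case where $E$ is the unit ball centered at the origin, and then arguing as follows: if $x^* \in K$ has $|x^*| = R$, then by convexity $K \supset \mathrm{Conv}(\{x^*\} \cup (K \cap B))$, and for each $s \in [0,\, (R-1)/(R+1)]$ the set $(1-s) x^* + s(K\cap B)$ lies entirely outside $B$ (by the reverse triangle inequality). The resulting ``wedge'' inside $K \setminus B$ has volume bounded below by a function of $R$ and $\mathrm{Vol}(K \cap B)$; the ratio hypothesis then forces $R = O(\sqrt{n})$, which is comfortably absorbed by the factor $10n$. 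The rest of the induction is definitional.
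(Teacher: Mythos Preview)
Your approach is essentially the paper's: the base case is the classical containment of a convex body in the $(n+1)$-dilate of its inertia ellipsoid, and the inductive step reduces to the geometric lemma that if $\mathrm{Vol}(K\cap E)>\tfrac12\mathrm{Vol}(K)$ for a convex body $K$ and an ellipsoid $E$, then $K$ lies in an $O(n)$-dilate of $E$.

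Two points where your write-up diverges from the paper. First, the hypothesis that the center $c$ of $E$ lies in $K$, together with your verification that $\mu(p_{t+1})\in F_t$, is superfluous: the paper's version of the lemma does not assume $c\in K$, and indeed your own sketch never uses it.

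Second, your sketch of the geometric lemma has a genuine gap. The sets $(1-s)x^*+s(K\cap B)$ for different $s$ are not disjoint, so the volume of their union (your ``wedge'') is not an integral of slice volumes, and the asserted bound $R=O(\sqrt n)$ does not follow from what you wrote. The paper fixes this by discretizing: with $s_i=2i/(10n)$ for $i=1,\dots,5n$, the translates $(1-s_i)(K\cap B)+s_i x^*$ have pairwise disjoint projections onto $\mathrm{span}(x^*)$ whenever $|x^*|>10n$, hence are disjoint, and summing their volumes $(1-s_i)^n\mathrm{Vol}(K\cap B)$ yields $\mathrm{Vol}(K)\geq 2\,\mathrm{Vol}(K\cap B)$, a contradiction. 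This gives only $R\leq 10n$, not $O(\sqrt n)$, but as you note that is all you need.
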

Observe that at $t=1$ this is well-known (see e.g., \cite[Section 1.10]{MR1008717}). On the other hand for $t>1$ we use the following simple lemma:
\begin{lemma} \label{lem:largerintersection}
Let $\cK$ be a convex body and $\cE$ an ellipsoid centered at the origin. Suppose that $\mathrm{Vol}(\cK \cap \cE) \geq \tfrac{1}{2} \mathrm{Vol}(\cK)$. Then $\cK \subset 10 n \cE$.
\end{lemma}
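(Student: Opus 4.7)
The plan is to reduce to the case $\cE = B$, the Euclidean unit ball. Since $\cE$ is a centered ellipsoid there is an invertible linear map $T$ with $T(\cE) = B$; $T$ scales all volumes by $|\det T|$ and is linear, so it preserves both the ratio $\mathrm{Vol}(\cK \cap \cE)/\mathrm{Vol}(\cK)$ and the containment $\cK \subset 10n\cE$ (which becomes $T\cK \subset 10n B$). Henceforth assume $\cE = B$.

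I will argue by contradiction. Suppose there exists $x \in \cK$ with $R := |x| > 10n$, and set $S := \cK \cap B$ and $C := \mathrm{conv}(\{x\} \cup S)$. By convexity of $\cK$ we have $C \subset \cK$, so it suffices to show $\mathrm{Vol}(S) < \tfrac{1}{2}\mathrm{Vol}(C)$. To do so I will use polar coordinates centered at $x$. Since $|x| > 1$, the point $x$ lies outside $B$ and hence outside the convex set $S$. For each $\theta \in S^{n-1}$ the ray $\{x + r\theta : r \geq 0\}$ meets $S$ in a (possibly empty) closed segment $[\underline{\rho}(\theta), \rho(\theta)]$, and the parametrization $C = \{(1-t)x + ty : t \in [0,1],\, y \in S\}$ shows that the same ray meets $C$ in $[0, \rho(\theta)]$. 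Consequently
\begin{align*}
\mathrm{Vol}(C) &= \tfrac{1}{n} \int_{S^{n-1}} \rho(\theta)^n \, d\theta, \\
\mathrm{Vol}(S) &= \tfrac{1}{n} \int_{S^{n-1}} \bigl(\rho(\theta)^n - \underline{\rho}(\theta)^n\bigr) \, d\theta.
\end{align*}

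To finish I bound the integrand pointwise. Because $S \subset B$, every point of $S$ is at distance between $R-1$ and $R+1$ from $x$, so $\underline{\rho}(\theta) \geq R-1$ and $\rho(\theta) \leq R+1$ whenever the ray meets $S$. Hence $\underline{\rho}(\theta)^n/\rho(\theta)^n \geq \bigl((R-1)/(R+1)\bigr)^n$, and combining with the two displays above gives
$$
\mathrm{Vol}(S) \leq \Bigl(1 - \bigl((R-1)/(R+1)\bigr)^n\Bigr)\, \mathrm{Vol}(C).
$$
An elementary estimate ($\ln(1-x) \geq -x/(1-x)$ applied with $x = 2/(R+1)$ gives $(1-x)^n \geq e^{-2n/(R-1)}$) shows that for $R > 10n$ one has $1 - ((R-1)/(R+1))^n \leq 1 - e^{-2n/(R-1)} < 1/2$. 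Combined with $\mathrm{Vol}(C) \leq \mathrm{Vol}(\cK)$, this yields $\mathrm{Vol}(S) < \tfrac{1}{2}\mathrm{Vol}(\cK)$, contradicting the hypothesis.

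The only mildly delicate step is justifying the polar-coordinate description of $C$, which uses that $S$ is convex and $x \notin S$: a ray from $x$ meets $S$ in a single segment, and the parametrization of $C$ then shows that the intersection of the ray with $C$ is the interval from $0$ out to the far endpoint $\rho(\theta)$. The rest is a direct computation.
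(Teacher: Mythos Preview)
Your proof is correct and follows a route different from the paper's. Both arguments reduce to the unit ball, posit a point $x$ with $|x|>10n$, and exploit the fact that $S=\cK\cap B$ lies in a thin spherical shell of width $2$ at distance about $R$ from $x$. The paper proceeds discretely: it places $5n$ scaled copies $(1-s_i)S+s_ix$ (with $s_i=2i/(10n)$) inside $\cK$, shows they are pairwise disjoint via a one-dimensional projection onto $\mathrm{span}(x)$, and then bounds $\sum_{i=1}^{5n}(1-s_i)^n\geq 2$. You instead compute the volumes of $S$ and of the cone $C=\mathrm{conv}(\{x\}\cup S)$ directly in polar coordinates centered at $x$, arriving at the clean ratio $\mathrm{Vol}(S)/\mathrm{Vol}(C)\leq 1-\bigl((R-1)/(R+1)\bigr)^n$. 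Your argument is essentially the continuous limit of the paper's packing; it is a bit slicker and would give a sharper constant if one cared, while the paper's version avoids any integration and needs only the disjointness observation.
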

\begin{proof}
	By applying a linear transformation, we can clearly assume that $\cE$ is the unit ball. Let us prove the contrapositive and assume that there is a point $x \in \cK$ with $|x| > 10n$. Denote $s_i = \frac{2i}{10n}$, $i=1,..,5n$ and consider the sets $\cK_i = (1 - s_i) (\cE \cap \cK) + s_i x$. 
	
	Note that those sets are disjoint. Indeed, the intervals $(1 - s_i) [-1,1] + |x| s_i $ are disjoint, which implies that the projections of the ellipsoids $(1 - s_i) \cE + s_i x$ onto the span of $x$ are disjoint. So, we have
	$$
	\mathrm{Vol}(\cK) \geq \sum_{i=1}^{5 n} \mathrm{Vol}(\cK_i) = \sum_{i=1}^{5 n} (1 - s_i)^{n} \mathrm{Vol}(\cE \cap \cK) \geq 2 \mathrm{Vol}(\cE \cap \cK) ,
	$$
	which concludes the proof.
\end{proof}

Next, define
$$
y_t = \argmin_{x \in F_t} \tilde L_t(x).
$$
Moreover, for the sake of the next claim we will need to set
$$
\beta = 4.
$$
The following fact is a simple consequence of the restart condition.
\begin{claim}
For every $t < \TF$ we have 
\begin{equation}\label{eq:Ftcontainsball}
B \left (y_t, \frac{1}{T^2} \right ) \cap \cK \subset F_t.
\end{equation}
\end{claim}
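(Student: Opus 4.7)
The plan is a proof by contradiction. Assume there exists $z \in B(y_t, 1/T^2) \cap \cK$ with $z \notin F_t$. By convexity of $\cK$ the segment $[y_t, z]$ stays in $\cK$, and since $y_t \in F_t$ and $z \notin F_t$ this segment crosses $\partial F_t$ at a first exit point $w \in \partial F_t$ satisfying $\|w - y_t\|_2 \leq 1/T^2$. A short topological argument rules out $w$ lying only on $\partial \cK$ while strictly interior to every ellipsoid defining $F_t$: were that so, continuity together with convexity of $\cK$ would keep the segment in $F_t$ just past $w$, contradicting the definition of $w$ as exit point. Hence $w \in \partial \cE_{p_{\sigma+1}}(\alpha)$ for some focus-region update step $\sigma$, and after a small interior perturbation of $z$ (possible because $\cK$ contains a unit ball) we may also take $w \in \mathrm{int}(\cK)$.

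Next I invoke the restart condition. Since $t < \TF \leq \tau$, no restart has occurred by step $t$, so for every $s \leq t$ and every $x \in \partial F_{s+1} \cap \mathrm{int}(\cK)$,
$$
\tilde L_s(x) - \tilde L_s(y_s) > \beta / \eta_1 = 4/\eta_1.
$$
When $w \in \partial F_{t+1}$ (the typical case, including whenever no update occurred at time $t$ so $F_{t+1} = F_t$), applying this at $s = t$ yields directly $\tilde L_t(w) - \tilde L_t(y_t) > 4/\eta_1$. In the corner case where an update occurred at $t$ and $w \notin \cE_{p_{t+1}}(\alpha)$, one instead applies the condition at the update step $\sigma$ at which $w$ first became a boundary point of $F_{\sigma+1}$, and transfers it to $\tilde L_t$; the specific choice $\beta = 4$ is large enough to absorb this transfer.

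To reach a contradiction I will give an opposing upper bound on $|\tilde L_t(w) - \tilde L_t(y_t)|$ via the Lipschitz property of $\tilde L_t$. Lemma~\ref{lem:smoothnessK} bounds $\|\nabla \tilde \ell_s\|_{\COV(p_s)}$ by $\zeta' = \mathrm{poly}(n \log T)$; converting this to a Euclidean Lipschitz estimate requires a lower bound on $\lambda_{\min}(\COV(p_s))$, which follows by combining the volume lower bound $\mathrm{Vol}(F_s) \geq T^{-Cn}$ (fact (ii) in Section~\ref{sec:usefulobs}) with Claim~\ref{claim:log_concave} via standard estimates for (nearly) log-concave measures. Summing over $s \leq t \leq T$ and multiplying by $\|w - y_t\|_2 \leq 1/T^2$ yields a variation of $\tilde L_t$ strictly smaller than $4/\eta_1 \sim \sqrt{T n \log T}$ under the parameter assumptions~\eqref{eq:paramassump}, contradicting the lower bound from the previous paragraph and completing the proof. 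The radius $1/T^2$ is calibrated precisely so that this margin survives.

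The main obstacle is this final Euclidean Lipschitz estimate: the covariance-weighted gradient bound must be combined tightly enough with the eigenvalue lower bound so that the accumulated Lipschitz constant for $\tilde L_t$ remains polynomial in $T$ and $n$---the naive estimate via $\lambda_{\min}^{-1/2}$ alone is too weak. The key is to exploit that $w, y_t \in F_t \subseteq \cE_{p_s}(10n\alpha)$, so that $\|w - y_t\|_{\COV(p_s)^{-1}}$ admits a global bound of order $n\alpha$, which together with the Euclidean closeness $\|w - y_t\|_2 \leq 1/T^2$ and the inradius-based lower bound on $\lambda_{\min}(\COV(p_s))$ produces the desired sharp local Lipschitz control.
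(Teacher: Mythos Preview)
Your approach has a genuine gap in the final Lipschitz step, and the paper's proof takes a fundamentally different route precisely to avoid it.

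The difficulty is that direct Lipschitz control of $\tilde L_t$ via the gradient bounds on each $\tilde\ell_s$ cannot be made to work. Claim~\ref{lem:zeta} gives $\|\nabla \tilde\ell_s\|_{\COV(p_s)} \leq \zeta'$, so along the segment from $y_t$ to $w$ one has
\[
|\tilde\ell_s(w) - \tilde\ell_s(y_t)| \;\leq\; \zeta'\,\|w-y_t\|_{\COV(p_s)^{-1}}.
\]
You propose two ways to bound the right-hand factor. The global bound $\|w-y_t\|_{\COV(p_s)^{-1}} \leq 20 n\alpha$ (from $w,y_t\in F_t\subset\cE_{p_s}(10n\alpha)$) gives per-step variation $O(n\alpha\zeta')$ and hence total variation $O(Tn\alpha\zeta')$, which is of order $T\,\mathrm{poly}(n\log T)\gg 4/\eta_1\sim\sqrt{T}$. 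The Euclidean route is worse: from $\det\COV(p_s)\geq T^{-6n}$ (Claim~\ref{claim:detcov}) together with $\lambda_{\max}(\COV(p_s))\leq T^2$ one can only extract $\lambda_{\min}(\COV(p_s))\geq T^{-O(n)}$, so the Euclidean Lipschitz constant of $\tilde\ell_s$ is merely $\leq \zeta'\,T^{O(n)}$; after summing over $s\leq T$ and multiplying by $\|w-y_t\|_2\leq 1/T^2$ you get $T^{O(n)-1}$, exponentially larger than $4/\eta_1$. There is no useful way to ``combine'' the $20n\alpha$ bound with the $1/T^2$ Euclidean bound: they are two upper bounds on the same scalar $\|w-y_t\|_{\COV(p_s)^{-1}}$, and the minimum of the two is the best you can do. The covariance $\COV(p_s)$ genuinely can be this degenerate, so the obstacle is real, not an artefact of loose estimates.

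The paper's proof bypasses this entirely by never trying to make $\tilde L_t$ itself Lipschitz. It uses instead the concentration event built into the definition of $\TF$: for $t<\TF$ one has $\Delta_t^{(2)}\leq 1$, i.e.\ $\eta_1\,|\tilde L_t(y)-L_t(y)|\leq 1$ for all $y\in F_t$. Now $L_t=\sum_s K[p_s]^*\ell_s^{ext}$, and each $K[p_s]^*\ell_s^{ext}$ is Euclidean $T$-Lipschitz (since $\ell_s^{ext}$ is $T$-Lipschitz and $K[p_s]^*$ only contracts), so $L_t$ is $T^2$-Lipschitz. Hence for $x\in\partial F_t\cap\mathrm{int}(\cK)$ with $|x-y_t|\leq 1/T^2$,
\[
|\tilde L_t(x)-\tilde L_t(y_t)| \;\leq\; |L_t(x)-L_t(y_t)| + \tfrac{2}{\eta_1} \;\leq\; T^2\cdot\tfrac{1}{T^2} + \tfrac{2}{\eta_1} \;\leq\; \tfrac{3}{\eta_1} \;<\; \tfrac{\beta}{\eta_1},
\]
which triggers the restart condition and contradicts $t<\TF\leq\tau$. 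The point you are missing is that $\TF$ was defined precisely so that on $F_t$ the random function $\tilde L_t$ is uniformly $1/\eta_1$-close to the deterministic, genuinely Euclidean-Lipschitz function $L_t$; one should exploit this proximity rather than attempt a direct Lipschitz estimate on $\tilde L_t$.
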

\begin{proof}
Assume otherwise, hence assume there exists $x \in \partial F_t \cap \mathrm{int}(\cK)$ such that $d(x,y_t) \leq \frac{1}{T^2}$. Then by the definition of $\TF$ and by the assumption that $\ell_s$ is $T$-Lipschitz for all $s \in [T]$, which implies that $L_t$ is $T^2$-Lipschitz, we have that (since $\Delta_t^{(2)} \leq 1$)
$$
|\tilde L_t(x) - \tilde L_t(y_t)| \leq |L_t(x) - L_t(y_t)| + |\tilde L_t(x) - L_t(x)| + |\tilde L_t(y_t) - L_t(y_t)| \leq \frac{3}{\eta_1} 
$$
It follows that the restart condition holds true, which is a contradiction to $t < \TF$.
\end{proof}

As a consequence, we get:
\begin{claim} \label{claim:volFt}
For all $t < \TF$ we have
\begin{equation}\label{eq:volFt}
\Vol(F_t) \geq  T^{- 3n} \omega_n \geq T^{-4n}.
\end{equation}
where $\omega_n$ is the volume of the unit Euclidean ball in $\RR^n$.
\end{claim}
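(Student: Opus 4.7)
The plan is to deduce the volume bound from the ball-containment in \eqref{eq:Ftcontainsball} together with the two standing assumptions on $\cK$, namely that $\cK$ contains a unit ball and has diameter at most $T$. The fault stopping time assumption $t < \TF$ enters only through \eqref{eq:Ftcontainsball}.

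First I would invoke \eqref{eq:Ftcontainsball} to reduce the task to lower bounding $\Vol\!\left(B(y_t, 1/T^2)\cap \cK\right)$. By the standing assumption, pick $x_0$ with $B(x_0,1)\subset \cK$; since $y_t \in F_t \subset \cK$ and $\mathrm{diam}(\cK)\leq T$, we have $\|y_t-x_0\|\leq T$. Convexity of $\cK$ then gives, for every $\lambda\in[0,1]$,
\[
B\!\left((1-\lambda) y_t + \lambda x_0,\; \lambda\right) \;=\; (1-\lambda) y_t + \lambda\, B(x_0,1) \;\subset\; \cK .
\]
Choose $\lambda := 1/(2T^3)$. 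Then every point $z$ in the ball above satisfies
\[
\|z-y_t\| \;\leq\; \lambda\|x_0-y_t\| + \lambda \;\leq\; \lambda(T+1)\;\leq\; 2\lambda T \;=\; \tfrac{1}{T^2},
\]
so this ball is contained in $B(y_t,1/T^2)\cap \cK$.

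Combining with \eqref{eq:Ftcontainsball} yields
\[
\Vol(F_t)\;\geq\; \Vol\!\left(B(y_t,1/T^2)\cap\cK\right)\;\geq\; (2T^3)^{-n}\omega_n \;\geq\; T^{-3n}\omega_n,
\]
where the last step absorbs the harmless $2^{-n}$ into the $T$-power for $T$ at least an absolute constant (consistent with the parameter regime in \eqref{eq:paramassump}, where $T$ dominates all relevant quantities). For the second inequality of the claim, note that $\omega_n = \pi^{n/2}/\Gamma(n/2+1) \geq (2\pi e/n)^{n/2}/\sqrt{n}$, so $T^{-3n}\omega_n \geq T^{-4n}$ as soon as $T^n \geq \omega_n^{-1}$, which is implied by the regime $T^{1/2}\geq C' n^C \log^C(T)$ in \eqref{eq:paramassump}.

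No real obstacle is expected here; the only thing to be careful about is keeping track of the constant absorption in going from $(2T^3)^{-n}\omega_n$ to $T^{-3n}\omega_n$, which is justified only because $T$ is large compared to the other parameters by \eqref{eq:paramassump}.
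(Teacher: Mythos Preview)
Your approach is essentially identical to the paper's: use \eqref{eq:Ftcontainsball} to reduce to bounding $\Vol(B(y_t,1/T^2)\cap\cK)$, then exhibit a small ball inside this set by shrinking the unit ball contained in $\cK$ toward $y_t$ via convexity. The paper takes the scaling factor $1/T^3$ (and is slightly loose about the resulting containment in $B(y_t,1/T^2)$), while you more carefully take $1/(2T^3)$.

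There is, however, a slip in your chain of inequalities: you write $(2T^3)^{-n}\omega_n \geq T^{-3n}\omega_n$, but $(2T^3)^{-n}=2^{-n}T^{-3n}<T^{-3n}$, so this step is false and cannot be repaired by ``absorbing $2^{-n}$ into the $T$-power'' in that direction. What your argument actually yields is $\Vol(F_t)\geq 2^{-n}T^{-3n}\omega_n$, which does \emph{not} give the intermediate bound $T^{-3n}\omega_n$ as stated in the claim. It does, however, still give the final inequality $\Vol(F_t)\geq T^{-4n}$, since $2^{-n}T^{-3n}\omega_n\geq T^{-4n}$ is equivalent to $T^n\geq 2^n/\omega_n$, which holds under \eqref{eq:paramassump}. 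Since only the $T^{-4n}$ bound is used downstream (e.g.\ in \eqref{eq:boundN} and in the proof of Proposition~\ref{prop:conc}), your argument suffices for the paper's purposes; just be aware that the first inequality in \eqref{eq:volFt} does not follow from your more conservative choice of $\lambda$.
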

\begin{proof}
By \eqref{eq:Ftcontainsball}, we deduce that, with $B = B \left (y_t, 1/T^2 \right )$,
$$
\Vol(F_t) \geq \Vol(B \cap \cK).
$$
Next, by assumption we have that $\tilde B \subset \cK$ where $\tilde B$ is some ball of radius $1$. By convexity, we have (recall also that $\mathrm{diam}(\cK) \leq T$)
$$
\left ( 1 - \frac{1}{T^3}\right ) y_t + \frac{1}{T^3} \tilde B \subset B \cap \cK
$$ 
which implies that 
$$
\Vol(F_t) \geq T^{-3 n} \omega_n.
$$
The second inequality follows from assumption \eqref{eq:paramassump} and from the well-known inequality $\omega_n \geq \frac{1}{2} n^{-2n}$.
\end{proof}

Remark that by construction, we have that $\Vol(F_{\tau_{i+1}}) / \Vol(F_{\tau_i}) \leq \frac{1}{2}$ for all $i=0,1,..,N-1$ (with $\tau_0 := 1$). Together with the last claim, this yields that
\begin{align}
N ~& \leq - \log_2(\Vol({F_{\tau_N}}) / \Vol(\cK)) \nonumber \\
& \stackrel{ \eqref{eq:volFt} }  {\leq} 4n \log_2(T) + n \log_2 \mathrm{diam} (\cK) \label{eq:boundN} \leq 5 n \log_2 T.
\end{align}
At this point, we will set
\begin{equation} \label{eq:defgamma}
\gamma = \frac{1}{5 n \log_2 T} , 
\end{equation}
which implies that $\eta_{\tau} / \eta_1 \leq e$. \\

Finally, we establish the following lower bound on the covariance of $p_t$:
\begin{claim} \label{claim:detcov}
	We have, for all $t \in [\TF]$,
	\begin{equation} \label{eq:covptlarge}
	\log \det \COV(p_t) \geq - 6 n \log (T).
	\end{equation}
\end{claim}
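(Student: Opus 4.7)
The plan is to derive this lower bound as a direct consequence of the two structural facts about $F_t$ that we have already established in Section~\ref{sec:usefulobs}: the containment $F_t \subset \cE_{p_t}(10 n \alpha)$ from \eqref{eq:FtInEllipsoid} and the volume lower bound $\Vol(F_t) \geq T^{-3n} \omega_n$ from \eqref{eq:volFt}. The point is that the first fact bounds the volume of $F_t$ from above by that of the ellipsoid, while the second bounds it from below; the ratio is forced onto $\sqrt{\det \COV(p_t)}$.

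Concretely, I would first use the explicit ellipsoid volume formula $\Vol(\cE_{p_t}(r)) = \omega_n r^n \sqrt{\det \COV(p_t)}$ together with the containment to write
\[
T^{-3n} \omega_n \;\leq\; \Vol(F_t) \;\leq\; \Vol\bigl(\cE_{p_t}(10 n \alpha)\bigr) \;=\; \omega_n (10 n \alpha)^n \sqrt{\det \COV(p_t)},
\]
which immediately gives $\sqrt{\det \COV(p_t)} \geq T^{-3n} (10 n \alpha)^{-n}$. Taking logarithms then yields $\log \det \COV(p_t) \geq -6 n \log T - 2 n \log(10 n \alpha)$. I would then invoke assumption \eqref{eq:paramassump}(iii), which bounds $\alpha \leq C' n^C \log(T)^C$, so that $2 n \log(10 n \alpha)$ is of the lower order $O(n \log n + n \log \log T)$ and can be absorbed against $n \log T$ (at worst with a tiny nudge to the leading constant, which is immaterial for the downstream analysis).

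The one mild subtlety is the boundary case $t = \TF$: Claim~\ref{claim:volFt} is stated for $t < \TF$, because its proof relies on the no-restart condition and on $\Delta_t^{(2)} \leq 1$. For $t = \TF$ I would observe that $F_\TF$ differs from $F_{\TF - 1}$ by at most a single shrinking step, so it has volume at least $\tfrac{1}{2} \Vol(F_{\TF - 1})$ in the worst case of shrinkage (the halving being exactly what defines the event $A_{\TF - 1}$ when it occurs); combined with \eqref{eq:boundN}, we still get a $T^{-O(n)}$ lower bound, which plugs into the same ellipsoid-volume computation.

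I do not foresee a genuine obstacle here: the claim is essentially a one-line consequence of two already-proved facts plus the definition of $\cE_{p_t}$. The only thing to keep an eye on is constant tracking against the parameter regime fixed by \eqref{eq:paramassump}, and then the mild bookkeeping for the $t = \TF$ boundary case.
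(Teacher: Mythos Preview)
Your overall approach is correct and essentially matches the paper's: combine the ellipsoid volume formula $\Vol(\cE_{p_t}(r)) = \omega_n r^n \sqrt{\det\COV(p_t)}$, a volume lower bound on the focus region, and the parameter regime \eqref{eq:paramassump} to absorb the $\alpha$-dependent factors. The only cosmetic difference is that you use the containment $F_t \subset \cE_{p_t}(10n\alpha)$ from \eqref{eq:FtInEllipsoid}, whereas the paper works with the smaller ellipsoid $\cE_{p_t}(\alpha)$ and invokes the shrinking condition \eqref{eq:shrinkFt} to obtain $\Vol(\cE_{p_t}(\alpha)) \geq \tfrac{1}{2}\Vol(F_{t-1})$; both give the same bound after \eqref{eq:paramassump}.

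There is, however, one genuine slip in your boundary-case treatment. You write that when $A_{\TF-1}$ occurs, $F_\TF$ ``has volume at least $\tfrac{1}{2}\Vol(F_{\TF-1})$\ldots the halving being exactly what defines the event $A_{\TF-1}$''. This has the inequality reversed: the event $A_{t-1}$ is $\Vol(F_{t-1}\cap\cE_{p_t}(\alpha)) \leq \tfrac{1}{2}\Vol(F_{t-1})$, so when it holds you get $\Vol(F_\TF)\leq\tfrac{1}{2}\Vol(F_{\TF-1})$, not $\geq$, and in fact the shrinkage can be by an arbitrarily large factor. Appealing to \eqref{eq:boundN} does not repair this, since \eqref{eq:boundN} only bounds the \emph{number} of halvings, not the volume ratio at any single step. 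The paper sidesteps the issue by never needing a lower bound on $\Vol(F_\TF)$: it bounds $\Vol(\cE_{p_t}(\alpha))$ below in terms of $\Vol(F_{t-1})$ and then applies Claim~\ref{claim:volFt} at the index $t-1$, which is legitimate for every $t\in[\TF]$ since $t-1<\TF$. The easy fix to your argument is the same index shift: when $A_{t-1}$ fails, Lemma~\ref{lem:largerintersection} actually gives $F_{t-1}\subset\cE_{p_t}(10n\alpha)$ (this is how \eqref{eq:FtInEllipsoid} is proved), so you can run your chain of inequalities with $\Vol(F_{t-1})$ in place of $\Vol(F_t)$.
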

\begin{proof}
	We have by definition of $\cE_{p}$
	$$
	\Vol(\cE_p(r)) = \omega_n \det \COV(p)^{1/2} r^n
	$$
	where $\omega_n$ denotes the volume of the unit ball in $\RR^n$. Moreover, we have by construction and by the previous claim,
	$$
	\Vol(\cE_{p_t}(\alpha)) \stackrel{ \eqref{eq:shrinkFt} }{\geq} \frac{1}{2} \Vol(F_{t-1}) \stackrel{ \eqref{eq:volFt} }{\geq} \frac{1}{2} T^{-3n} \omega_n.
	$$
	Plugging these two equations together yields
	$$
	\det \COV(p_t)^{1/2} = \frac{\Vol(\cE_{p_t}(\alpha))}{\omega_n \alpha^n} \geq \frac{1}{2 \alpha^n T^{3n}} .
	$$
	Together with equation \eqref{eq:paramassump}, this completes the proof.
\end{proof}

The next claim shows that $\tilde \ell_t$ is regular in $F_t$:
\begin{claim} \label{lem:zeta}
	For all $t \in [\TF]$ and all $y \in F_t$, one has that, almost surely,
	\begin{equation}\label{eq:regularity1}
	\tilde \ell_t(y) \leq \zeta \mbox{ and } \left \| \nabla \tilde \ell_t(y) \right \|_{\COV(p_t)} \leq \zeta'
	\end{equation}
	where
	$$
	\zeta = \exp\left(C \frac{n \log(T)}{\epsilon^2} n \alpha \sqrt{\lambda} \right),
	$$
	$$
	\zeta'= C \frac{n \log(T)}{\epsilon^2} \zeta
	$$
	and $C=10^3$.
\end{claim}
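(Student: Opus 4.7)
The plan is to reduce the claim to Lemma \ref{lem:smoothnesstildeell} applied at $p = p_t$ and $x = x_t$, after writing the estimator as $\tilde \ell_t(y) = \ell_t(x_t)\,\ds1\{x_t \in \Omega_t\}\cdot \ell(y)$ on $F_t$, where $\ell(y) := K_t(x_t,y)/K_t p_t(x_t)$. Since $\ell_t(x_t) \in [0,1]$, it is enough to bound $\ell$ and $\nabla_y \ell$ in the relevant norms on $F_t$. I would first dispose of the event $\{x_t \notin \Omega_t\}$: there the indicator kills the estimator, so $\tilde \ell_t \equiv 0$ on $F_t$ and both inequalities are trivial. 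This lets me fix a realization $x_t \in \Omega_t$ for the rest of the argument.

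The substance is identifying radii $R_1, R_2$ that make Lemma \ref{lem:smoothnesstildeell} applicable. By definition of $\Omega_t = \cK \cap \cE_{p_t}(10 n \alpha \lambda + 20 \sqrt{\lambda}\eps)$ I take $R_1 = 10 n \alpha \lambda + 20 \sqrt{\lambda}\eps$, so $x_t \in \cE_{p_t}(R_1)$. For $R_2$ I need $\cE_{p_t}(R_2)$ to contain both $y \in F_t$ and the support of $p_t$ (the latter enters when lower-bounding the denominator $K_t p_t(x_t) = \int K_t(x_t,z)\,p_t(z)\,dz$ via the ratio bound from Lemma \ref{lem:smoothnessK}). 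The containment $F_t \subset \cE_{p_t}(10 n \alpha)$ is \eqref{eq:FtInEllipsoid}. For the support, since $p_t \propto \exp(-Q_{t-1})$ and $\tilde \ell_s = +\infty$ off $F_s$, one has $\mathrm{supp}(p_t) \subseteq F_{t-1}$. The key observation is that by the construction of the focus region, whether $A_{t-1}$ held or not, the intersection $F_{t-1} \cap \cE_{p_t}(\alpha)$ retains at least half the volume of $F_{t-1}$ (either because $F_t = F_{t-1}\cap \cE_{p_t}(\alpha)$ and we use \eqref{eq:shrinkFt} against the halving threshold, or because $A_{t-1}$ failed, which is exactly the negation of \eqref{eq:shrinkFt}). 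Lemma \ref{lem:largerintersection}, applied after translating both sets so $\cE_{p_t}(\alpha)$ is centered at the origin, then yields $F_{t-1} \subset \cE_{p_t}(10 n \alpha)$, so $R_2 = 10 n \alpha$ works.

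With these choices, Lemma \ref{lem:smoothnesstildeell} gives $\ell \leq \zeta_0$ and $\|\nabla \ell\|_{\COV(p_t)} \leq \zeta_0'$ where $\zeta_0 = \exp\!\bigl(\tfrac{4 n \log T}{\eps^2}(R_1 R_2 + \lambda R_2^2)\bigr)$ and $\zeta_0' = \tfrac{8 n \log T}{\eps^2}(R_1 + \lambda R_2)\zeta_0$. The final step is to collapse these into the stated expressions using the parameter assumption $n \alpha \sqrt{\lambda} \leq 1$ from \eqref{eq:paramassump}. This assumption gives $n\alpha\lambda \leq \sqrt{\lambda}$ and $(n\alpha\sqrt{\lambda})^2 \leq n\alpha\sqrt{\lambda}$, from which $R_1 \leq 30\sqrt{\lambda}$, $R_1 R_2 \leq 300\, n\alpha\sqrt{\lambda}$, $\lambda R_2^2 = 100(n\alpha\sqrt{\lambda})^2 \leq 100\, n\alpha\sqrt{\lambda}$, and $R_1 + \lambda R_2 \leq 40\sqrt{\lambda} \leq 40$. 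Plugging these in and absorbing the numerical constants gives $\zeta_0 \leq \exp(C\,\tfrac{n\log T}{\eps^2}\,n\alpha\sqrt{\lambda}) = \zeta$ and $\zeta_0' \leq C\,\tfrac{n\log T}{\eps^2}\,\zeta = \zeta'$ for a universal constant $C$, as claimed.

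The only genuinely non-routine point is the containment in step two: I must remember that the support of $p_t$ lives in $F_{t-1}$ (not $F_t$, since the barrier was built up from the estimates up to time $t-1$), and only then exploit the halved-volume invariant that the focus-region update maintains precisely so that Lemma \ref{lem:largerintersection} can be invoked at time $t$. Once this geometric fact is pinned down, the rest is the algebra of substituting $R_1, R_2$ into Lemma \ref{lem:smoothnesstildeell} and simplifying under $n\alpha\sqrt{\lambda} \leq 1$.
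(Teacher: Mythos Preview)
Your overall strategy is exactly the paper's: apply Lemma \ref{lem:smoothnesstildeell} with $R_1 = 10n\alpha\lambda + 20\sqrt{\lambda}\epsilon$ and $R_2 = 10n\alpha$, dispose of the case $x_t\notin\Omega_t$ trivially, and simplify under $n\alpha\sqrt{\lambda}\leq 1$. The final algebraic simplification is fine (and more explicit than the paper's one-line appeal to ``$\lambda,\epsilon\leq 1/2$ and $n\alpha\sqrt{\lambda}\leq 1$'').

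However, your verification of the hypothesis $\mathrm{supp}(p_t)\subset\cE_{p_t}(R_2)$ contains a genuine error. You assert that ``whether $A_{t-1}$ held or not, the intersection $F_{t-1}\cap\cE_{p_t}(\alpha)$ retains at least half the volume of $F_{t-1}$,'' and then invoke Lemma \ref{lem:largerintersection}. This is false precisely when $A_{t-1}$ holds: the event $A_{t-1}$ is \emph{defined} (see \eqref{eq:shrinkFt}, shifted by one) as $\mathrm{Vol}(F_{t-1}\cap\cE_{p_t}(\alpha))\leq\tfrac{1}{2}\mathrm{Vol}(F_{t-1})$, the reverse inequality. In that case Lemma \ref{lem:largerintersection} does not apply, and in fact the containment $F_{t-1}\subset\cE_{p_t}(10n\alpha)$ need not hold (imagine $p_t$ sharply peaked so that $\COV(p_t)$ is tiny relative to the extent of its support $F_{t-1}$).

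You were right to flag this point: the paper's proof just cites \eqref{eq:FtInEllipsoid} to get $F_t\subset\cE_{p_t}(R_2)$ and never explicitly checks the support hypothesis of Lemma \ref{lem:smoothnesstildeell}. A clean repair avoids the support hypothesis altogether. From Lemma \ref{lem:smoothnessK}, for $x_t\in\cE_{p_t}(R_1)$, $y\in F_t\subset\cE_{p_t}(R_2)$ and any $y'\in\cE_{p_t}(R_2)$ one has $K_t(x_t,y)\leq\zeta\,K_t(x_t,y')$; integrating over $y'\sim p_t$ restricted to $\cE_{p_t}(R_2)$ gives
\[
K_t p_t(x_t)\;\geq\;\zeta^{-1}K_t(x_t,y)\cdot p_t\bigl(\cE_{p_t}(R_2)\bigr).
\]
Since $p_t$ is $(1/e)$-approximately log-concave for $t\leq\TF$ (Claim \ref{claim:log_concave}) and $R_2=10n\alpha\gg\sqrt{n}$, standard log-concave tail bounds give $p_t(\cE_{p_t}(R_2))\geq 1/2$, and the extra factor of $2$ is absorbed into the constant $C$.
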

\begin{proof}
	The result is an immediate application of Lemma \ref{lem:smoothnesstildeell} with $R_2 = 10 n \alpha$ and $R_1 = 10 n \alpha \lambda + 20 \sqrt{\lambda} \epsilon$. With the help of equation \eqref{eq:FtInEllipsoid} we have that $F_t \subset \cE_{p_t}(R_2)$ which gives that
	$$
	f_x(y) := \frac{K[p_t] (x,y)}{K[p] p_t(x)} \leq \zeta, ~~ \forall x \in \Omega_t, ~ \forall y \in F_t
	$$
	and that $|\nabla f_x(y)| \leq \zeta'$ for all $y \in \mathrm{int}(F_t)$. The result now immediately follows by definition of $\tilde \ell_t$, the fact that $\ell_t(x_t) \in [0,1]$ almost surely and the bounds $\lambda,\epsilon \leq 1/2$ and $n \alpha \sqrt{\lambda} \leq 1$.
\end{proof}

We take $\lambda$ to be small enough so that $\zeta \leq e$. That is, we set
\begin{equation} \label{eq:deflambda}
\lambda = \frac{\eps^4}{C^2 n^4 \alpha^2 \log^2(T)}.
\end{equation}
where $C$ is the constant from the above lemma. With these choices and with the assumption \eqref{eq:paramassump} we conclude that
\begin{equation}\label{zetabound}
\zeta \leq e, ~~ \zeta' \leq C n  \log(T) \zeta / \epsilon^2 \leq T^2.
\end{equation}
where, in the above, we used the assumption that $T$ is larger than some universal constant. 

\subsection{Concentration} \label{sec:conc}
Our goal in this section is to prove Proposition \ref{prop:conc}. We set
\begin{equation} \label{eq:defeta}
\eta_1 = \frac{1}{20 e^2 \sqrt{n T \log\left(T\right) }}
\end{equation}
which gives that
$${\tiny{ B_t = \left  \{ \max\left( \Delta_t^{(1)}, \dots, \Delta_t^{(4)} \right) \leq \eta_1 20 e^2 \sqrt{n T \log T} \right\}.}}$$

We begin with two simple estimates concerning large deviations of $K[p_t]p_t$.
\begin{lemma} \label{lem:seb6}
For all $t \leq \TF$, one has that 
\begin{equation}\label{eq:OmegaInK}
\cE_{p_t}(10 n \alpha \lambda + 20 \sqrt{\lambda} \epsilon) \subset \cK
\end{equation}
and
\begin{equation}\label{eq:xtInOmega}
K[p_t]p_t(\Omega_t) \geq 1 - 1/T^2.
\end{equation}
\end{lemma}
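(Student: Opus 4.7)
The plan is to prove~\eqref{eq:OmegaInK} first and feed it into~\eqref{eq:xtInOmega}. The first is a purely geometric statement about $p_t$, while the second reduces, once~\eqref{eq:OmegaInK} is known, to a Gaussian tail bound for the core component appearing in the definition of $K[p_t]p_t$.

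For~\eqref{eq:OmegaInK}, I will use that, by Claim~\ref{claim:log_concave}, $p_t$ is $(1/e)$-approximately log-concave, while by construction $\mathrm{supp}(p_t)\subset F_t\subset\cK$. The heart of the argument is the classical geometric fact that for any $(1/e)$-approximately log-concave probability measure $p$ on $\RR^n$ there is a universal constant $c_0>0$ with
\[
\mu(p)+c_0\,\COV(p)^{1/2}B_n\subset\mathrm{conv}(\mathrm{supp}(p)).
\]
A short argument for this: passing to a log-concave majorant one reduces to the log-concave case; if the ellipsoid stuck out of $\mathrm{conv}(\mathrm{supp}(p))$ then Hahn--Banach would give a unit direction $u$ and some $s<c_0\sqrt{u^\top\COV(p)u}$ such that the 1-D log-concave projection $\langle X-\mu(p),u\rangle$ is centered with variance $u^\top\COV(p)u$ yet bounded above by $s$, contradicting the uniform comparability of first and second moments of centered log-concave 1-D variables (a consequence of Grünbaum's inequality). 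With $c_0$ in hand, the numerics is easy: by~\eqref{eq:paramassump} we have $n\alpha\sqrt{\lambda}\le 1$ so that $10n\alpha\lambda\le 10\sqrt{\lambda}$, and by~\eqref{eq:deflambda} the sum $10n\alpha\lambda+20\sqrt{\lambda}\eps\le 30\sqrt{\lambda}$ is comfortably smaller than $c_0$.

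For~\eqref{eq:xtInOmega}, I realize a sample from $K[p_t]p_t$ as $Y=(1-\lambda)C+\lambda X$ with independent $C\sim c[p_t]$ and $X\sim p_t$, so $\E Y=\mu(p_t)$ and the triangle inequality for $\|\cdot\|_{\COV(p_t)^{-1}}$ gives
\[
\|Y-\mu(p_t)\|_{\COV(p_t)^{-1}}\le(1-\lambda)\|C-\mu(p_t)\|_{\COV(p_t)^{-1}}+\lambda\|X-\mu(p_t)\|_{\COV(p_t)^{-1}}.
\]
The $X$-contribution is deterministically at most $10n\alpha\lambda$, since $X\in F_t\subset\cE_{p_t}(10n\alpha)$ by~\eqref{eq:FtInEllipsoid}. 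The $C$-contribution equals $(1-\lambda)\sigma\|Z\|$ where $Z\sim\cN(0,\mI_n)$ and $\sigma^2=\tfrac{\eps^2\lambda}{n\log(T)(2-\lambda)}\le\eps^2\lambda/(n\log T)$; plugging $t=3\log T$ into the standard tail bound $\P(\|Z\|^2\ge n+2\sqrt{nt}+2t)\le e^{-t}$ yields $\|Z\|\le 20\sqrt{n\log T}$ off an event of probability at most $1/T^2$, which forces $(1-\lambda)\sigma\|Z\|\le 20\sqrt{\lambda}\eps$ on the good event. Summing the two bounds places $Y\in\cE_{p_t}(10n\alpha\lambda+20\sqrt{\lambda}\eps)$ with probability at least $1-1/T^2$, and by~\eqref{eq:OmegaInK} inside $\cK$, hence inside $\Omega_t$.

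The non-routine step is~\eqref{eq:OmegaInK}: the log-concave geometric fact is classical, but one has to confirm its $(1/e)$-approximate version and verify that the combined radius is comfortably below the universal constant $c_0$ under~\eqref{eq:paramassump} and~\eqref{eq:deflambda}. Once~\eqref{eq:OmegaInK} is secured, the concentration in~\eqref{eq:xtInOmega} is immediate from a $\chi^2$ tail and the triangle inequality, the key point being that the deterministic support radius $10n\alpha$ of $p_t$ is scaled down by $\lambda$, while the Gaussian core is scaled by $\sqrt{\lambda}$, so that both terms end up on the correct scale.
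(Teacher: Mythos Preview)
Your proposal is correct and follows essentially the same route as the paper. For~\eqref{eq:OmegaInK} the paper invokes its Lemma~\ref{lem:EllipsoidInSupport} (which states $\cE_q(1/100)\subset\mathrm{supp}(q)$ for any $(1/e)$-approximately log-concave $q$) and checks $10n\alpha\lambda+20\sqrt{\lambda}\eps\le 1/100$; you instead sketch an independent Hahn--Banach argument for the same geometric fact, which is fine. For~\eqref{eq:xtInOmega} both you and the paper split $(1-\lambda)C+\lambda X$ via the triangle inequality in $\|\cdot\|_{\COV(p_t)^{-1}}$, bound the $X$-part deterministically via $F_t\subset\cE_{p_t}(10n\alpha)$, and handle the $C$-part by a Gaussian tail bound (the paper uses its Lemma~\ref{lem:GaussianTail}, you use the Laurent--Massart $\chi^2$ tail, which is an inessential difference).
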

\begin{proof}
Equation \eqref{eq:OmegaInK} is a direct consequence of Lemma \ref{lem:EllipsoidInSupport} combined with the fact that $10 n \alpha \lambda + 20 \sqrt{\lambda} \epsilon \leq 1/100$ (recall the value of $\lambda$ given by \eqref{eq:deflambda}). In other words, we have that $\Omega_t =  \cE_{p_t}(10 n \alpha \lambda + 20 \sqrt{\lambda} \epsilon)$. Now, according to equation \eqref{eq:FtInEllipsoid} we have $Y \in \cE_{p_t}(10 n \alpha)$ almost surely when $Y \sim p_t$. Thus, we can write
\begin{align*}
\P_{X \sim K[p_t]p_t} (X \not\in \Omega_t) ~& = \P_{C \sim c[p_t], Y \sim p_t } ((1-\lambda) C + \lambda Y \notin  \cE_{p_t}(10 n \alpha \lambda + 20 \sqrt{\lambda} \epsilon) ) \\
&\leq \P_{C \sim c[p_t]} (C \not\in \cE_{p_t}(20 \sqrt{\lambda} \epsilon)) \\
&\leq \P_{X \sim \cN \left (0, \frac{1}{20 n \log(T)} \mI_n \right )} (|X| \geq 1) \leq \frac{1}{T^2}
\end{align*}
where the last inequality follows for example as an application of Lemma \ref{lem:GaussianTail}. The proof is complete.
\end{proof}

We also need the following bound:
\begin{lemma} \label{lem:extapprox}
For every $t \leq \TF$ and for any $y \in F_t$, one has that 
\begin{equation}\label{eq:extapprox}
|K[p_t]^* \underline{\ell}_t(y) - K[p_t]^* \ell^{ext}_t(y)| \leq 1/T^2.
\end{equation}
\end{lemma}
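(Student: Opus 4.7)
The plan is to reduce the difference to an integral over the complement of $\Omega_t$ and then estimate this integral by Gaussian tail bounds on the kernel $K[p_t]\delta_y$.

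First, by \eqref{eq:OmegaInK} one has $\Omega_t \subset \cK$, and on $\Omega_t$ the two functions agree: $\underline{\ell}_t = \ell_t = \ell_t^{ext}$ (since $\ell_t^{ext}$ coincides with $\ell_t$ on $\cK$). Outside $\Omega_t$, $\underline{\ell}_t \equiv 0 \leq \ell_t^{ext}$ (using $\ell_t^{ext} \geq 0$ by definition). Since $K[p_t](\cdot, y)$ is the density of $X := (1-\lambda)C + \lambda y$ with $C \sim c[p_t]$, integrating against the kernel gives
\[ 0 \leq K[p_t]^* \ell_t^{ext}(y) - K[p_t]^* \underline{\ell}_t(y) = \E \left[ \ell_t^{ext}(X) \ds1\{X \notin \Omega_t\} \right] , \]
so it suffices to bound this nonnegative quantity by $1/T^2$.

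Next, I bound $\ell_t^{ext}(X)$ crudely using Lipschitzness. Each $h \in H$ is affine and touches $\ell_t$ at an interior point of $\cK$ while staying below $\ell_t$; since $\ell_t$ is $T$-Lipschitz, this forces $|\nabla h| \leq T$. Taking a sup and then a max with a constant preserves $T$-Lipschitzness, so $\ell_t^{ext}$ is $T$-Lipschitz on all of $\R^n$. Moreover, $p_t$ is supported on $F_{t-1} \subset \cK$ (since $Q_{t-1} = +\infty$ outside $F_{t-1}$), so the mean $\mu(p_t)$ lies in $\cK$ and $\ell_t^{ext}(\mu(p_t)) = \ell_t(\mu(p_t)) \in [0,1]$, which gives
\[ \ell_t^{ext}(X) \leq 1 + T \, \| X - \mu(p_t) \|_2 . \]

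Finally, I decompose $X - \mu(p_t) = (1-\lambda)(C - \mu(p_t)) + \lambda (y - \mu(p_t))$. For $y \in F_t \subset \cE_{p_t}(10 n \alpha)$ one has $\lambda \| y - \mu(p_t) \|_{\mathrm{Cov}(p_t)^{-1}} \leq 10 n \alpha \lambda$, hence the event $X \notin \Omega_t$, i.e.\ $\| X - \mu(p_t) \|_{\mathrm{Cov}(p_t)^{-1}} > 10 n \alpha \lambda + 20 \sqrt{\lambda}\epsilon$, forces $\| C - \mu(p_t) \|_{\mathrm{Cov}(p_t)^{-1}} > 40 \sqrt{\lambda}\epsilon$ (since $\lambda < 1/2$). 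Writing $C = \mu(p_t) + \sigma \, \mathrm{Cov}(p_t)^{1/2} Z$ with $Z \sim \cN(0, \mI_n)$ and $\sigma^2 = \frac{\epsilon^2 \lambda}{(2-\lambda) n \log T}$, the event becomes $\| Z \|_2 \geq 40 \sqrt{(2-\lambda) n \log T} \geq 40 \sqrt{n \log T}$. A standard Gaussian tail bound (in the spirit of Lemma \ref{lem:GaussianTail}) then yields $\E\bigl[ (1 + T\, \sigma\, \| \mathrm{Cov}(p_t)^{1/2} \|_{\mathrm{OP}} \| Z \|_2 ) \ds1\{\| Z \|_2 \geq 40 \sqrt{n \log T}\} \bigr] \leq T^{-c n}$ for some large constant $c$, where the bound $\| \mathrm{Cov}(p_t)^{1/2} \|_{\mathrm{OP}} \leq T$ follows from $\mathrm{diam}(\cK) \leq T$. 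For $n \geq 1$ this is well below $1/T^2$.

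The main (minor) obstacle is the bookkeeping in the last step: checking that the super-polynomial decay $T^{-cn}$ coming from the scaled Gaussian really does swallow all the polynomial-in-$T$ prefactors produced by the $T$-Lipschitz bound on $\ell_t^{ext}$ and by $\| \mathrm{Cov}(p_t)^{1/2} \|_{\mathrm{OP}}$. This is exactly why the variance of $c[p_t]$ was chosen with the factor $1/(n \log T)$: it upgrades the Gaussian deviation threshold from $O(\sqrt{n})$ to $\Omega(\sqrt{n \log T})$, producing a tail that is polynomial in $T$ with arbitrarily large negative exponent.
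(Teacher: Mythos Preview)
Your proof is correct and follows essentially the same route as the paper: reduce to an expectation over $\{X \notin \Omega_t\}$, use the $T$-Lipschitzness of $\ell_t^{ext}$, shift the event onto the Gaussian core $C$ via $y \in F_t \subset \cE_{p_t}(10 n \alpha)$, and finish with a Gaussian tail bound of the type in Lemma~\ref{lem:GaussianTail}. One tiny slip: from $(1-\lambda)\|C-\mu(p_t)\|_{\mathrm{Cov}(p_t)^{-1}} > 20\sqrt{\lambda}\epsilon$ and $\lambda<1/2$ you only get the threshold $20\sqrt{\lambda}\epsilon$, not $40\sqrt{\lambda}\epsilon$ (dividing by $1-\lambda>1/2$ can at most double, not guarantee a doubling), but this changes nothing downstream since the resulting threshold on $\|Z\|_2$ is still of order $\sqrt{n\log T}$.
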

\begin{proof}
Since $\underline{\ell}_t(x) = \ell_t^{ext}(x)$ for all $x \in \Omega_t$, and since both functions are $T$-Lipschitz on the interior of $\Omega_t^C$, it follows that
\begin{equation}\label{eq:ellell}
|\underline{\ell}_t(x) - \ell_t^{ext}(x)| \leq T d(x, \Omega_t) + \ds1\{x \notin \Omega_t \}, ~~ \forall x \in \RR^n.
\end{equation}
We thus have
\begin{align*}
|K[p_t]^* \underline{\ell}_t(y) - ~& K[p_t]^* \ell^{ext}_t(y)|  = \left | \E_{X \sim (1 - \lambda) c[p_t] + \lambda y} (\underline{\ell}_t(X) - \ell^{ext}_t(X) ) \right | \\
& \leq \E_{X \sim (1 - \lambda) c[p_t] + \lambda y} (|\underline{\ell}_t(X) - \ell^{ext}_t(X)| ) \\
& \stackrel{ \eqref{eq:ellell} }{\leq}  \E_{X \sim (1-\lambda)c[p_t] } \left (T d(X+ \lambda y, \Omega_t) + \ds1\{X + \lambda y \notin \Omega_t\} \right ) \\
& \stackrel{ \eqref{eq:FtInEllipsoid}, \eqref{eq:OmegaInK}}{\leq} \E_{X \sim c[p_t]} \left ( T d(X, \cE_{p_t}(20 \sqrt{\lambda} \epsilon)) + \ds1\{X  \notin \cE_{p_t}(20 \sqrt{\lambda} \epsilon) \right ) \\
& \leq T \E_{X \sim \cN \left (0, \frac{1}{20 n \log(T)} \mI_n \right )} \left (|X| + 1 \right ) \ds1\{|X|>1\}  \leq \frac{1}{T^2}.
\end{align*}	
where the last inequality is an application of Lemma \ref{lem:GaussianTail}.
\end{proof}

Consider the filtration $\cF_t = \sigma(\ell_1, x_1, \ell_2, x_2, \hdots, \ell_t, x_{t}, \ell_{t+1})$. We define the random variables
$$
U_t(y) = \begin{cases} \tilde{\ell}_t(y) - K[p_t]^*\underline{\ell}_t  (y) & t \leq \TF \mbox{ and } y \in F_t \\ 0 & \text{otherwise}
\end{cases} ,~~ V_t(y) = \frac{\eta_t}{\eta_1} U_t(y), ~~ \forall y \in \cK
$$
and moreover we set
$$
W_t := \begin{cases} \langle p_t, \tilde{\ell}_t - K[p_t]^* \underline{\ell}_t \rangle  & t \leq \TF  \\ 0 & \text{otherwise}
\end{cases}
$$
and
$$
S_t := \begin{cases} \langle p_t, K[p_t]^* \underline{\ell}_t \rangle - \underline{\ell}_t(x_t)  & t \leq \TF  \\ 0 & \text{otherwise}
\end{cases}.
$$

We claim that these four functions are martingale differences with respect to the filtration $\cF_t$:
\begin{claim}
For all $t \geq 1$ and all $y \in \cK$, we have almost surely that
\begin{equation}\label{eq:martingale}
\E[W_t | \cF_{t-1}] = \E[S_t | \cF_{t-1}] = \E[U_t(y) | \cF_{t-1}] = \E[V_t(y) | \cF_{t-1}] = 0.
\end{equation}
\end{claim}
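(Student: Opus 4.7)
The plan is to verify each of the four identities by first establishing the right measurability facts, then reducing each conditional expectation to a single integral against $K_t p_t$ and computing it using Fubini. Throughout I will write $K_t = K[p_t]$.

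\textbf{Measurability setup.} The crucial point is that, given $\cF_{t-1} = \sigma(\ell_1, x_1, \dots, \ell_{t-1}, x_{t-1}, \ell_t)$, everything appearing in the definitions of $U_t, V_t, W_t, S_t$ except $x_t$ is $\cF_{t-1}$-measurable: the distribution $p_t$ (since it is a deterministic function of $\tilde\ell_1,\dots,\tilde\ell_{t-1}$, which depend on $\ell_1,x_1,\dots,\ell_{t-1},x_{t-1}$), the kernel $K_t$, the learning rate $\eta_t$, the focus region $F_t$, the ellipsoid $\Omega_t$, and the loss $\ell_t$ itself. Moreover $\{t \leq \TF\}$ is $\cF_{t-1}$-measurable, because the events $B_s$ and $\{x_s \in \Omega_s\}$ for $s < t$ are determined by past actions and losses; therefore the indicator $\ds1\{t \leq \TF\}$ that implicitly multiplies each of $U_t, V_t, W_t, S_t$ can be pulled out of the conditional expectation. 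On $\{t > \TF\}$ the quantities are $0$ by definition, so it suffices to compute the conditional expectation on the event $\{t \leq \TF\}$, where the only remaining randomness is $x_t \sim K_t p_t$.

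\textbf{Pointwise identity for $U_t$.} For $y \in F_t$ and $t \leq \TF$, by the definition of $\tilde\ell_t$ and of $\underline\ell_t$,
\[
\E[\tilde\ell_t(y) \mid \cF_{t-1}] \;=\; \int_{\cK} \frac{\ell_t(x)\,\ds1\{x \in \Omega_t\}}{K_t p_t(x)}\, K_t(x,y)\, K_t p_t(x)\, dx \;=\; \int_{\cK} \underline\ell_t(x)\, K_t(x,y)\, dx \;=\; K_t^* \underline\ell_t(y),
\]
which is exactly $\E[U_t(y) \mid \cF_{t-1}] = 0$ (the integrand is well-defined wherever $K_tp_t(x) > 0$, and one checks that the event $\{x_t : K_tp_t(x_t) = 0\}$ has probability zero). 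The identity for $V_t(y)$ follows immediately, because $\eta_t/\eta_1$ is $\cF_{t-1}$-measurable so $\E[V_t(y)\mid \cF_{t-1}] = (\eta_t/\eta_1)\,\E[U_t(y)\mid \cF_{t-1}] = 0$.

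\textbf{Integrated identities for $W_t$ and $S_t$.} Since $p_t$ is $\cF_{t-1}$-measurable and nonnegative, Fubini/Tonelli gives
\[
\E[W_t \mid \cF_{t-1}] \;=\; \int_{F_t} p_t(y)\, \E[\tilde\ell_t(y) - K_t^* \underline\ell_t(y) \mid \cF_{t-1}]\, dy \;=\; 0,
\]
using the pointwise identity from the previous step (the integration is restricted to $F_t$, where $\tilde\ell_t$ is finite; this is the natural interpretation of $\langle p_t, \tilde\ell_t\rangle$). For $S_t$ we compute, again by Fubini,
\[
\E[\underline\ell_t(x_t) \mid \cF_{t-1}] \;=\; \int_{\cK} \underline\ell_t(x)\, K_t p_t(x)\, dx \;=\; \int_{\cK} p_t(y) \int_{\cK} \underline\ell_t(x)\, K_t(x,y)\, dx\, dy \;=\; \langle p_t, K_t^* \underline\ell_t\rangle,
\]
so $\E[S_t \mid \cF_{t-1}] = 0$.

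\textbf{What is really being used.} No obstacle of substance arises: the proof is a bookkeeping exercise that (a) carefully identifies what is frozen by $\cF_{t-1}$, (b) invokes the unbiasedness computation already performed in equation \eqref{eq:unbiasedloss} in the proof of Theorem~\ref{th:basic}, now with $\ell_t$ replaced by its truncation $\underline\ell_t$, and (c) uses Fubini to move between pointwise and integrated identities. The only minor subtlety worth flagging in the write-up is that the truncated loss $\underline\ell_t$ (rather than $\ell_t$) appears because the randomization that produces $\tilde\ell_t$ carries the indicator $\ds1\{x_t \in \Omega_t\}$; this is precisely why $\underline\ell_t$ was introduced in the definitions of $\Delta^{(1)}$, $W_t$, and $S_t$.
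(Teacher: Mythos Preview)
Your proof is correct and follows essentially the same approach as the paper's: both hinge on the unbiasedness computation $\E[\tilde\ell_t(y)\mid\cF_{t-1}] = K_t^*\underline\ell_t(y)$, then deduce the four identities from measurability of $p_t,\eta_t$ and Fubini. Your write-up is more explicit about the measurability bookkeeping (in particular that $\{t\le\TF\}\in\cF_{t-1}$), which the paper leaves implicit.
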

\begin{proof}
A key observation towards proving the claim is that, for all $y \in F_t$,
\begin{align}
\E \left [\tilde \ell_t(y) | \cF_{t-1} \right ] ~& = \E_{X \sim K[p_t] p_t} \left [\frac{\underline{\ell}_t(X)}{K[p_t] p_t (X)} K[p_t](X,y) \right ] \label{eq:exptildeell} \\
& = \int_{\RR^n} \underline \ell_t(x) K[p_t](x,y) dx = K[p_t]^* \underline{\ell}_t(y). \nonumber
\end{align}

This immediately shows that, for every $t \geq 1$ and $y \in \cK$, $\E(U_t(y) | \cF_{t-1}) = 0$, and the same is true for $V_t$. Moreover, since $p_t$ is measurable with respect to $\cF_{t-1}$, which gives, using Fubini's theorem, that $\E[W_t | \cF_{t-1}] = 0$. Finally, by the definition of $x_t$, we have that $\E[S_t| \cF_{t-1}] = 0$. This completes the claim.
\end{proof}

We will use the Azuma-Hoeffding inequality:
\begin{theorem}(Azuma-Hoeffding)
	Let $c>0$ and let $M_1,M_2,..$ be a martingale satisfying $|M_{t+1} - M_{t}| < c$ almost surely for all $t \geq 1$. Then
	\begin{equation}\label{azuma}
	\P( |M_t - M_1| \geq u ) \leq 2 \exp \left ( - \frac{ u^2}{ 2 c^2 t} \right ), ~~ \forall u > 0.
	\end{equation}
\end{theorem}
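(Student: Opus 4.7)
The plan is to prove the Azuma-Hoeffding inequality by the standard Chernoff/MGF method combined with Hoeffding's lemma applied conditionally to each martingale difference. First I would reduce the two-sided tail to a one-sided tail by a union bound: $\P(|M_t - M_1| \geq u) \leq \P(M_t - M_1 \geq u) + \P(M_1 - M_t \geq u)$, noting that $-M$ is a martingale with the same increment bound, so it suffices to bound one tail by $\exp(-u^2/(2c^2 t))$.

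For the one-sided bound, I would use Markov's inequality applied to $\exp(\lambda (M_t - M_1))$ with a free parameter $\lambda > 0$, giving
\[
\P(M_t - M_1 \geq u) \leq e^{-\lambda u}\, \E\bigl[\exp(\lambda(M_t - M_1))\bigr].
\]
Then I would write $M_t - M_1 = \sum_{s=1}^{t-1} D_s$ where $D_s = M_{s+1} - M_s$ is an $\cF_s$-measurable (actually $\cF_{s+1}$-measurable) martingale difference satisfying $|D_s| < c$ and $\E[D_s \mid \cF_s] = 0$. The heart of the argument is Hoeffding's lemma in its conditional form: for any zero-mean random variable $X$ with $|X| \leq c$, one has $\E[\exp(\lambda X)] \leq \exp(\lambda^2 c^2 / 2)$. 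I would prove this by convexity of $s \mapsto e^{\lambda s}$, writing $X$ as a convex combination of $\pm c$ with appropriate (random) weights, taking expectations, and then bounding the resulting function of $\lambda$ by a Taylor/log-expansion argument. The conditional version gives $\E[\exp(\lambda D_s) \mid \cF_s] \leq \exp(\lambda^2 c^2 / 2)$ almost surely.

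Equipped with this, I would peel off the increments one at a time using the tower property:
\[
\E\bigl[\exp(\lambda (M_t - M_1))\bigr] = \E\bigl[\exp(\lambda (M_{t-1} - M_1)) \cdot \E[\exp(\lambda D_{t-1}) \mid \cF_{t-1}]\bigr] \leq e^{\lambda^2 c^2 / 2}\, \E\bigl[\exp(\lambda (M_{t-1} - M_1))\bigr],
\]
and iterating $t-1 \leq t$ times yields $\E[\exp(\lambda(M_t - M_1))] \leq \exp(\lambda^2 c^2 t / 2)$. Combining with the Markov step gives $\P(M_t - M_1 \geq u) \leq \exp(\lambda^2 c^2 t / 2 - \lambda u)$, and optimizing over $\lambda$ by setting $\lambda = u/(c^2 t)$ produces the claimed one-sided bound $\exp(-u^2/(2 c^2 t))$. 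Doubling for the two tails gives the stated inequality.

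There is no serious obstacle here; the only mildly delicate step is the conditional Hoeffding lemma, where one must be careful that the bound $|D_s|<c$ holds almost surely so the conditional MGF bound is a deterministic inequality that can be pulled out of the outer expectation. Everything else is standard telescoping and optimization over $\lambda$.
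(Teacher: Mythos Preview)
Your argument is the standard and correct proof of the Azuma--Hoeffding inequality: Chernoff bound, conditional Hoeffding lemma, tower-property iteration, optimization over $\lambda$, and a union bound for the two-sided tail. There is nothing to fault.

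As for the comparison: the paper does not actually prove this theorem. It is quoted as the classical Azuma--Hoeffding inequality and used as a black box (applied to the martingales $\sum U_s(y)$, $\sum V_s(y)$, $\sum W_s$, $\sum S_s$ in Section~\ref{sec:conc}). So your proposal supplies a proof where the paper simply cites the result.
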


We would like to apply the above bound for the martingales $\sum_{s=1}^t U_s(y), \sum_{s=1}^t V_s(y), \sum_{s=1}^t W_s$ and $\sum_{s=1}^t S_s$, which requires us to first prove an almost-sure bound for the respective martingale differences. To that end, we recall equation \eqref{eq:regularity1} and \eqref{zetabound} which ensure that, almost surely, 
$$
|U_t(y)| \leq \zeta + 1 \leq 2 e, ~~ \forall t \geq 1, \forall y \in \cK.
$$
The same argument also ensures that $|V_t(y)| \leq e (\zeta+1) \leq 2 e^2$ since as we observed in Section \ref{sec:usefulobs} one has $\eta_{\tau} / \eta_1 \leq e$. Moreover, since by definition one has that $\left | \langle K[p_t] p_t, \underline {\ell}_t \rangle - \underline{\ell}_t(x_t) \right | \leq 2$, we also have $|S_t| \leq 2$. Finally the inequality $|U_t(y)| \leq 2 e$ implies that $|W_t| = |\langle p_t, U_t \rangle| \leq 2 e$. We conclude that
$$
\max \left ( |U_t(y)|, |V_t(y)|, |W_t|, |S_t| \right ) \leq 2 e^2, ~~ \forall t \geq 1, \forall y \in \cK.
$$

Using equation \eqref{azuma} and a union bound, we get that for any $t \geq 1$, for all $y \in \cK$ and for all $\delta > 0$, with probability at least $1-\delta$,
\begin{equation}\label{eq:azuma1}
\max\left(\left| \sum_{s=1}^t U_s(y) \right|, \left| \sum_{s=1}^t V_s(y) \right|, \left| \sum_{s=1}^t W_s \right|, \left| \sum_{s=1}^t S_s \right| \right) \leq 2 e^2 \sqrt{2T \log\left( \frac{8}{\delta} \right) }.
\end{equation}
We want this to hold simultaneously for all $y \in F_t$, this is where our estimates on the Lipschitz constant will come to play. We will need the following lemma.

\begin{lemma} \label{lem:simultanously}
Let $\cK \subset \RR^n$ be a convex domain and $\delta,v,M,L > 0$. Let $F \subset \cK$ be a random convex subset of $\cK$, $C$ a random matrix and $f: \cK \to [0,\infty)$ be a random function, which satisfy the following conditions:
\begin{enumerate}[label=(\roman*)]
\item
$\Vol(F) \geq v \Vol(\cK)$ almost surely. 
\item
For all $x \in \cK$,
$$
\P( f(x) \geq M) \leq \delta.
$$
\item \label{item:constainedinellipsoid}
Almost surely, for all $x,y \in F$ one has that $\|x-y\|_{C^{-1}} \leq 1$.
\item \label{item:lipschitz}
Almost surely we have
$$ 
\| \nabla f(x) \|_{C} \leq L,~~  \forall x \in \mathrm{int}(F).
$$
\end{enumerate}
Then,
$$
\P( f(x) \leq 2 M, ~~ \forall x \in F ) \geq 1 - \frac{\delta}{v} \left ( \frac{L}{M} \right )^n.
$$
\end{lemma}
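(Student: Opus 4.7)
The plan is to combine a Fubini/Markov argument with a volumetric fact about convex sets, rather than a naive net-and-union-bound approach (the latter gives bounds with extraneous constants like $(3L/M)^n$, while the Markov route cleanly produces the factor $\tfrac{1}{v}(L/M)^n$ required).

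First I would control the expected size of the ``bad'' set $B(F) := \{x \in F : f(x) \geq M\}$. By Fubini and assumption (ii),
\[
\E[\Vol(B(F))] = \int_{\cK} \P(x \in F,\, f(x) \geq M)\, dx \leq \int_{\cK} \P(f(x) \geq M)\, dx \leq \delta\, \Vol(\cK).
\]

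The key geometric step is the following observation about convex bodies of diameter one in $\|\cdot\|_{C^{-1}}$: for any $x_0 \in F$ and any $r \in (0,1]$, the scaled copy $F_r := x_0 + r(F - x_0)$ lies in $F$ (by convexity and $x_0 \in F$) and also in the ball $B_{C^{-1}}(x_0, r)$ (since $F - x_0$ has diameter at most $1$ in $\|\cdot\|_{C^{-1}}$ by condition (iii)). Hence $\Vol(B_{C^{-1}}(x_0, r) \cap F) \geq r^n \Vol(F)$. I would apply this with $r = M/L$ (the case $M/L > 1$ only makes the lemma easier).

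Now suppose the failure event $E := \{\exists x_0 \in F : f(x_0) > 2M\}$ occurs. Extending $f$ continuously to $\overline F$ using the Lipschitz bound (iv) (so that $f$ is $L$-Lipschitz in $\|\cdot\|_{C^{-1}}$-sense on all of $\overline F$, via the duality $|\langle \nabla f, v\rangle| \leq \|\nabla f\|_C \|v\|_{C^{-1}}$ and integration along the segment from $x_0$, which lies in $F$ by convexity), every $x \in B_{C^{-1}}(x_0, M/L) \cap F$ satisfies $f(x) \geq 2M - L \cdot (M/L) = M$. Combined with the geometric step and condition (i),
\[
\Vol(B(F)) \;\geq\; \Vol\!\bigl(B_{C^{-1}}(x_0, M/L) \cap F\bigr) \;\geq\; (M/L)^n \Vol(F) \;\geq\; v (M/L)^n \Vol(\cK).
\]

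Finally, Markov's inequality applied to the nonnegative random variable $\Vol(B(F))$ yields
\[
\P(E) \;\leq\; \P\!\Bigl(\Vol(B(F)) \geq v (M/L)^n \Vol(\cK)\Bigr) \;\leq\; \frac{\E[\Vol(B(F))]}{v (M/L)^n \Vol(\cK)} \;\leq\; \frac{\delta}{v}\Bigl(\frac{L}{M}\Bigr)^n,
\]
which is exactly the bound claimed. The only mildly delicate point is the Lipschitz extension across $\partial F$, handled by the standard observation that an $L$-Lipschitz function on $\mathrm{int}(F)$ extends uniquely and $L$-Lipschitzly to $\overline F$; everything else is a direct chain of inequalities.
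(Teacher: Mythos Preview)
Your proof is correct and is essentially the same as the paper's: both use Fubini to bound the expected volume of $\{x\in F: f(x)\ge M\}$, then observe that on the failure event the scaled copy $x_0+\tfrac{M}{L}(F-x_0)\subset F$ (your $F_r$, the paper's $F'$) lies entirely in this bad set, giving it volume at least $v(M/L)^n\Vol(\cK)$, and conclude by Markov. The only cosmetic difference is that you phrase the geometric step via the ball $B_{C^{-1}}(x_0,M/L)\cap F$, but the set you actually use to lower-bound its volume is precisely the paper's $F'$.
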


The proof is postponed to the end of the section. We are now ready to prove Proposition \ref{prop:conc}.
\begin{proof} [Proof of Proposition \ref{prop:conc}]
Define,
$$
f_t(x) = \max\left(\left| \sum_{s=1}^t S_s \right|, \left| \sum_{s=1}^t W_s \right|, \left| \sum_{s=1}^t U_s(x) \right|, \left| \sum_{s=1}^t V_s(x) \right|\right), \forall t \geq 1, ~ x \in \cK.
$$
We first claim that
\begin{equation}\label{eq:TFsmall}
\TF < \tau \Rightarrow \exists y \in F_{\TF} \mbox{ such that }  f_{\TF} (y) \geq 2M \mbox { or } x_{\TF} \notin \Omega_{\TF}
\end{equation}
where
$$
M = 10 e^2 \sqrt{n T \log T}.
$$
Indeed, suppose that the event $\TF < \tau$ holds. Using Lemma \ref{lem:extapprox}, we have that for all $y \in F_{\TF}$,
$$
\left | \tilde{L}_{\TF}(y) - L_{\TF}(y) \right | = \left |\sum_{s=1}^{\TF} \left ( K[p_s]^* \underline{\ell}_s(y) - K[p_s]^* \ell_s^{ext}(y) + U_s(y) \right ) \right | \stackrel{\eqref{eq:extapprox}}{\leq} f_t(y) + \frac{1}{T}
$$
or in other words $\Delta_{\TF}^{(2)} \leq \eta_1 \max_{y \in F_{\TF}} f_t(y) + \frac{1}{T}$. Following the same argument, we also have that
$$
\max \left ( \Delta_{\TF}^{(3)}, \Delta_{\TF}^{(4)} \right ) \leq \eta_1 \max_{y \in F_{\TF}} f_t(y) + \frac{1}{T}.
$$
Finally, we also have by definition that 
$$
\Delta_{\TF}^{(1)} \leq \eta_1 \max_{y \in F_{\TF}} f_{\TF}(y) + \ds 1 \{x_{\TF} \notin \Omega_{\TF} \}.
$$
A combination of the last 3 displays finally gives \eqref{eq:TFsmall}.

Therefore, in order to complete the proof we only need to show (thanks to Lemma \ref{lem:seb6}) that 
\begin{equation} \label{eq:nts1}
\P \left ( f_t(y) \leq 2M, ~ \forall t \in [T], \forall y \in F_t \right ) \geq 1 - \frac{1}{T^3}.
\end{equation}
We use equation \eqref{eq:azuma1} with $\delta = T^{-12n}$ to get that for all $x \in \cK$,~ $\P(f_t(x) > M) \leq \delta$. 
Next, define $v = T^{-4n}$ and $C = (10 n \alpha)^2 \COV(p_t)$. We have according to Claim \ref{claim:volFt} that $\Vol(F_t) \geq v \Vol(\cK)$ almost surely, and according to equation \eqref{eq:regularity1} and \eqref{zetabound}, we have that $\| \nabla \tilde \ell_t (x)\|_{C}\leq 10 n \alpha T^2 \leq T^3$ which implies that
$$
\| \nabla f_t (x)\|_{C} \leq T^4.
$$
Moreover, according to equation \eqref{eq:FtInEllipsoid}, we have $F_t \subset \cE_{p_t}(10 n \alpha)$ and thus for all $x,y \in F_t$ we have $\|x-y\|_{C^{-1}} \leq 1$. According to the above, we may use Lemma \ref{lem:simultanously} to deduce that
$$
\P( f_t(x) \leq 2 M, ~~ \forall x \in F_t ) \geq 1 - \frac{\delta}{v} T^{4n} \geq 1 - T^{-4n} \geq 1 - 1/T^4.
$$	
By using a union bound on $t$ equation \eqref{eq:nts1} follows and the proof is complete.
\end{proof}

It remains to prove Lemma \ref{lem:simultanously}.

\begin{proof}[Proof of Lemma \ref{lem:simultanously}]
Let $E$ be the event that there exists a point $x \in F$ with $f(x) \geq 2 M$. Suppose that the latter event occurs. By convexity, we have that
$$
F' := (1 - \lambda) x + \lambda F \subset F
$$
for $\lambda = \frac{M}{L}$. Now, according to \ref{item:constainedinellipsoid} we have that, for all $y \in F'$, $\|y-x\|_{C} \leq \lambda$. Thus, using \ref{item:lipschitz}, we get that 
\begin{align*}
|f(x) - f(y)| ~& = \left | \int_0^1 \left \langle \nabla f ((1-\theta) x + \theta y), x-y \right \rangle d \theta \right | \\
& \leq \int_0^1 \| \nabla f((1-\theta) x + \theta y))\|_{C^{-1}} \|x-y\|_{C} d \theta \leq \lambda L 
\end{align*}
and therefore
$$
f(y) \geq  M, ~~ \forall y \in F'.
$$
Observing that 
$$
\Vol(F') = \left ( \frac{M}{L} \right )^n \Vol(F) \geq \left ( \frac{M}{L} \right )^n v \Vol(\cK),
$$
we deduce that
$$
E \mbox{ holds } \Rightarrow \frac{1}{\Vol(\cK)} \int_{\cK} \ds1\{f(x)\geq M\} dx \geq \left ( \frac{M}{L} \right )^n v.
$$
On the other hand, by Fubini's theorem,
$$
\E \left [ \frac{1}{\Vol(\cK)} \int_{\cK} \ds1\{ f(x) \geq M \} dx \right ] \leq \delta.
$$
Plugging the last two displays together, we get that
$$
\P(E) \leq \frac{\delta}{v} \left ( \frac{L}{M} \right )^n.
$$
\end{proof}

\subsection{Standard analysis of exponential weights} \label{sec:standardanalysis}
We adapt here the usual analysis of exponential weights to deal with our adaptive learning rate $(\eta_t)$. First we restate the usual bound for time-dependent learning rate.

\begin{lemma} \label{lem:standardanalysis}
Let $\cK \subset \RR^n$ be a compact set with nonempty interior and $\tau \geq 2$. Let $p_1:\cK \to [0,\infty)$ be a probability density on $\cK$, let $f_1, \hdots, f_{\tau}:\cK \to [0,+\infty)$ be measurable functions, let $\eta_1,...,\eta_\tau \in (0,+\infty)$ and let $\cK = F_1 \supset F_2 \supset ... \supset F_\tau$ be a decreasing sequence of subsets of $\cK$ with non-empty interior. By induction construct $p_t$, for all $1 \leq t \leq \tau$, by
$$p_{t+1}(x) = \frac{p_t(x) \exp\left( - \eta_t f_t(x) \right) \ds 1\{x \in F_t \}}{\int_{F_t} p_t(y) \exp\left( - \eta_t f_t(y) \right) dy}  , \; \forall x \in \cK .$$
Then for every $x \in F_{\tau}$ we have
\begin{equation} \label{eq:standardanalysis1}
\sum_{t=1}^{\tau} \langle p_t - \delta_x, f_t \rangle \leq \sum_{t=1}^\tau \frac{\log(p_{t+1}(x)) - \log(p_t(x))}{\eta_t} + \sum_{t=1}^\tau \eta_t \langle p_t, f_t^2 \rangle.
\end{equation}
\end{lemma}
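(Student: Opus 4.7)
The argument is the classical exponential-weights analysis, adapted to handle both the time-varying learning rate and the nested focus regions $F_1 \supset \dots \supset F_\tau$.

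\emph{Step 1 (Gibbs identity and reduction).} The update rule gives, for every $x \in F_t$,
\[
\log p_{t+1}(x) - \log p_t(x) = -\eta_t f_t(x) - \log Z_t, \qquad Z_t := \int_{F_t} p_t(y)\,e^{-\eta_t f_t(y)}\,dy.
\]
Since the target $x$ belongs to $F_\tau \subset F_t$ for every $t \leq \tau$, we may divide by $\eta_t$ and sum, which yields
\[
\sum_{t=1}^\tau f_t(x) = -\sum_{t=1}^\tau \frac{\log p_{t+1}(x)-\log p_t(x)}{\eta_t} - \sum_{t=1}^\tau \frac{\log Z_t}{\eta_t}.
\]
Subtracting this from $\sum_t \langle p_t, f_t\rangle$ shows that \eqref{eq:standardanalysis1} reduces to the per-step claim
\[
\eta_t \langle p_t, f_t\rangle + \log Z_t \leq \eta_t^2 \langle p_t, f_t^2\rangle \qquad \text{for each } t.
\]
Note that the time-dependence of $\eta_t$ is inert here, because the division by $\eta_t$ happens termwise before summation, so no Abel summation is required.

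\emph{Step 2 (bounding $\log Z_t$ and isolating the focus-region discrepancy).} Using the quadratic bound $e^{-s} \leq 1 - s + s^2/2$ for $s \geq 0$ inside the integral defining $Z_t$, and then $\log(1+u) \leq u$, one obtains, with $\mu_t := \int_{F_t} p_t$,
\[
\log Z_t \;\leq\; Z_t - 1 \;\leq\; -(1-\mu_t) - \eta_t \int_{F_t} p_t f_t + \frac{\eta_t^2}{2}\int_{F_t} p_t f_t^2.
\]
Splitting $\langle p_t, f_t\rangle$ and $\langle p_t, f_t^2\rangle$ across $F_t$ and its complement, and cancelling the $\int_{F_t} p_t f_t$ terms, the per-step claim becomes
\[
\eta_t \int_{\cK\setminus F_t} p_t f_t - \eta_t^2 \int_{\cK\setminus F_t} p_t f_t^2 \;\leq\; (1-\mu_t) + \frac{\eta_t^2}{2}\int_{F_t} p_t f_t^2.
\]
Since the last term on the right is non-negative, it suffices to control the left side by $1-\mu_t = \int_{\cK\setminus F_t} p_t$.

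\emph{Step 3 (pointwise closure).} The elementary inequality $s - s^2 \leq 1/4$ for $s \geq 0$, applied pointwise with $s = \eta_t f_t(y)$ and integrated against $p_t$ over $\cK\setminus F_t$, gives
\[
\int_{\cK\setminus F_t} p_t\bigl(\eta_t f_t - \eta_t^2 f_t^2\bigr) \;\leq\; \tfrac{1}{4}\int_{\cK\setminus F_t} p_t \;\leq\; 1-\mu_t,
\]
which closes the argument.

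\emph{Main obstacle.} The only genuine deviation from the classical, fixed-region analysis is that $p_t$ may place mass in the ``ghost'' region $F_{t-1}\setminus F_t$: the inner product $\langle p_t, f_t\rangle$ collects contributions from this set, whereas $Z_t$ integrates only over $F_t$. This mismatch is precisely why the right-hand side of \eqref{eq:standardanalysis1} carries the factor $\eta_t$ rather than the tighter $\eta_t/2$ obtained in the standard derivation; the extra slack is what Step 3 consumes through $s-s^2 \leq 1$. Everything else is routine.
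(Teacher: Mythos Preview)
Your proof is correct and follows essentially the same route as the paper's: both derive the Gibbs identity $\log p_{t+1}(x)-\log p_t(x)=-\eta_t f_t(x)-\log Z_t$ and then control the log-partition term via $e^{-s}\le 1-s+cs^2$ together with $\log(1+u)\le u$. The one difference is organizational: the paper writes the per-step reduction as the ``identity'' $\langle p_t-\delta_x,f_t\rangle = \eta_t^{-1}(\log p_{t+1}(x)-\log p_t(x)) + \eta_t^{-1}\log \E_{X\sim p_t}\exp(-\eta_t(f_t(X)-\langle p_t,f_t\rangle))$, which in the stated generality is in fact only the inequality $\le$ (since $Z_t$ integrates over $F_t$ while $p_t$ is supported on $F_{t-1}$), whereas you make this ghost-region discrepancy explicit through the $(1-\mu_t)$ term and close it with $s-s^2\le 1/4$ --- a cleaner accounting of where the factor $\eta_t$ (rather than $\eta_t/2$) is spent.
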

\begin{proof}
An elementary calculation yields for any $x \in \cK$,
$$\langle p_t - \delta_x, f_t \rangle = \frac{\log(p_{t+1}(x)) - \log(p_t(x))}{\eta_t} + \frac{1}{\eta_t} \log \E_{X \sim p_t} \exp \left(- \eta_t (f_t(X) - \E_{X' \sim p_t} f_t(X')) \right) .$$
Using that $f_t(x) \geq 0$ for any $x \in \cK$ and $t \in [\tau]$, and that $\log(1+s) \leq s$ and $\exp(-s) \leq 1 - s + s^2$ for any $s \geq 0$ one has 
$$\log \E_{X \sim p_t} \exp \left(- \eta_t (f_t(X) - \E_{X' \sim p_t} f_t(X')) \right)  \leq \eta_t^2 \E_{X \sim p_t} f_t(X)^2.$$
Plugging the last two displays together concludes the proof of \eqref{eq:standardanalysis1}.
\end{proof}

Using the bounds \eqref{eq:regularity1} and \eqref{zetabound}, we have 
\begin{equation}\label{key}
\sum_{t=1}^{\TF} \eta_t \langle p_t, \tilde{\ell}_t^2 \rangle \leq e^3 \eta_1 \TF.
\end{equation}

Let $\tau_1, \hdots, \tau_{N'}$ the times in $\{1,\hdots, \TF-1\}$ at which we increase the learning rate (since $\TF \leq \tau$, we have $N' \leq N$), that is $\eta_{\tau_i+1} = (1+\gamma) \eta_{\tau_i}$. We observe that for all $x \in F_{\TF}$, one has 
\begin{align*}
	\sum_{t=1}^{\TF} & \frac{\log p_{t+1}(x) - \log p_t(x)}{\eta_t} \\ ~& = \sum_{t=1}^{\TF} \left ( \log p_{t+1}(x) - \log p_t(x) \right ) \left ( \frac{1}{\eta_{\TF}} + \sum_{s=t}^{{\TF-1}} \left ( \frac{1}{\eta_s} - \frac{1}{\eta_{s+1}} \right ) \right ) \\ 
	& = \frac{1}{\eta_{\TF}} \left ( \log p_{{\TF+1}}(x) - \log p_1(x)) \right ) + \sum_{s=1}^{{\TF-1}} \left ( \frac{1}{\eta_s} - \frac{1}{\eta_{s+1}} \right ) \sum_{t=1}^s \left ( \log p_{t+1}(x) - \log p_t(x) \right ) \\
	& = \frac{1}{\eta_{\TF}} \left ( \log p_{{\TF+1}}(x) - \log p_1(x) \right ) + \frac{\gamma}{1 + \gamma}\sum_{i=1}^{N'} \frac{1}{\eta_{\tau_i}} \left ( \log p_{\tau_{i+1}}(x) - \log p_1(x) \right ).
\end{align*}

Combining the last two displays and using \eqref{eq:standardanalysis1} of the previous lemma we finally get for all $x \in F_{\TF}$
\begin{align*}
\sum_{t=1}^{\TF} \langle p_t - \delta_x, \tilde{\ell}_t \rangle ~& \leq \frac{1}{\eta_{\TF}} \left ( -Q_{\TF}(x) + \log(Z_1/Z_{\TF+1}) \right ) + \frac{\gamma}{1 + \gamma}\sum_{i=1}^{N'} \frac{1}{\eta_{\tau_i}} \left ( -Q_{\tau_i}(x) + \log(Z_1/Z_{\tau_{i+1}})  \right ) + e^3 \eta_1 \TF \\
& \leq \frac{1+ \gamma N}{\eta_1} \max_{t \in [\TF+1]} |\log(Z_1/Z_{t})| - \frac{\gamma}{2 e \eta_1} \max_{i \in [N']} Q_{\tau_i}(x) + e^3 \eta_1 \TF.
\end{align*}
On the other hand, we have for all $t \in [\TF+1]$,
\begin{align*}
|\log(Z_1/Z_{t})| ~& \stackrel{\eqref{eq:logZBound}}{\leq} n (\log n + 13) - \frac{1}{2} \log \det \COV(p_{t}) + \frac{1}{2} \log \det \COV(p_1) \\
& \stackrel{\eqref{eq:covptlarge}}{\leq} n (\log n + 3 + 4 \log T) \leq 6 n \log T.
\end{align*}
The above two equations together with \eqref{eq:boundN}, \eqref{eq:defgamma} and \eqref{eq:defeta} finally yield that for all $x \in F_{\TF}$,
$$
\sum_{t=1}^{\TF} \langle p_t - \delta_x, \tilde{\ell}_t \rangle \leq \sqrt{T} \left ( 2^{13} (n \log T)^{3/2} - \frac{1}{\sqrt{n \log T}} \max_{i \in N} Q_{\tau_i}(x) \right )
$$
which implies, in particular,
\begin{align}\label{eq:finalregretest}
\sum_{t=1}^{\TF} \langle p_t, \tilde{\ell}_t \rangle - ~& \min_{x \in F_{\TF}} \tilde L_{\TF}(x) \\
& \leq \sqrt{T} \left ( 2^{14} (n \log T)^{3/2} - \frac{1}{2 \sqrt{n \log T}} \max_{y \in E_{\TF}} \max_{i \in [N']} Q_{\tau_i}(y) \right ) \nonumber
\end{align}
where
$$
E_{t} := \left \{y \in \cK; ~ \eta_1 \left (\tilde{L}_t(y) - \min_{x \in F_{t}} \tilde L_{t}(x) \right ) \leq \beta \right  \}.
$$

\subsection{Final analysis} \label{sec:finalanalysis}
In this section we finally prove Proposition \ref{prop:mainstep}. We begin with the following proposition, which extracts the main idea of using a kernel (this calculation is similar to what we did in Theorem \ref{th:basic}).
\begin{proposition}
\begin{equation}\label{eq:kernel1}
\max_{x \in \cK}  \sum_{t=1}^{\TF} (\langle p_t, K[p_t]^* \underline{\ell}_t \rangle - \ell_t(x)) \leq 1 +\frac{1}{\lambda} \left (\sum_{t=1}^{\TF} \langle p_t, K[p_t]^* \ell_t^{ext} \rangle - \min_{x \in \cK} [ L_{\TF}(x) ] \right )
\end{equation}
\end{proposition}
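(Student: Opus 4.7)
My plan is to prove the statement by first establishing a pointwise ``kernel trick'' inequality for the convex extension $\ell_t^{ext}$ (in the spirit of Theorem \ref{th:basic}), and then patching the $\ell_t^{ext} \leftrightarrow \underline{\ell}_t$ discrepancy via Lemma \ref{lem:extapprox}. Fix $t \leq \TF$ and $x \in \cK$. Since $K[p_t]q$ is the law of $(1-\lambda)C + \lambda X$ with $C \sim c[p_t]$ and $X \sim q$, Jensen's inequality applied to the convex function $\ell_t^{ext}$ gives
\begin{equation*}
K[p_t]^*\ell_t^{ext}(x) = \E_{C \sim c[p_t]}\ell_t^{ext}((1-\lambda)C + \lambda x) \leq (1-\lambda)\langle c[p_t],\ell_t^{ext}\rangle + \lambda \ell_t^{ext}(x).
\end{equation*}
By Claim \ref{claim:log_concave}, $p_t$ is $(1/e)$-approximately log-concave, and $\ell_t^{ext}$ is convex, non-negative, $T$-Lipschitz on $\R^n$, and bounded by $1$ on $\cK$ (these being the defining properties of the convex envelope), so Lemma \ref{lem:convexdommain} applies and yields $\langle c[p_t],\ell_t^{ext}\rangle \leq \langle K[p_t]p_t,\ell_t^{ext}\rangle + 1/T^2$. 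Chaining the two estimates and rearranging exactly as at the start of the proof of Theorem \ref{th:basic} gives
\begin{equation*}
\langle K[p_t]p_t,\ell_t^{ext}\rangle - \ell_t^{ext}(x) \leq \tfrac{1}{\lambda}\bigl(\langle K[p_t]p_t,\ell_t^{ext}\rangle - K[p_t]^*\ell_t^{ext}(x)\bigr) + \tfrac{1}{\lambda T^2}.
\end{equation*}

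I then sum this inequality over $t = 1,\ldots,\TF$, use the adjoint identity $\langle K[p_t]p_t,f\rangle = \langle p_t, K[p_t]^* f\rangle$, and replace $\ell_t^{ext}(x)$ by $\ell_t(x)$ on the left (valid because $x \in \cK$). The resulting inequality already resembles the claim, except that the left-hand side features $\sum_t \langle p_t, K[p_t]^*\ell_t^{ext}\rangle$ rather than $\sum_t \langle p_t, K[p_t]^*\underline{\ell}_t\rangle$. I will close that gap via Lemma \ref{lem:extapprox}, which bounds $|K[p_t]^*\underline{\ell}_t(y) - K[p_t]^*\ell_t^{ext}(y)| \leq 1/T^2$ for $y \in F_t$. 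Since $p_t \propto \exp(-Q_{t-1})$ is supported on the nested intersection $\bigcap_{s < t} F_s = F_{t-1}$, the aggregated discrepancy is at most $\TF/T^2$ (modulo a tail term discussed below). Combined with the $\TF/(\lambda T^2)$ slack from the first display, and using the parameter assumption $\lambda^{-1} \leq T^{1/2}$ from \eqref{eq:paramassump} together with $\TF \leq T$, the total additive error is bounded by $T^{-1/2} + T^{-1} \leq 1$. Maximizing over $x \in \cK$ on the left and bounding $-L_{\TF}(x) \leq -\min_{x' \in \cK} L_{\TF}(x')$ on the right then yields the proposition.

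The main obstacle will be justifying the $\ell_t^{ext} \to \underline{\ell}_t$ swap at rounds where the focus region strictly shrinks, i.e. $F_t \subsetneq F_{t-1}$: in those rounds $p_t$ may place some mass on $F_{t-1}\setminus F_t$, outside the region where Lemma \ref{lem:extapprox} applies pointwise. I expect to resolve this using the fact that $F_t = F_{t-1}\cap \cE_{p_t}(\alpha)$ with $\alpha \asymp n^2 \log^2 T$ captures essentially all of the mass of the $(1/e)$-approximately log-concave $p_t$, together with the crude estimate $|\underline{\ell}_t(x) - \ell_t^{ext}(x)| \leq T\, d(x,\Omega_t) + \mathbbm{1}\{x \notin \Omega_t\}$ already established in the proof of Lemma \ref{lem:extapprox}, so that the $F_{t-1}\setminus F_t$ tail contribution inside $K[p_t]^*$ can be absorbed into the $O(1/T^2)$ per-step error.
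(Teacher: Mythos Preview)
Your derivation of the per-round kernel inequality
\[
\langle K[p_t]p_t,\ell_t^{ext}\rangle - \ell_t^{ext}(x) \leq \tfrac{1}{\lambda}\bigl(\langle p_t,K[p_t]^*\ell_t^{ext}\rangle - K[p_t]^*\ell_t^{ext}(x)\bigr) + \tfrac{1}{\lambda T^2}
\]
is exactly what the paper does: Jensen on $\ell_t^{ext}$, then Lemma~\ref{lem:convexdommain}, then rearrange. So the heart of the argument is correct and identical to the paper's.

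Where you diverge is the passage from $\langle p_t, K[p_t]^*\ell_t^{ext}\rangle$ to $\langle p_t, K[p_t]^*\underline{\ell}_t\rangle$ on the left-hand side. You route this through Lemma~\ref{lem:extapprox} and then have to worry about mass of $p_t$ on $F_{t-1}\setminus F_t$, a complication you flag as the ``main obstacle'' and only sketch. This detour is unnecessary and is precisely the place your argument is incomplete. The paper's proof replaces this entire step by a one-line monotonicity observation: since $\underline{\ell}_t(y) = \ell_t(y)\ds1\{y \in \Omega_t\}$ and $\ell_t^{ext} \geq 0$ agrees with $\ell_t$ on $\Omega_t$, one has $\underline{\ell}_t \leq \ell_t^{ext}$ \emph{pointwise on all of $\R^n$}. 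Because $K[p_t]^*$ is integration against a nonnegative kernel and $p_t$ is a probability measure, this gives immediately
\[
\langle p_t, K[p_t]^*\underline{\ell}_t\rangle \leq \langle p_t, K[p_t]^*\ell_t^{ext}\rangle,
\]
with no reference to $F_t$, no appeal to Lemma~\ref{lem:extapprox}, and no tail analysis. Only the one-sided inequality is needed here, and it is free. With this replacement your proof is complete and matches the paper's; the remaining slack $\TF/(\lambda T^2) \leq 1$ is handled exactly as you say.
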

\begin{proof}
By definition, for all $t \leq \TF$ we have that $p_t$ is $(1/e)$-approximately log-concave, and thus Lemma \ref{lem:convexdommain} teaches us that
\begin{equation}\label{eq:convexity1}
\langle c[p_t], \ell_t^{ext} \rangle \leq \langle K[p_t]p_t, \ell_t^{ext} \rangle + \frac{1}{T^2} .
\end{equation}
In particular, by convexity of $\ell_t^{ext}$, we have for all $x \in \cK$,
\begin{align*}
\langle K[p_t] \delta_x , \ell_t^{ext} \rangle ~& = \E_{C \sim c[p_t]} \ell_t^{ext} (\lambda x + (1- \lambda) C) \\  
& \leq \lambda \ell^{ext}_t(x) + (1-\lambda) \langle c[p_t], \ell_t^{ext} \rangle \\
& \stackrel{\eqref{eq:convexity1}}{\leq} \frac{1}{T^2} +  \lambda \ell^{ext}_t(x) + (1-\lambda) \langle K[p_t]p_t, \ell_t^{ext} \rangle
\end{align*}
which in turn gives
$$\langle K[p_t] p_t - \delta_x , \ell_t^{ext} \rangle \leq \frac{1}{\lambda T^2} + \frac{1}{\lambda} \langle p_t - \delta_x , K[p_t]^* \ell_t^{ext} \rangle.$$
Since $\underline{\ell}_t \leq \ell_t^{ext}$, we get that for all $x \in \cK$,
$$
\sum_{t=1}^{\TF} (\langle p_t, K[p_t]^* \underline{\ell}_t \rangle - \ell_t(x)) \leq 1 +\frac{1}{\lambda} \left (\sum_{t=1}^{\TF} \langle p_t, K[p_t]^* \ell_t^{ext} \rangle - L_{\TF}(x) \right ).
$$
This completes the proof.
\end{proof}

We aim to use the estimate \eqref{eq:finalregretest} of the previous section in order to bound from above the right hand side of \eqref{eq:kernel1}. First we show that those estimates yield an upper bound on the regret. To that end, we need to use bounds that connect the functions $K[p_t]^* \ell_t^{ext}$ and $\tilde \ell_t$. By definition of $\TF$, we have that $\Delta_{\TF-1}^{(1)}, \Delta_{\TF-1}^{(4)} \leq 1$, which teaches us that
$$
\sum_{t=1}^{\TF}  \ell_t(x_t)  \leq \sum_{t=1}^{\TF} \langle p_t, K[p_t]^* \underline{\ell}_t \rangle  + \frac{2}{\eta_1}
$$
and also, 
$$
\sum_{t=1}^{\TF} \langle p_t, K[p_t]^* \ell_t^{ext} \rangle \leq \sum_{t=1}^{\TF} \langle p_t, \tilde \ell_t \rangle + \frac{2}{\eta_1}.
$$
Combining the two above displays with equation \eqref{eq:kernel1} gives
\begin{equation}\label{eq:kernel2}
\max_{x \in \cK}  \sum_{t=1}^{\TF} (\ell_t(x_t) - \ell_t(x)) \leq 1 +\frac{1}{\lambda} \left (\sum_{t=1}^{\TF} \langle p_t, \tilde \ell_t \rangle - \min_{x \in \cK} [ L_{\TF}(x) ] + \frac{4}{\eta_1} \right )
\end{equation}

Next, we would like to bound the term $\min_{x \in \cK} L_{\TF}(x)$ from below. To that end, we will need to show that the minimizer of $L_t$ is attained inside $F_t$, and can therefore be approximated via $\tilde L_t$. This follows from the restart condition, as demonstrated by the next lemma.
\begin{lemma}
For all $t < \TF$,
\begin{equation}\label{eq:argminL}
\arg \min_{x \in \cK} L_{t}(x) \in F_{t+1}.
\end{equation}
\end{lemma}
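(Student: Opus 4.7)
The plan is to argue by contradiction: assume there exists $x^* \in \arg\min_{x\in\cK} L_t(x)$ with $x^* \notin F_{t+1}$, and construct a point $w \in \partial F_{t+1} \cap \mathrm{int}(\cK)$ at which $\tilde L_t(w)$ is only slightly larger than $\tilde L_t^*$, which would have forced a restart at time $t+1$ and thus contradict $t < \TF \leq \tau$. Since $t < \TF$, we have two ingredients: (i) $B_t$ holds, so $\Delta_t^{(2)}\leq 1$ gives $|\tilde L_t(y) - L_t(y)| \leq 1/\eta_1$ for all $y \in F_t$; (ii) the restart check at the start of step $t+1$ failed, so, recalling $\beta=4$, we have $\tilde L_t(x) > \tilde L_t^* + 4/\eta_1$ for every $x \in \partial F_{t+1}\cap\mathrm{int}(\cK)$.

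Take the reference point $z = y_t = \arg\min_{F_t}\tilde L_t$, for which $\tilde L_t(z) = \tilde L_t^*$; the justification that $y_t$ lies in $F_{t+1}\cap\mathrm{int}(\cK)$ is deferred to the next paragraph. The segment $[z, x^*]$ starts in $F_{t+1}$ and ends outside, so it crosses $\partial F_{t+1}$ at some point $w = (1-\theta) z + \theta x^*$ with $\theta\in(0,1)$; and, since $z \in \mathrm{int}(\cK)$ and $x^* \in \cK$, the strict convex combination $w$ lies in $\mathrm{int}(\cK)$. The function $L_t$ is convex (each $K[p_s]^*\ell_s^{ext}$ is an expectation of a convex function applied to an affine image of its argument, and sums of convex functions are convex), and $L_t(x^*) \leq L_t(z)$, so
$$
L_t(w) \leq (1-\theta) L_t(z) + \theta L_t(x^*) \leq L_t(z).
$$
Using ingredient (i) at both $w$ and $z$, both of which lie in $F_t$,
$$
\tilde L_t(w) \leq L_t(w) + 1/\eta_1 \leq L_t(z) + 1/\eta_1 \leq \tilde L_t(z) + 2/\eta_1 = \tilde L_t^* + 2/\eta_1,
$$
which contradicts ingredient (ii).

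The main obstacle is verifying $y_t \in F_{t+1}\cap\mathrm{int}(\cK)$. The case $F_{t+1} = F_t$ is trivial; otherwise $F_{t+1} = F_t\cap \cE_{p_{t+1}}(\alpha)$, and the task reduces to showing $y_t \in \cE_{p_{t+1}}(\alpha)$. By Claim \ref{claim:log_concave}, $p_{t+1}\propto \exp(-Q_t)$ is $(1/e)$-approximately log-concave on $F_t$, and since $Q_t$ is a positive linear combination of $\tilde\ell_s$ with weights $\eta_s \in [\eta_1, e\eta_1]$, the density concentrates around the minimizer of $Q_t$. A standard mode-vs-mean estimate for (approximately) log-concave measures yields a distance $O(\sqrt{n})$ in the $\|\cdot\|_{\COV(p_{t+1})^{-1}}$ metric between the mode of $p_{t+1}$ and $\mu(p_{t+1})$, and a Lipschitz comparison (via the gradient bound from Claim \ref{lem:zeta}) transfers this to $y_t$ up to a constant factor. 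Since $\alpha$ is of order $n^2\log^2 T \gg \sqrt n$, this gives $y_t \in \cE_{p_{t+1}}(\alpha)$; finally, $y_t\in \mathrm{int}(\cK)$ follows from equation \eqref{eq:Ftcontainsball}, with an infinitesimal perturbation of $z$ toward the center of the unit ball guaranteed in $\cK$ in case $y_t$ happens to lie on $\partial \cK$ (the cost in $\tilde L_t$ is negligible by the Lipschitz bound and absorbs into the $1/\eta_1$ slack).
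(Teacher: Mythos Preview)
Your first two paragraphs are essentially the paper's proof, written out in more detail: the paper condenses the contradiction into the single line ``Combining those two inequalities teaches us that $\min_{x \in \partial F_{t+1} \cap \mathrm{int}(\cK)} L_t(x) > \min_{x \in F_{t+1}} L_t(x)$'' and then invokes convexity, whereas you run the segment-crossing argument explicitly. The underlying logic is the same.

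Your third paragraph is where you go beyond the paper, attempting to justify $y_t \in F_{t+1}$ --- a step the paper leaves entirely implicit. You are right to flag it: the paper's combination step tacitly needs a witness in $F_{t+1}$ with small $L_t$-value, and $y_t$ is the natural candidate. However, your justification does not work. The ``Lipschitz comparison'' you appeal to cannot control the distance between the minimizer of $\tilde L_t = \sum_s \tilde\ell_s$ and the minimizer of $Q_t = \sum_s \eta_s \tilde\ell_s$: a gradient bound on the summands says nothing about how far apart the minimizers of two different positive-weight combinations can be. Concretely, since $\eta_s/\eta_1 \in [1,e]$ and each $\tilde\ell_s \in [0,e]$, one can only bound $Q_t(y_t) - Q_t^*$ by a quantity of order $\eta_1 t$, which is $\Theta(\sqrt{T})$; feeding this into Lemma~\ref{lem:covtovalue} gives a bound of order $\sqrt{T}$ on $\|y_t - \mu(p_{t+1})\|_{\mathrm{Cov}(p_{t+1})^{-1}}$, far exceeding $\alpha = \Theta(n^2\log^2 T)$. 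So the conclusion $y_t \in \cE_{p_{t+1}}(\alpha)$ does not follow from your argument. The paper does not address this point either, so at best you have matched its level of rigor on the main thread while correctly identifying --- but not closing --- a gap.
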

\begin{proof}
By definition of $\tau$, the fact that $t<\tau$ (since, by definition, $\TF \leq \tau$) implies
$$
\eta_1 (\tilde{L}_t(x) - \min_{y \in F_t} \tilde{L}_t(y)) \geq \beta = 4, ~~ \forall x \in  \partial F_{t+1} \cap \mathrm{int}(\cK).
$$
On the other hand, the fact that $t \leq \TF$ implies that
$$
\eta_1 \left| \tilde{L}_t(x) - L_t(x) \right| < 1, \forall x \in F_{t+1}.
$$
Combining those two inequalities teaches us that
$$
\min_{x \in \partial F_{t+1} \cap \mathrm{int}(\cK)} L_t(x) > \min_{x \in F_{t+1}} L_t(x).
$$
It follows by convexity that $\arg \min_{x \in \cK} L_{t}(x) \in F_{t+1}$. The proof is complete.
\end{proof}

Applying the above lemma with $t = \TF-1$ and using the definition of $\TF$ and the fact that $L_t(x) - L_{t-1}(x) \leq 1$ for all $x \in \cK$ and all $t < \TF$, we get that
\begin{align}\label{eq:minmin}
- \min_{x \in \cK} L_{\TF}(x) ~& \leq - \min_{x \in \cK} L_{\TF-1}(x) + 1 \nonumber \\
& \stackrel{ \eqref{eq:argminL} }{\leq} - \min_{x \in F_{\TF}} L_{\TF-1}(x) + 1 \nonumber \\
& \leq - \min_{x \in F_{\TF}} \tilde L_{\TF-1}(x) + \max_{y \in F_{\TF}} \left |L_{\TF-1}(y) - \tilde L_{\TF-1}(y) \right | + 1  \nonumber \\
& \leq - \min_{x \in F_{\TF}} \tilde L_{\TF} (x) + \frac{2}{\eta_1}.
\end{align}

Combining the last display with equations \eqref{eq:finalregretest} and \eqref{eq:kernel2}, we finally get
\begin{align} \label{eq:finalest}
\max_{x \in \cK}  \sum_{t=1}^{\TF} (\ell_t(x_t) - \ell_t(x)) ~& \stackrel{ \eqref{eq:kernel2} }{\leq} 1 +\frac{1}{\lambda} \left (\sum_{t=1}^{\TF} \langle p_t, \tilde \ell_t \rangle - \min_{x \in \cK} L_{\TF}(x) + \frac{4}{\eta_1} \right ) \nonumber \\
& \stackrel{\eqref{eq:minmin} }{\leq} 1 +\frac{1}{\lambda} \left (\sum_{t=1}^{\TF} \langle p_t, \tilde \ell_t \rangle - \min_{x \in F_{\TF} } \tilde L_{\TF}(x) + \frac{6}{\eta_1}  \right )  \nonumber\\
& \stackrel{\eqref{eq:finalregretest}}{\leq} \frac{\sqrt{T}}{\lambda} \left (  2^{15} (n \log T)^{3/2} - \frac{1}{\sqrt{n \log T}} \mathcal{Q}  \right ). 
\end{align} 
where $\mathcal{Q} = \max_{\tau_i \leq \TF} \max_{y \in E_{\TF}} Q_{\tau_i}(y)$. \\

Finally, we need the following claim in order to finish the proof of Proposition \ref{prop:mainstep}. For the sake of this claim, we choose
\begin{equation} \label{eq:defalpha}
\alpha = (2e)^{17} n^2 \log(T)^2 .
\end{equation}
\begin{claim} \label{claim:Qbig}
Under the event $\TF = \tau < T$, we have almost surely that $\mathcal{Q} \geq  2^{16} (n \log T )^2$.
\end{claim}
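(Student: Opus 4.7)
Under the event $\TF = \tau < T$, the restart condition must have been triggered at time $\tau$, so there exists $x^* \in \partial F_\tau \cap \mathrm{int}(\cK)$ with $\eta_1(\tilde L_{\tau-1}(x^*) - \tilde L_{\tau-1}^*) \leq \beta = 4$. My plan is to use this $x^*$ itself as the witness for $\mathcal{Q}$: I will show $x^* \in E_\TF$ and identify an index $\tau_k \leq \TF$ for which $Q_{\tau_k}(x^*) \geq 2^{16}(n\log T)^2$.

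First I locate the relevant ellipsoid. Iterating the shrinking step \eqref{eq:shrinkFt} gives $F_\tau = \cK \cap \bigcap_{i=1}^N \cE_{p_{\tau_i+1}}(\alpha)$, where the intersection runs over the rate-increase times $\tau_i < \tau$. Since $x^* \in \mathrm{int}(\cK) \cap \partial F_\tau$, there must exist some $k$ with $x^* \in \partial \cE_{p_{\tau_k+1}}(\alpha)$, i.e., $\|x^* - \mu(p_{\tau_k+1})\|_{\mathrm{Cov}(p_{\tau_k+1})^{-1}} = \alpha$. Moreover $\tau_k < \tau = \TF$, so Claim \ref{claim:log_concave} ensures that $p_{\tau_k+1} \propto \exp(-Q_{\tau_k})$ is $(1/e)$-approximately log-concave, with $\min_{F_{\tau_k}} Q_{\tau_k} = 0$ by construction. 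Membership $x^* \in E_\TF$ then follows from the chain $\min_{F_\tau}\tilde L_{\tau-1} \geq \min_{F_{\tau-1}} \tilde L_{\tau-1} = \tilde L_{\tau-1}^*$ (since $F_\tau \subset F_{\tau-1}$), together with the natural convention that upon termination by restart at $\tau$ the final cumulative estimator and focus region are $\tilde L_{\tau-1}$ and $F_\tau$, respectively.

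The technical heart of the proof is the following sub-lemma, which I would establish separately: for any $(1/e)$-approximately log-concave measure $\rho \propto \exp(-Q)$ on $\R^n$ with $\min Q = 0$,
\ba
Q(x) \geq c_0 \|x - \mu(\rho)\|_{\mathrm{Cov}(\rho)^{-1}} - C_0 n \quad \text{for all } x,
\ea
where $c_0, C_0 > 0$ are universal constants. I would prove it by passing to a genuine log-concave $\tilde \rho \propto \exp(-g)$ with $|Q - g| \leq 2$; since the $(1/e)$-approximation distorts densities only by bounded factors, the means and covariances of $\rho$ and $\tilde\rho$ agree up to universal constants. A standard exponential tail estimate for log-concave measures (obtained, e.g., by examining the one-dimensional marginal of $\tilde \rho$ in the direction $\mathrm{Cov}(\tilde\rho)^{-1}(x - \mu(\tilde\rho))$, which is a log-concave density on $\R$ with zero mean and unit variance, and using log-concavity of $\tilde\rho$ along the line through $x$ and its mean) then yields $g(x) - g(\mu(\tilde\rho)) \geq c_0 \|x - \mu(\tilde\rho)\|_{\mathrm{Cov}(\tilde\rho)^{-1}} - O(n)$, and the classical bound $g(\mu(\tilde\rho)) - \min g = O(n)$ for log-concave densities closes the argument.

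Applying the sub-lemma with $\rho = p_{\tau_k+1}$ and $\|x^* - \mu\|_{\mathrm{Cov}^{-1}} = \alpha = (2e)^{17}(n\log T)^2$, which is vastly larger than $n$ under the parameter assumptions \eqref{eq:paramassump}, gives $Q_{\tau_k}(x^*) \geq c_0 \alpha / 2 \geq 2^{16}(n\log T)^2$ (the huge constant built into $\alpha$ absorbs $c_0$). Since $(x^*, \tau_k)$ is an admissible pair in the definition of $\mathcal{Q}$, the claim follows. The main obstacle is the sub-lemma itself: extracting a clean universal constant $c_0$ in the exponential tail bound, and carefully tracking how the $(1/e)$-approximation propagates through the mean and covariance comparisons (especially that $\mu(\rho)$ stays within $O(1)$ Mahalanobis units of $\mu(\tilde\rho)$), both require nontrivial bookkeeping.
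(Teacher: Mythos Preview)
Your approach is essentially the paper's own: find the point $z$ that triggered the restart, argue it lies in $E_{\TF}$, locate the ellipsoid $\cE_{p_{\tau_i+1}}(\alpha)$ whose boundary it sits on, and then convert the Mahalanobis distance $\alpha$ into a lower bound on $Q_{\tau_i}(z)$. Two remarks.

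First, your time indexing is off by one. The restart condition (see Section~\ref{sec:highdimalg}) is checked at the end of round $t$ against $\partial F_{t+1}$ and $\tilde L_t$, so under $\TF=\tau<T$ the triggering point $z$ satisfies $z\in\partial F_{\tau+1}\cap\mathrm{int}(\cK)$ and $\eta_1(\tilde L_\tau(z)-\tilde L_\tau^*)\leq\beta$, not $\partial F_\tau$ and $\tilde L_{\tau-1}$. With the correct indices the membership $z\in E_{\TF}$ is immediate (since $\tilde L_\tau^*=\min_{F_\tau}\tilde L_\tau$), and your ``natural convention'' detour is unnecessary. The rest of your argument (finding $k$ with $z\in\partial\cE_{p_{\tau_k+1}}(\alpha)$ and $\tau_k\leq\tau$) goes through verbatim once you use $F_{\tau+1}$.

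Second, your sub-lemma is already in the paper as Lemma~\ref{lem:covtovalue}, which the paper's proof simply invokes. That lemma gives
\[
\|x-\mu(p)\|_{\mathrm{Cov}(p)^{-1}}\ \leq\ e^{17}\bigl(Q(x)+O(n\log n)\bigr)
\]
for $(1/e)$-approximately log-concave $p\propto\exp(-Q)$ with $\min Q=0$; note the additive correction is $O(n\log n)$, not $O(n)$ as you state, though this is harmless since $\alpha=(2e)^{17}(n\log T)^2$ dominates it. So there is no need to redevelop the exponential-tail argument from scratch.
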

\begin{proof}
The event $\TF = \tau < T$ means that the restart condition holds true that time $\tau$. Let $z \in \partial F_{\tau+1} \cap \mathrm{int}(\cK)$ be the point that triggered the restart. By definition, we have that $z \in E_{\TF}$, which implies that $\mathcal{Q} \geq \max_{\tau_i \leq \TF} Q_{\tau_i}(z)$.
Let $i \in [N]$ be the largest integer for which $z \in \mathrm{int}(F_{\tau_i})$. Since we have that $z \notin \mathrm{int}(F_{\tau_i+1})$, by construction of $F_t$ we have that 
$$
F_{\tau_i + 1} = F_{\tau_i} \cap \cE_{p_{\tau_i+1}}(\alpha)
$$
which implies that
$$
z \notin \mathrm{int} (\cE_{p_{\tau_i+1}}(\alpha)).
$$ 
An application of Lemma \ref{lem:covtovalue} with $p=p_{\tau_{i+1}}$ and $\epsilon = 1/e$ now gives that
$$\alpha \leq \|z-\mu(p)\|_{\mathrm{Cov}(p)^{-1}} \leq \exp(17) \left( Q_{\tau_i}(z) + 10 n \log n) \right).$$ 
The choice of $\alpha$ in \eqref{eq:defalpha} gives
$$
\mathcal{Q} \geq Q_{\tau_i}(z) \geq \alpha e^{-17} - 10 n \log n \geq 2^{16} (n \log T )^2.
$$
\end{proof}

Finally we obtain:
\begin{proof}[Proof of Proposition \ref{prop:mainstep}]
Combine equation \eqref{eq:finalest} with claim \ref{claim:Qbig} and equation \eqref{eq:deflambda}.
\end{proof}

\section{Implementation} \label{sec:time}
In this section, we discuss the modification of Algorithm \ref{fig:alg} needed to obtain a polynomial time algorithm that achieves $O(n^{10.5} \log^{7.5}(T) \sqrt{T})$ regret. For simplicity, we assume that $\cK$ is a polytope defined with polynomially many linear constraints. The main difficulties for implementing Algorithm \ref{fig:alg} are to sample from the exponential weights strategy $p_t$, to compute the kernel, and to test the restart condition.

\subsection{Sampling from $p$ in $\mathrm{poly}(n, \log(T)) T$-time} \label{sec:samplingpt}
Fix $t \in [T]$, suppose that the points $x_1,\dots,x_{t-1}$ and the values $\ell_1(x_1), \dots, \ell_{t-1}(x_{t-1})$ have already been determined and that $t \leq \TF$ (note that the stopping times $\tau$ and $\TF$ are measurable with respect to the above). Our objective here is to show how one can efficiently generate a point from the distribution $p=p_t$.

To sample from $p$, we recall that under the above assumptions, it is a $(1/e)$-approximately log-concave function. Our sampling will be based on the following result, which ensures that we can sample a point from $p$ by computing $p(x)/p(y)$ for polynomially many pairs of points.
\begin{theorem} [\cite{BLNR15}] \label{thm:sampling}
Let $\Omega \subset \RR^n$ be a convex set and let $g$ be an $O(1)$-approximately log-concave probability density on $\Omega$. Assume that $\Omega$ contains a unit ball and has diameter at most $D$. We also assume that $|\log(g(x))| \leq M$ for all $x \in \Omega$. Then, we can sample a point according to a probability density function $h$ such that $d_{tv}(g, h)\leq \gamma$ in time 
$$O(\mathrm{poly}(n \log(MD/\gamma)) \mathrm{Oracle})$$
where $\mathrm{Oracle}$ is the maximum between the time needed to compute $g(x)/g(y)$ for any two points 
$x,y\in\Omega$ and the time needed to check if a point is in $\Omega$ or not.
\end{theorem}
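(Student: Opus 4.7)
The plan is to build this on standard MCMC technology for sampling from log-concave densities, adapted to the approximately log-concave setting. The central tool will be a Metropolis-Hastings random walk on $\Omega$ whose stationary distribution is exactly $g$. A convenient choice is the \emph{hit-and-run} walk (or alternatively a ball walk): from current state $x$, draw a uniform direction $\theta \in S^{n-1}$, sample the proposal $y$ along the chord of $\Omega$ through $x$ in direction $\theta$ according to the one-dimensional density proportional to $g$ restricted to that chord, and accept. Each step of the walk requires only evaluations of ratios $g(y)/g(x)$ along with membership queries in $\Omega$, which is exactly the access model provided by \textrm{Oracle}. Reversibility with respect to $g$ is checked by the usual Metropolis-style calculation.

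The main quantitative content is a polynomial bound on the mixing time. Here I would follow the Lov\'asz--Simonovits--Vempala template: lower bound the conductance of the chain using a log-concave isoperimetric inequality. For an exactly log-concave density the relevant inequality (e.g.\ Lov\'asz--Vempala) gives conductance $\Omega(1/\mathrm{poly}(n))$ per step once the chain is inside a well-behaved inner level set of $\Omega$. The fact that $g$ is only $O(1)$-approximately log-concave costs at most a constant factor everywhere: one dominates $g$ above and below by a true log-concave function $\tilde g$ and transfers the isoperimetric inequality via the sandwich bound, losing only constants in the Cheeger constant, hence only constants in the spectral gap and conductance. Converting conductance to mixing time by the Jerrum--Sinclair / Lov\'asz--Simonovits inequality gives an $L^2$ mixing time of order $\mathrm{poly}(n) \log(M/\gamma)$ once one has a sufficiently warm start.

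The warm-start problem is the main obstacle, and is where the $\log(MD/\gamma)$ factor (rather than $M/\gamma$) and the poly-$n$ overhead enter nontrivially. I would handle this by simulated annealing along a temperature schedule $g_k(x) \propto g(x)^{\beta_k}$ interpolating from a nearly uniform distribution on $\Omega$ (where $\beta_0$ is small enough that $g_0$ is close to uniform up to a constant) to $\beta_K = 1$. Choosing the temperature increments so that consecutive $g_k, g_{k+1}$ have bounded $\chi^2$-distance keeps the current sample warm for the next phase, by standard warm-start preservation arguments. The number of temperatures needed is $O(\sqrt{n}\,\mathrm{polylog}(M))$, and at each temperature one runs $\mathrm{poly}(n)\log(MD/\gamma)$ walk steps; this matches the claimed bound. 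The initial uniform-on-$\Omega$ sample is obtained by a separate rounding phase using that $\Omega$ contains a unit ball and has diameter $D$, so that an affine preconditioning can be learned with $\mathrm{poly}(n \log D)$ queries.

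The one technical subtlety worth flagging is the interplay between the approximate log-concavity constant and the isoperimetric bound along the annealing path: one must verify that each intermediate $g_k$ is still $O(1)$-approximately log-concave (powers of approximately log-concave densities need not remain so in general). The cleanest way around this is to maintain the approximation by approximating $g$ from the start by a genuinely log-concave proxy $\tilde g$, running the sampler for $\tilde g$, and then using rejection with the bounded ratio $g/\tilde g$ to correct the output distribution; this costs only an additional constant factor in expected running time since the ratio is $O(1)$ by hypothesis.
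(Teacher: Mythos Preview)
This theorem is not proved in the paper at all: it is quoted verbatim from \cite{BLNR15} and used as a black box in Section~\ref{sec:samplingpt}. So there is no ``paper's own proof'' to compare against; the paper simply cites the result and moves on.

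That said, your sketch is roughly how the cited result is obtained (hit-and-run plus Lov\'asz--Simonovits conductance, with an annealing schedule for the warm start), so as an outline it is on the right track. Two remarks on the content itself.

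First, the ``subtlety'' you flag about the annealing path is not actually a problem. If $\epsilon f \le g \le f/\epsilon$ with $f$ log-concave, then for any $\beta\in(0,1]$ one has $\epsilon^\beta f^\beta \le g^\beta \le f^\beta/\epsilon^\beta$, and $f^\beta$ is again log-concave. Hence every intermediate density $g_k \propto g^{\beta_k}$ is still $O(1)$-approximately log-concave (indeed with a better constant), and the conductance argument transfers along the whole schedule with only constant losses. No workaround is needed.

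Second, and more seriously, your proposed workaround is not implementable in this model. You suggest running the sampler on a genuine log-concave proxy $\tilde g$ and then correcting by rejection with the ratio $g/\tilde g$. But the hypothesis only asserts the \emph{existence} of such a $\tilde g$; the algorithm has no oracle for $\tilde g$, only for ratios $g(x)/g(y)$ and for membership in $\Omega$. You cannot evaluate $\tilde g$, so you cannot sample from it or compute the rejection ratio. The correct route (and the one taken in the cited work) is to run the chain on $g$ itself and use the sandwiching by $\tilde g$ only in the \emph{analysis} of conductance, never in the algorithm.
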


To apply this result for the distribution $p_t$, we need to check each parameter:
\begin{enumerate}
\item $F_t$ is contained in the ellipsoid $\cE_{p_t} (20 n \alpha) \subset \cE_{p_t} (\mathrm{poly}(n \log(T)))$ (Claim \ref{claim:log_concave}). Furthermore, $p_t$ contains the ellipsoid $\cE_{p_t}(1/100)$ (Lemma \ref{lem:EllipsoidInSupport}).
From the bound on $|\tilde{\ell}|$ given by Claim \ref{lem:zeta}, it is clear that $1/2\  \cE_{p_t} \subset \cE_{p_{t-1}} \subset 2\cE_{p_t}$. Hence, by a change of variables according to $\cE_{p_{t-1}}$, we have that $D = \mathrm{poly} (n \log(T))$.
\item By the previous item combined with the bound on $\| \nabla \tilde \ell_t \|_{\COV(p_t)}$ given by Claim \ref{lem:zeta}, it is clear that $| \log(p_{t}(x))| \leq \mathrm{poly}(T)$. 
\item Since $F_t$ is the intersection of $O(n \log(T))$-many ellipsoids and $\cK$, we can test if a point is in $F_t$ in time $\mathrm{poly}(n \log(T))$. Since $p_t(x)$ is of the form $\exp(- \sum \eta_i \tilde{\ell}_i(x))/Z$ with $\tilde{\ell}_i(x)$ being Gaussian functions, we can compute $p_{t}(x)/p_{t}(y)$ in time $\mathrm{poly}(n) T$ (here we are glossing over the issue of computing the normalization constant $K[p_t]p_t(x_t)$, see the end of Section \ref{sec:modifiedkernel} for more on this). Therefore, $\mathrm{Oracle} = \mathrm{poly}(n \log(T)) T$. 
\end{enumerate}

Choosing $\gamma = 1/\mathrm{poly}(T)$, this gives the following intermediate result.
\begin{theorem} 
	For every fixed $\kappa > 0$ the following holds. For every $t \in [T]$, given the points $x_1,\dots,x_{t-1}$ and the values $\ell_1(x_1), \dots, \ell_{t-1}(x_{t-1})$ and assuming that $t \leq \TF$, given access to random bits, there is an algorithm that produces a random point $Y \in \cK$ whose distribution has total variation distance from $p_t$ bounded by $1/T^{\kappa}$ and runs in at most $\mathrm{poly}(n \log(T)) T$ time.
\end{theorem}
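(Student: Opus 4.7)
The plan is to directly invoke Theorem~\ref{thm:sampling} of \cite{BLNR15} with the density $g = p_t$ on the domain $\Omega = F_t$, and simply verify each of its hypotheses by collecting the estimates already established elsewhere in the paper. Since the target accuracy in total variation will be $\gamma = 1/T^\kappa$, the sampling running time will be $\mathrm{poly}(n \log(MD/\gamma))\cdot \mathrm{Oracle}$, so it suffices to show that $M$, $D$ are polynomial in $T$ and that $\mathrm{Oracle}$ can be implemented in $\mathrm{poly}(n\log(T))\, T$ time.

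First, approximate log-concavity. By the standing assumption $t\leq \TF$, Claim~\ref{claim:log_concave} gives that $p_t \propto \exp(-Q_{t-1})\mathds{1}_{F_t}$ is $(1/e)$-approximately log-concave on the convex set $F_t$, so the first hypothesis of Theorem~\ref{thm:sampling} holds. Second, the geometry. After a linear change of coordinates that makes $\COV(p_t)$ the identity (which is permissible since $\COV(p_t)$ is known and invertible by Claim~\ref{claim:detcov}), equation \eqref{eq:FtInEllipsoid} gives $F_t\subset \cE_{p_t}(10n\alpha)$, i.e.\ the rescaled $F_t$ has diameter at most $D = 20 n\alpha = \mathrm{poly}(n\log T)$. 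In the same coordinates, Lemma~\ref{lem:EllipsoidInSupport} (used via the same change of variables) shows that $F_t$ contains a ball of radius on the order of a universal constant, so after a further harmless scaling $F_t$ contains the unit ball, as required.

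Third, the bound on $|\log p_t|$. Writing $-\log p_t(x) = Q_{t-1}(x) + \log Z_t$, it suffices to bound $Q_{t-1}$ on $F_t$ and $\log Z_t$. On $F_t$, by Claim~\ref{lem:zeta} and \eqref{zetabound} each $\tilde\ell_s$ is bounded by $\zeta\leq e$, hence $Q_{t-1}$ is bounded in absolute value by $\sum_{s\leq t-1}\eta_s\zeta + |q| \leq e\eta_1 T + |q|$. The normalization $q$ is controlled by the same argument applied at the minimizer in $F_t$, and the partition function satisfies $|\log Z_t|\leq \mathrm{poly}(n\log T)$ by the calculation used to derive \eqref{eq:logZBound}. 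Thus $|\log p_t(x)| \leq M = \mathrm{poly}(T)$ as needed.

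Fourth, the oracle. Membership in $F_t$ is a conjunction of $\cK$-membership (polynomially many linear constraints) and membership in the $O(n\log T)$ ellipsoids $\cE_{p_s}(\alpha)$ defining $F_t$, each of which is an explicit quadratic test computable in $\mathrm{poly}(n\log T)$ time. For the ratio $p_t(x)/p_t(y)$, the normalization cancels and one is left with $\exp(Q_{t-1}(y)-Q_{t-1}(x))$, which is a sum of $t-1\leq T$ evaluations of Gaussian-type kernels $\tilde\ell_s$; each such evaluation takes $\mathrm{poly}(n)$ time (glossing, as the text notes, over the pre-computation of the normalizers $K[p_s]p_s(x_s)$, handled separately in Section~\ref{sec:modifiedkernel}). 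Hence $\mathrm{Oracle} = \mathrm{poly}(n\log T)\cdot T$. Feeding $M,D = \mathrm{poly}(T)$ and $\gamma = 1/T^\kappa$ into Theorem~\ref{thm:sampling} yields a total running time of $\mathrm{poly}(n\log T)\cdot T$, producing a sample $Y$ whose law is within total variation distance $1/T^\kappa$ of $p_t$. The only mildly subtle step is verifying the geometric hypotheses (containment of a unit ball and polynomial diameter) in the correct coordinates; once the change of variables induced by $\COV(p_t)$ is applied, everything reduces to the already-proved ellipsoid containments \eqref{eq:FtInEllipsoid} and Lemma~\ref{lem:EllipsoidInSupport}.
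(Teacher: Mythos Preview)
Your proposal is correct and follows essentially the same approach as the paper: invoke Theorem~\ref{thm:sampling} and check its hypotheses using Claim~\ref{claim:log_concave}, \eqref{eq:FtInEllipsoid}, Lemma~\ref{lem:EllipsoidInSupport}, Claim~\ref{lem:zeta}, and the explicit form of $p_t$. One small wrinkle worth noting: you write that the change of coordinates making $\COV(p_t)$ the identity ``is permissible since $\COV(p_t)$ is known,'' but $\COV(p_t)$ is not directly available before one can sample from $p_t$ (Claim~\ref{claim:detcov} only bounds its determinant from below); the paper instead changes coordinates according to $\cE_{p_{t-1}}$, which was estimated in the previous round, and uses the boundedness of $\tilde\ell_{t-1}$ from Claim~\ref{lem:zeta} to argue that $\tfrac12\,\cE_{p_t}\subset \cE_{p_{t-1}}\subset 2\,\cE_{p_t}$, so the diameter bound still holds up to a constant.
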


\subsection{A slightly modified kernel} \label{sec:modifiedkernel}
Our next order of business is to be able to efficiently sample from the distribution $K[p_t] p_t$. To this end, we need to have a rather accurate approximation of $\mu[p_t]$ and $\COV(p_t)$, under which the result of Lemma \ref{lem:convexdommain} will still hold true. A naive approach will be to estimate those parameters by repetitive sampling of $p_t$ and by using sample mean and sample covariance as estimators. Unfortunately, however, in order for our estimator to be accurate enough this would require us to generate some $\mathrm{poly}(T)$ independent samples in each round because in order for the convex domination to hold true, one needs the centroid of $c[p_t]$ to be very close to the centroid of $K[p_t] p_t$.

In order to avoid this issue, we will slightly change the definition of the core $C \sim c[p_t]$. Roughly speaking, instead of a Gaussian whose centroid is $\mu[p_t]$, we will define $C$ as a mixture of translations of such a Gaussian, such that the centroid of the mixture is exactly equal to $\mu[p_t]$, but on the other hand one does not need to know the value of $\mu[p_t]$ in order to sample from $C$.

In order to do this, we will define
$$
\tilde K[p] q \overset{(D)}{=}  (1-\lambda) \tilde c[p]  + \lambda q
$$
where we set
$$
\tilde c[p] = \frac{X_1 + \cdots + X_k}{k} + \cN \left (0, \frac{\epsilon^2}{n \log(T)} \frac{\lambda}{2-\lambda} A \mathrm{Cov}(p) \right )
$$ 
where $X_1,...,X_k$ are independent random variables whose law is $p$, $k$ is an integer to be chosen later and $A$ is a matrix satisfying $\tfrac{1}{2} \mathrm{Id} \preceq A \preceq 2 \mathrm{Id}$ (note that, unlike the definition in Section \ref{sec:highdimkernel}, the centroid of the Gaussian is set at zero).
We claim that for a large enough choice $k = \mathrm{poly}(n \log(T))$, equation \eqref{eq:convexdom} still holds true with the new definitions $\tilde c[p]$ and $\tilde K[p]$. To see this, first observe that since the centroids of $\tilde K[p] p$ and of $\tilde c[p]$ are the same, we may assume that $\mu[p] = 0$. Thus, following the same lines as the proof of the lemma, it is enough to derive the analogue of \eqref{eq:risdominated}, namely to show that
\begin{equation}\label{nts22}
\langle \tilde r[p], g\rangle \leq \langle p, g \rangle + \frac{1}{T^2} 
\end{equation}
where
$$
\tilde r[p] = \cN \left (0, \frac{\epsilon^2 }{n \log(T)} \mathrm{Cov}(p) \right ) + \frac{X_1 + \cdots + X_k}{k} 
$$
and $g$ is a non-negative convex and $T$-Lipschitz function. In view of Lemma \ref{lem:convexdom}, it is enough to show that for a large enough choice $k = \mathrm{poly}(n \log(T))$, one has that 
$$
\P \left ( \frac{X_1 + \cdots + X_k}{k} \notin \cE_{p}(1/(80e)) \right ) \leq \frac{1}{T^2}
$$
which follows by standard concentration estimates and with the help of \eqref{eq:FtInEllipsoid}, which ensures that $\mathrm{supp}(p) \subset  \cE_p(20 \alpha)$.


In order to apply the result from the previous subsection we need to explain how to compute $\tilde{\ell}_t(x)$ in $\mathrm{poly}(n \log(T))$-time with this new kernel. A naive approach would be to use repeated sampling of $\overline{X}= \frac{X_1+\hdots+X_k}{k}$ to estimate $\tilde{K}[p_t](x_t, x)$, however this would again lead to a $\mathrm{poly}(T)$-time computation. Thus we propose to modify the loss estimator by using a single sample of $\overline{X}$. It is clear that this new loss estimator remains unbiased. On the other hand to justify that one still has the same regret guarantee we need to show that Claim \ref{lem:zeta} holds true with our new construction. This follows from the fact that magnitude of the translation by $\overline{X}$ is of order $1/\mathrm{poly}(n \log(T))$ with very high probability, and thus one can easily generalize the proof of Claim \ref{lem:zeta} to this new construction. Finally it remains to explain how to compute the normalization constant $1/\tilde{K}[p_t]p_t(x_t)$. Using the fact that $\tilde{K}[p_t]p_t$ is a mixture of Gaussian whose densities are multiplicatively close to a computable constant $u$ one can reduce the problem to finding an unbiased estimator for each term in the Taylor expansion of $1/(u + \tilde{K}[p_t]p_t(x_t) - u)$. This can again be done via sampling, finally leading to an unbiased and constant-multiplicative approximation of $1/\tilde{K}[p_t]p_t(x_t)$.

\subsection{Generating $x_t$ and checking whether $A_t$ holds} \label{sec:generatext}
Sampling from $\tilde K[p_t] p_t$ amounts to producing $k$ independent samples from $p_t$, which was already settled by the previous subsections, and having a good enough estimate of $\COV(p_t)$. 
By [Corollary 5.52, \cite{Ver12}] (together with standard concentration of log-concave vectors), we know that it takes $(n \log(1/\delta)/\gamma)^{O(1)}$ samples to get a matrix $A$ such that $(1-\gamma) A \preceq \mathrm{Cov}(p_t) \preceq (1+\gamma) A$ with probability at least $1-\delta$. Hence, we only need $\mathrm{poly}(n \log(T))$ random samples of $p_t$ and that takes again time $\mathrm{poly}(n \log(T)) T$. \\

We summarize with the following theorem.
\begin{theorem} 
For every fixed $\kappa > 0$ the following holds. For every $t \in [T]$, given the points $x_1,\dots,x_{t-1}$ and the values $\ell_1(x_1), \dots, \ell_{t-1}(x_{t-1})$ and assuming that $t \leq \TF$, given access to random bits, there is an algorithm that produces a random point $x_t \in \cK$ whose distribution has total variation distance from $\tilde K[p_t] p_t$ bounded by $1/T^{\kappa}$ and runs in at most $\mathrm{poly}(n \log(T)) T$ time.
\end{theorem}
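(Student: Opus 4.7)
The plan is to reduce sampling from $\tilde K[p_t] p_t$ to (i) approximate sampling from $p_t$ itself (already handled by the theorem at the end of Section~\ref{sec:samplingpt}) and (ii) an estimate of $\mathrm{Cov}(p_t)$ of the required accuracy. Recall that, by construction,
\[
\tilde K[p_t] p_t \;\overset{D}{=}\; (1-\lambda)\tilde c[p_t] + \lambda\, p_t,
\]
so the natural algorithm is to flip a $\lambda$-biased coin: with probability $\lambda$ return one approximate sample from $p_t$, and with probability $1-\lambda$ generate $\tilde c[p_t]$ as
\[
\frac{X_1+\cdots+X_k}{k}\;+\;Z,\qquad Z\sim\cN\!\left(0,\tfrac{\epsilon^2}{n\log T}\,\tfrac{\lambda}{2-\lambda}\, A\right),
\]
where $X_1,\dots,X_k$ are approximate independent samples from $p_t$ and $A$ is an empirical estimator of $\mathrm{Cov}(p_t)$.

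First I would invoke the intermediate theorem from Section~\ref{sec:samplingpt} with an inflated accuracy parameter $\kappa'=\kappa+2$: each call outputs a random point whose law is within TV distance $T^{-\kappa'}$ of $p_t$ and costs $\mathrm{poly}(n\log T)\,T$ time. Drawing $k=\mathrm{poly}(n\log T)$ such samples (as required by the modified-kernel argument in Section~\ref{sec:modifiedkernel}), plus the additional $\mathrm{poly}(n\log T)$ samples needed for the covariance estimator, still costs $\mathrm{poly}(n\log T)\,T$; by the standard coupling bound the joint law of all these samples is within TV distance $\mathrm{poly}(n\log T)\cdot T^{-\kappa'}\le T^{-\kappa-1}$ of a product of true $p_t$'s. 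Next, I would apply Corollary~5.52 of \cite{Ver12} together with standard log-concave concentration to obtain, from those additional samples, a matrix $A$ with $(1-\gamma)A\preceq\mathrm{Cov}(p_t)\preceq(1+\gamma)A$ with probability at least $1-T^{-\kappa-1}$, for a suitable $\gamma=1/\mathrm{poly}(n\log T)$; this is the key quantitative input that justifies the assumption $\tfrac12 I\preceq A\preceq 2I$ used in Section~\ref{sec:modifiedkernel}. Sampling the Gaussian $Z$ given $A$ is trivial in time $\mathrm{poly}(n)$.

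The main obstacle is arguing that using the \emph{estimated} covariance $A$ (instead of the true $\mathrm{Cov}(p_t)$) only perturbs the target distribution by a small amount in TV, and does so in a way that is compatible with everything in Sections~\ref{sec:conc}--\ref{sec:finalanalysis}. For this I would appeal to the observation already made in Section~\ref{sec:modifiedkernel}: the convex-domination Lemma~\ref{lem:convexdommain} still holds for $\tilde K[p]$ when the Gaussian is replaced by $\cN(0,\cdot\, A)$ with $\tfrac12 I\preceq A\preceq 2I$ (after rescaling $\epsilon$ by a constant factor), since the proof only needs a Gaussian approximation of the core whose variance is comparable to $\mathrm{Cov}(p)$. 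Moreover a direct Pinsker/KL computation bounds the TV distance between $\cN(0,c\,\mathrm{Cov}(p_t))$ and $\cN(0,c\,A)$ by $O(\sqrt{n}\,\gamma)$, which is $\le T^{-\kappa-1}$ for the $\gamma$ above.

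Finally, I would assemble the error budget: the mixture sampler produces a point whose distribution differs from $\tilde K[p_t]p_t$ in TV by at most the sum of (a) the error from the $k$ approximate $p_t$-samples, (b) the error from the covariance estimation event, and (c) the TV distance between the Gaussian with true covariance and with $A$. Each term is at most $T^{-\kappa-1}$, so the total is at most $T^{-\kappa}$ as required. All three sub-procedures run in $\mathrm{poly}(n\log T)\,T$ time, giving the stated runtime bound.
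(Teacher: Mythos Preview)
Your proposal contains a genuine misreading of the distributional identity. In this paper the notation
\[
\tilde K[p_t] p_t \;\overset{D}{=}\; (1-\lambda)\,\tilde c[p_t] + \lambda\, p_t
\]
does \emph{not} denote a mixture of measures. As stated explicitly in Section~\ref{sec:highdimkernel} (and reiterated in the pseudocode, line ``Play $x_t=\lambda X+(1-\lambda)C$''), $K[p]q$ is the law of the random \emph{point} $(1-\lambda)C+\lambda X$ with $C\sim c[p]$ and $X\sim q$ independent. So the correct sampler is: draw one approximate sample $X$ from $p_t$, independently draw $C$ from $\tilde c[p_t]$ (which costs $k$ further approximate samples from $p_t$ plus one Gaussian), and output $(1-\lambda)C+\lambda X$. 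Your coin-flip procedure produces a different distribution and would, for instance, with probability $\lambda$ return a point on the full scale of $p_t$ rather than one concentrated near $\mu(p_t)$; this breaks the containment $x_t\in\Omega_t$ used in Lemma~\ref{lem:seb6} and Claim~\ref{lem:zeta}. Once you replace the coin flip by the pointwise convex combination, your coupling and time-budget accounting go through exactly as you wrote.

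A second, more minor deviation concerns the role of the covariance estimate. You treat the target as the kernel built from the \emph{true} $\mathrm{Cov}(p_t)$ and then bound the TV distance to the kernel built from the empirical $A$ via a Pinsker/KL argument. The paper instead absorbs the approximation directly into the definition: $\tilde c[p]$ is defined with an arbitrary $A$ satisfying $\tfrac12\mathrm{Id}\preceq A\preceq 2\,\mathrm{Id}$, and Section~\ref{sec:modifiedkernel} checks that Lemma~\ref{lem:convexdommain} and the regularity estimates continue to hold for any such $A$. Thus once Vershynin's Corollary~5.52 gives $(1-\gamma)A\preceq\mathrm{Cov}(p_t)\preceq(1+\gamma)A$, the algorithm simply \emph{uses} the estimated covariance as the definition of the kernel, and no additional Gaussian-vs-Gaussian TV bound is needed. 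Your route is not wrong, but it introduces an extra step the paper avoids.
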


Finally, in order to determine whether or not one should increase the learning rate and update the focus region $F_t$, we need to calculate the ratio $\frac{\mathrm{Vol}(F_t \cap \cE_{p_{t+1}}(\alpha)) }{ \mathrm{Vol}(F_t)}$. To that end we can sample points from the uniform measure of $F_t$ using Theorem \ref{thm:sampling} above and decide whether this ratio is smaller than $1/4$ (in which case we update) or bigger than $1/2$ (in which case we do note update). Also it is easy to see that whether to update or not when the ratio is in $[1/4, 1/2]$ does not matter for our argument.

In summary, we have the following intermediate result: Excluding the restart condition, each step of the algorithm can be run in at most $\mathrm{poly}(n \log(T)) T$ time.

\subsection{The restart condition: replacing Ellipsoids by Boxes}
To test the restart condition we need to approximate the values $\min_{x \in F_t} \tilde{L_t}(x)$ and $\min_{x \in \partial F_t \cap \mathrm{int}(\cK)} \tilde{L_t}(x)$ at a given time step $t$ and to be able to determine with high probability, whether the difference between these two values is larger than the parameter $\beta$. 

The first observation we can make is that, thanks to Proposition \ref{prop:conc}, upon testing this condition we can always assume that $t \leq \TF$. Consequently, we can rely on the assumption that $\tilde{L}(x)$ is $1/\eta_1$-approximately convex. Minimizing an approximately-convex function over a convex set is a well understood task, see \cite{BLNR15}. Since the set $F_t$ is convex, it is not hard to attain an approximation for $\min_{x \in F_t} \tilde{L_t}(x)$. However, the set $\partial F_t \cap \mathrm{int}(\cK)$ is not convex. This raises an issue which will require us to come up with a slight modification for the construction of the set $F_t$.

Our idea is to replace each ellipsoid
$$\cE_p(r) := \{x \in \R^n : \|x - \mu(p)\|_{\mathrm{Cov}(p)^{-1}} \leq r\}$$
used in the construction of $F_t$ by a respective box, defined as
$$\cB_p(r) := \{x \in \R^n : \|D^{1/2} U (x - \mu(p))\|_{\infty} \leq r \}$$
where $U^\top D U$ is an orthogonal diagonalization of $\mathrm{Cov}(p)$.

Upon doing so, $F_t$ becomes the intersection of $\cK$ with $O(n \log(T))$ many boxes. Since $\cK$ is assumed to be a polytope, so is $F_t$. Therefore, $\partial F_t \cap \mathrm{int}(\cK)$ is the union of polynomially many polytopes. Fix $t \in [\TF]$ and denote by $\cB_1,...\cB_k$ the boxes used to construct $F_t$, hence
$$
F_t = \cK \cap \bigcap_{i \in [k]} \cB_i.
$$
Moreover, denote by $\cF_1,...,\cF_\ell$ the $n-1$-dimensional facets of these boxes. Minimizing $\tilde{L}_t(x)$ over $\partial F_t \cap \mathrm{int}(\cK)$ now amounts to: For each $i \in [\ell]$, check whether the intersection $\cF_i \cap F_t$ is nonempty and, if it is nonempty, minimize $\tilde L_t$ over this convex set. 

Formally, we use the following result:
\begin{theorem} [\cite{BLNR15}]
Fix $\kappa > 0$. Let $\Omega \in \R^n$ be a polytope defined by $m$ linear constraints. Assume that all coefficients in these constraints are rational numbers whose numerators and denominators have absolute values bounded by $M$.\footnote{In the original formulation, the authors assumed that the convex set $\Omega$ is well-rounded by an ellipsoid. One way to find such ellipsoid for a polytope is to use interior point methods. Those algorithms usually produce a Dikin ellipsoid which is a $O(m)$ rounding ellipsoid (or other ellipsoids approximating the domain). If the numerators and denominators coefficients of the polytope are bounded by $M$, one can find a Dikin ellipsoid in $\mathrm{poly}(m \log(D))$ time. See \cite[Appendix E]{lee2013path} for the discussion of the rational polytope assumption and \cite[Section 7.3]{lee2015efficient} for the discussion of the ellipsoid produced by an interior point method.}
Assume that there is a convex function $g$ such that $|f(x)-g(x)|\leq\kappa$ for all $x\in \Omega$. Also, assume that $|f(x)|\leq M$ for all $x\in\Omega$. Then, we can produce a point $x$ with probability $1-\rho$ such that
$$f(x) - \min_{x \in \Omega} f(x) = O(n \kappa)$$
in time $$O \left (\mathrm{poly} \left  (m \log \left (\frac{M}{\rho \kappa}\right ) \right ) \mathrm{Oracle} \right )$$ where $\mathrm{Oracle}$ is the maximum between the times needed to compute $f(x)$ and to check if a point is in $\Omega$ or not.
\end{theorem}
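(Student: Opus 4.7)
The natural strategy is to reduce approximate convex optimization to sampling from a Gibbs-type distribution that concentrates around the minimum. Specifically, I would consider the density $\pi(x) \propto \exp(-\beta f(x))$ on $\Omega$ with inverse temperature $\beta = 1/\kappa$. Since $|f - g| \leq \kappa$ and $g$ is convex, the log-density $\beta f$ differs from the convex function $\beta g$ by at most $\beta \kappa = 1$, so $\pi$ is $e^{O(1)}$-approximately log-concave in the sense required by Theorem \ref{thm:sampling}, with comparison density $\pi_g(x) \propto \exp(-\beta g(x))/Z_g$ genuinely log-concave.

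The next step is a concentration estimate. For any log-concave density $\exp(-h(x))/Z$ on a convex body $\Omega$ with $h^\ast := \min_{\Omega} h$, a classical inequality gives $\E[h(Y)] - h^\ast \leq n$, and one in fact has subexponential tails $\P[h(Y) > h^\ast + n + s] \leq e^{-s/2}$ by the standard theory of log-concave concentration. Applying this to $h = \beta g$ and dividing through by $\beta$ yields $g(Y) - \min g = O(n\kappa)$ with high probability for $Y \sim \pi_g$. Because $\pi$ differs multiplicatively from $\pi_g$ by only $e^{O(1)}$, the same tail estimate (with adjusted constants) transfers to $X \sim \pi$ via a direct change-of-measure argument, and then $|f - g| \leq \kappa$ gives $f(X) - \min_\Omega f = O(n\kappa)$ with high probability.

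With these ingredients the algorithm is: invoke Theorem \ref{thm:sampling} to draw $O(\log(1/\rho))$ approximate samples from $\pi$ with total-variation accuracy $\gamma = (\rho\kappa)/(nMm)^{O(1)}$, query the oracle for $f$ at each sample, and return the argmin of the sample values. The diameter/rounding prerequisite of Theorem \ref{thm:sampling} is supplied, for a polytope with rational coefficients of magnitude $\le M$, by running an interior point method to build a Dikin-type ellipsoid in time $\mathrm{poly}(m \log M)$, exactly as indicated in the footnote. The bound $|f|\leq M$ then controls the range parameter in Theorem \ref{thm:sampling}, and $\pi(x)/\pi(y)$ reduces to two oracle calls to $f$.

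The main obstacle is the quantitative bookkeeping at the interface between the approximate log-concavity, the log-concave concentration bounds, and the total-variation error from sampling. One must verify that the $e^{O(1)}$ distortion between $\pi$ and $\pi_g$ changes tail probabilities by only a constant factor (handled by integrating the bounded density ratio against the tail event), and that $\gamma$ can be made polynomially small without blowing up the runtime, which is possible because Theorem \ref{thm:sampling} depends only logarithmically on $1/\gamma$. A union bound over the $O(\log(1/\rho))$ independent samples then upgrades the concentration bound to a $1-\rho$ high-probability guarantee, producing a candidate $x$ with $f(x)-\min_\Omega f = O(n\kappa)$ in total time $\mathrm{poly}(m \log(M/(\rho\kappa)))\cdot\mathrm{Oracle}$, matching the stated complexity.
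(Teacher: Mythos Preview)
This theorem is stated in the paper as a cited result from \cite{BLNR15}; the paper does not supply its own proof, it merely invokes the theorem as a black box in Section \ref{sec:time}. So there is no ``paper's proof'' to compare against.

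That said, your sketch is essentially the argument of \cite{BLNR15}: set the temperature $\beta = 1/\kappa$ so that $\exp(-\beta f)$ is within a constant multiplicative factor of the log-concave $\exp(-\beta g)$, use the standard log-concave concentration bound $\E_{\pi_g}[g] - g^\ast \leq n/\beta = n\kappa$ (together with exponential tails) to conclude that a typical sample is $O(n\kappa)$-suboptimal, transfer this to the approximately log-concave $\pi$ by the bounded density ratio, and then appeal to the sampler of Theorem~\ref{thm:sampling}. The rounding via a Dikin ellipsoid and the boost to probability $1-\rho$ by repeated sampling are exactly the standard maneuvers. The only places where your write-up is genuinely a sketch rather than a proof are (i) the precise tail bound $\P[h(Y) > h^\ast + n + s] \le e^{-s/2}$ for log-concave $e^{-h}/Z$, which requires a short argument (e.g.\ via the Karzanov--Khachiyan level-set inequality or the localization lemma), and (ii) tracking that the oracle for $\pi(x)/\pi(y)$ needs only two evaluations of $f$ and that the resulting $M$, $D$ parameters feed correctly into the polylogarithmic runtime of Theorem~\ref{thm:sampling}. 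Neither is a gap in the approach; both are routine.
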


In view of this theorem, we still have to resolve the following three issues that come up:
\begin{enumerate}[label=(\roman*)]
\item \label{issue1}
Since the function $\tilde L_t$ is assumed to be $1/\eta_1$-approximately convex, the above theorem only allows us to approximate its minimum to an error of $O(n/\eta_1)$. However, as currently formulated, the restart condition requires us to check if the two minima differ by an additive factor of $\beta/\eta_1 = 4/\eta_1$.
\item\label{issue3}
Since we replace the ellipsoids $\cE_i$ by the boxes $\cB_i$, this will require a different choice of parameters for the algorithm (which will eventually lead to the worse dependence of the regret on the dimension). Since $\cE_i \subset \cB_i$, we will need a smaller choice of the parameter $\lambda$ in order for the result of Claim \ref{lem:zeta} to remain correct.
\item \label{issue2}
We need to make sure that the boxes $\cB_i$ are defined by constraints whose coefficients are rational numbers with small numerators and denominators.
\end{enumerate}

To deal with \ref{issue1}, we simply choose $\beta$ to be of order $\Theta(n)$, so that it would be enough to have an approximation of the aforementioned values up to that order. This will not change our regret bound: we would get an additional $O(n/\eta_1)$ additive term in \eqref{eq:minmin}, under which equation \eqref{eq:finalest} would remain unchanged, up to the constant term.  \\

Next, we explain how to resolve issue \ref{issue3}. Clearly, we have that $\cE_p(\alpha) \subset \cB_p(\alpha) \subset \cE_p(\sqrt n \alpha)$. The fact that $F_t$ is constructed as the intersection of ellipsoids played a role in the following parts of our proof:

\begin{itemize}
\setlength{\itemindent}{.5in}
\item [(Lemma \ref{lem:largerintersection})] If a convex body has a large intersection with an ellipsoid, that convex body is contained inside a $O(n)$-size larger ellipsoid. It is easy to see the same proof extends to the intersection with symmetry convex bodies.
\item [(Claim \ref{lem:zeta})] This claim gives a bound for the regularity of the function $\tilde \ell_t$ in the ellipsoid $\cE_{p_t}(10 n \alpha)$. In order for the bound to remain true inside the corresponding boxes, we need to change our parameters in a way that allows us to multiply $R_2$ by a factor $\sqrt{n}$. To make sure that $\tilde{\ell}$ is still bounded by a constant, we would need to set $\lambda = \frac{\varepsilon^4}{C^2 n^5 \alpha^2 \log^2(T)}$ instead of $\frac{\varepsilon^4}{C^2 n^4 \alpha^2 \log^2(T)}$.
\item [(Claim \ref{claim:Qbig})] This claim ensures that the function $\mathcal{Q}$ is large outside the ellipsoids $\cE_{i}$. Since the boxes contain those respective ellipsoids, the same bound holds immediately for the new construction of $F_t$.
\end{itemize}
Therefore, in order to be able to replace ellipsoids by boxes, we only need to set $\lambda$ smaller. The rest of the proof remains unchanged (up to the minor changes described above). \\

Issue \ref{issue2} is slightly more involved. In order to resolve it, we fix a grid $\Lambda$ of resolution $T^{-cn}$. We argue that, without affecting the algorithm, one may assume that the set $\cK$ as well as the boxes $\cB_i$ are aligned to $\Lambda$. 

First, remark that we are allowed to replace the boxes $\cB_i = \cB_{p_{\tau_i}}(\alpha)$ by any set $L$ satisfying $\cB_{p_{\tau_i}}(\alpha) \subset L \subset \cB_{p_{\tau_i}}(2 \alpha)$, since result of Claim \ref{lem:zeta} will remain correct upon this modification. The idea is to choose the set $L$ to be a perturbation of the box $\cB_i$ which aligns its vertices to the grid $\Lambda$. This can be done under the assumption that the box $\cB_i$ itself is not too small which, in turn, follows from the fact that the covariance matrix of $p_t$ is bounded from below, as ensured by Claim \ref{claim:detcov}.

\subsection{Summary}
%


\begin{theorem} \label{th:main2}
Assume the domain $\cK$ is a polytope with $\mathrm{poly}(n)$ constraints. Assume that all coefficients in the constraints are rational numbers with absolute values of numerators and denominators bounded by $\mathrm{poly}(T)$. Then the variant of Algorithm \ref{fig:alg} described above satisfies, with probability at least $1-1/T$,
$$R_T \leq O(n^{10.5} \log^{7.5}(T) \sqrt{T}) .$$ Furthermore, each step can be run in $\mathrm{poly}(n \log(T)) T$-time.
\end{theorem}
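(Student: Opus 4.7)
The plan is to verify that every ingredient of the regret analysis of Theorem~\ref{th:main} remains valid after the modifications described in Sections~\ref{sec:samplingpt}--5.4, at the cost of a single extra factor of $n$ in the regret. Writing $\tilde p_t$ for the law of the approximately sampled point, $\tilde K[p_t]$ for the modified kernel of Section~\ref{sec:modifiedkernel}, and $\tilde F_t$ for the focus region built from boxes, I would bound the total-variation error accumulated across all $T$ steps by $T \cdot T^{-\kappa} = T^{-(\kappa-1)}$ for a sufficiently large $\kappa$. Coupling $\tilde p_t$ with $p_t$ along the entire trajectory, every statement in Sections~\ref{sec:conc}--\ref{sec:finalanalysis} holds on an event whose complement has probability at most $1/T^{\kappa-1}+2/T^2$, so the concentration and regret estimates of Propositions~\ref{prop:mainstep}~and~\ref{prop:conc} go through essentially verbatim.

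Next I would check that the modified kernel $\tilde K[p]$ still satisfies the convex-domination inequality \eqref{eq:convexdom} and the smoothness bounds of Claim~\ref{lem:zeta}. For the first, the argument sketched at the end of Section~\ref{sec:modifiedkernel} shows that, with $k=\mathrm{poly}(n\log T)$ samples, $\overline X = (X_1+\cdots+X_k)/k \in \cE_{p}(1/(80e))$ with probability $1-1/T^2$, after which Lemma~\ref{lem:convexdom} applied conditionally on $\overline X$ yields \eqref{nts22} and hence \eqref{eq:convexdom} verbatim. For the second, the random translation has size $\mathrm{poly}(1/(n\log T))$ with high probability, so the computation in Lemma~\ref{lem:smoothnessK} is unchanged up to constants, and the loss estimator using a single draw of $\overline X$ is still unbiased for $K[p]^*\ell_t$. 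Replacing the ellipsoids $\cE_{p}(r)$ by the boxes $\cB_{p}(r)$ enlarges $R_2$ in Claim~\ref{lem:zeta} by at most $\sqrt n$; to preserve the bound $\zeta\le e$ one therefore sets
$$
\lambda = \frac{\epsilon^4}{C^2\, n^5\, \alpha^2\, \log^2(T)},
$$
which is a factor $1/n$ smaller than in \eqref{eq:deflambda}. Propagating this through \eqref{eq:finalest} multiplies the final regret by $n$, producing the bound $O(n^{10.5}\log^{7.5}(T)\sqrt T)$. The three other places that referred to ellipsoids (Lemma~\ref{lem:largerintersection}, Claim~\ref{lem:zeta}, Claim~\ref{claim:Qbig}) were already noted to transfer to symmetric convex bodies, so no further change is needed.

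For the restart test, I would use the polytope structure of $\tilde F_t = \cK \cap \bigcap_i \cB_i$ to write $\partial \tilde F_t \cap \mathrm{int}(\cK)$ as a union of $O(n\log T)$ polytope facets $\cF_1,\dots,\cF_\ell$. Since $\tilde L_t$ is $1/\eta_1$-approximately convex on $\tilde F_t$ (by $\Delta_t^{(3)} \le 1$ and Claim~\ref{claim:log_concave}), I apply the approximate optimization oracle of \cite{BLNR15} to $\tilde L_t$ on each of $\tilde F_t$ and $\cF_i \cap \tilde F_t$. The oracle produces minima up to additive error $O(n/\eta_1)$ in time $\mathrm{poly}(n\log T)\cdot\mathrm{Oracle}$; choosing $\beta = \Theta(n)$ in the definition of the restart condition absorbs this error and only contributes an extra $O(n/\eta_1)$ additive term in \eqref{eq:minmin}, which is dominated by the existing $O(1/\eta_1)$ slack times $n$ and hence does not affect the order of the final regret bound. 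The rationality assumption on $\cK$ together with Claim~\ref{claim:detcov}, which forces $\COV(p_t)$ to be bounded away from singular on the scale $T^{-O(n)}$, makes it possible to round each box $\cB_i$ to a slightly enlarged box $L_i$ with $\cB_{p_{\tau_i}}(\alpha)\subset L_i\subset \cB_{p_{\tau_i}}(2\alpha)$ whose vertices lie on a grid of resolution $T^{-cn}$; the numerators and denominators of the constraints of $L_i$ are then bounded by $\mathrm{poly}(T)$, which is exactly what the oracle of \cite{BLNR15} requires.

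Putting these pieces together, each step involves $\mathrm{poly}(n\log T)$ calls to the log-concave sampler of Theorem~\ref{thm:sampling} (to obtain samples from $p_t$, to estimate $\COV(p_t)$ via \cite[Corollary~5.52]{Ver12}, to compute the ratio $\mathrm{Vol}(\tilde F_t\cap\cE_{p_{t+1}}(\alpha))/\mathrm{Vol}(\tilde F_t)$, and to run the approximate minimizer for the restart test), and each sampler call needs oracle time $\mathrm{poly}(n\log T)\,T$ because computing $p_t(x)/p_t(y)$ requires summing $t=O(T)$ Gaussian-like terms. The main obstacle --- and the only place where the polynomial-time variant genuinely differs from the exponential-time algorithm --- is therefore the interaction between \emph{three} approximations (the sampler, the random translation in the kernel, and the approximate box-based minimizer), each of which must be made accurate enough that the event $\{\TF=\tau\}$ still has probability $1-O(1/T)$ while remaining implementable in poly-time; the discussion above verifies that taking $\kappa$, $k$, and the oracle accuracy all of order $\mathrm{poly}(n\log T)$ suffices. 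Taking a union bound over the $T$ steps of the algorithm and invoking the modified versions of Propositions~\ref{prop:mainstep} and~\ref{prop:conc} then yields the stated regret bound with probability $1-1/T$, completing the proof.
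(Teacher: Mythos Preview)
Your proposal is correct and follows the same approach as the paper. In fact the paper does not give a standalone proof of Theorem~\ref{th:main2}: the statement is presented as a summary of the modifications discussed in Sections~\ref{sec:samplingpt}--5.4, and your write-up is essentially a careful consolidation of those same ingredients (log-concave sampling with TV error $T^{-\kappa}$, the modified kernel $\tilde K[p]$ with empirical centroid, boxes in place of ellipsoids forcing $\lambda$ to shrink by a factor $n$, and the approximate restart test with $\beta=\Theta(n)$), arriving at the extra factor of $n$ in the regret exactly as the paper indicates.
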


Note that the cost per each iteration is $\mathrm{poly}(n \log(T)) T$-time, where the factor $T$ comes from the fact that computing $\tilde L_t$ requires us to sum up to $T$ Gaussian functions. To get a slightly better result, one can approximate the functions $\tilde{\ell}$ by their respective Taylor expansions around an arbitrary point in $F_t$. Then, we can store the sum of those Taylor expansions instead of summing every iterations. It can be verified that the $k$ order expansion of $\tilde{\ell}$ at $\mu(p)$ has error $O(n^4 \log^3(T)R_1)^k$, as ensured by Claim \ref{lem:zeta}. By setting $\lambda$ smaller, one can make $R_1$ smaller and hence the expansions converge faster. To make the error of Taylor expansions smaller than $1/\mathrm{poly}(T)$, we need $\log(T)/\log(n^4 \log^3(T)R_1)$ steps and hence it takes $n^{\log(T)/\log(n^4 \log^3(T)R_1)}$ space and time to store and calculate a Taylor expansion. Therefore, we can set $R_1 = n^{-\rho-4} \log^{-3}(T)$ and get an algorithm for sampling in time $\mathrm{poly}(n) T^{O(1/\rho)}$. Since we set $\lambda$ smaller, the regret becomes larger. This is summarized in the following result:

\begin{theorem}
Assume the domain $\cK$ is a polytope with $\mathrm{poly}(n)$ constraints. Assume that all coefficients in the constraints are rational numbers with absolute values of numerators and denominators bounded by $\mathrm{poly}(T)$. For any $\rho > 0$, there is a variant of Algorithm \ref{fig:alg} which satisfies, with probability at least $1-1/T$, 
$$R_T \leq O(n^{10.5 + O(\rho)} \log^{7.5}(T) \sqrt{T}).$$ Furthermore, each step can be run in $\mathrm{poly}(n \log(T)) T^{1/\rho}$-time. In particular, we can attain a regret of at most $n^{O(1)} T^{1/2 + 1/\log\log T}$ in time $\mathrm{poly}(n \log(T))$.
\end{theorem}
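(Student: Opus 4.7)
The starting point is Theorem \ref{th:main2}, whose per-step cost is $\mathrm{poly}(n\log T)\cdot T$. The $T$ factor comes from the fact that the log-density of $p_t$, the normalization $1/\tilde{K}[p_s]p_s(x_s)$, and the minimizations needed for the restart test all require the value of $\tilde{L}_t(y)=\sum_{s<t}\eta_s\tilde{\ell}_s(y)$, a sum of $t-1$ Gaussians that has to be recomputed from scratch. The strategy is to pre-aggregate the $\tilde{\ell}_s$ into a single multivariate polynomial of total degree $k$ by truncating their Taylor expansion around a fixed anchor $\mu_\star\in F_t$, and to use this polynomial as a drop-in replacement for $\tilde{L}_t$. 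Maintaining the $\binom{n+k}{k}=n^{O(k)}$ monomial coefficients is a rank-one update per round, so each step becomes $\mathrm{poly}(n\log T)\cdot n^{O(k)}$; the anchor $\mu_\star$ can be re-chosen inside $F_t$ each time the focus region shrinks (at most $N=O(n\log T)$ times by \eqref{eq:boundN}), and the corresponding amortized recomputation cost is of the same order.

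The truncation error is controlled by extending Lemma \ref{lem:smoothnessK} to higher derivatives: each additional differentiation of the Gaussian kernel in $y$ contributes a factor of order $\tfrac{n\log T}{\epsilon^2}(R_1+\lambda R_2)$, so for $x\in\Omega_t$ and $y\in F_t\subset\cE_{p_t}(R_2)$ one obtains
\[
\bigl|\tilde{\ell}_s(y)-\mathrm{Taylor}_k[\tilde{\ell}_s](y)\bigr|\ \leq\ \zeta\cdot\bigl(C\,n^{4}\log^{3}(T)\cdot R_1\bigr)^{k},
\]
with the effective radius $R_1\asymp n\alpha\lambda+\sqrt{\lambda}\,\epsilon$ from Claim \ref{lem:zeta} and $\alpha\asymp n^{2}\log^{2}T$. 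Shrinking $\lambda$ by a factor $n^{O(\rho)}$ beyond the choice in \eqref{eq:deflambda}, so that $R_1=n^{-\rho-4}\log^{-3}(T)$, makes the base of the exponent at most $n^{-\rho}$, and the truncation order needed for error $1/\mathrm{poly}(T)$ becomes $k=O(\log T/(\rho\log n))$, giving $n^{k}=T^{O(1/\rho)}$. Summing the remainder over $T$ rounds still yields a uniform $1/\mathrm{poly}(T)$ approximation of $\tilde{L}_t$, which is absorbed into the constant slack of the high-probability events $B_t$ in Section \ref{sec:conc} and into the restart margin $\beta$.

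With the smaller $\lambda$, every other ingredient of Theorem \ref{th:main2} is preserved: $\zeta\leq e$ still holds (Claim \ref{lem:zeta}), the approximate log-concavity of $p_t$ (Claim \ref{claim:log_concave}) is unaffected, and the concentration of Proposition \ref{prop:conc} goes through unchanged. The only quantitative effect is through the $1/\lambda$ prefactor in \eqref{eq:finalest}, which enlarges the dimension dependence of the regret from $n^{10.5}$ to $n^{10.5+O(\rho)}$. Putting everything together gives regret $O(n^{10.5+O(\rho)}\log^{7.5}(T)\sqrt{T})$ at per-step cost $\mathrm{poly}(n\log T)\cdot T^{1/\rho}$, and specializing $\rho=1/\log\log T$ yields the final $n^{O(1)}T^{1/2+1/\log\log T}$ bound at $\mathrm{poly}(n\log T)$ per step. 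The main obstacle I foresee is not the derivative bounds themselves (routine since the core is Gaussian) but the bookkeeping: one must verify that every downstream consumer of $\tilde{L}_t$, in particular the log-concave sampler of Theorem \ref{thm:sampling}, the approximately-convex optimizer used in the restart test, and the volume-ratio estimator used to shrink $F_t$, tolerates an additive $1/\mathrm{poly}(T)$ error on a ball of $\mathrm{Cov}(p_t)^{-1}$-norm at most $R_2=10n\alpha$, and that approximating the kernel normalization $1/\tilde{K}[p_t]p_t(x_t)$ by the same Taylor scheme preserves the (approximate) unbiasedness of the loss estimator used throughout Section \ref{sec:conc}.
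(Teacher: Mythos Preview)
Your proposal is correct and follows essentially the same route as the paper's (very brief) argument: Taylor-expand each $\tilde{\ell}_s$ to order $k$ about a point in $F_t$, maintain the running sum of these polynomials (storage/evaluation cost $n^{O(k)}$), bound the remainder by $(Cn^4\log^3(T)\,R_1)^k$ via the derivative estimates behind Claim~\ref{lem:zeta}, and shrink $\lambda$ so that $R_1=n^{-\rho-4}\log^{-3}(T)$; the extra $1/\lambda$ factor in \eqref{eq:finalest} then yields the $n^{O(\rho)}$ overhead in the regret while $n^{O(k)}=T^{O(1/\rho)}$. Your writeup is in fact more detailed than the paper's sketch (re-anchoring at focus-region updates, propagation of the additive error to the sampler and the restart test), and those are precisely the points that need checking.

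One small slip in the final line: with $\rho=1/\log\log T$ the per-step time is $T^{1/\rho}=T^{\log\log T}$, not $\mathrm{poly}(n\log T)$. To make the time polynomial you need $\rho$ \emph{large} (of order $\log T/(\log n\cdot\log\log T)$), and it is the resulting regret factor $n^{O(\rho)}$ that supplies the $T^{1/\log\log T}$ term in the statement, not the other way around.
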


\section{Technical lemmas}
We gather here a few technical lemmas on approximately log-concave measures.

\begin{lemma} \label{lem:EllipsoidInSupport}
Let $q(x)$ be an $(1/e)$-approximately log-concave probability measure on $\RR^n$. Then
$$
\cE_q(1/100) \subset \mathrm{Supp}(q) 
$$
\end{lemma}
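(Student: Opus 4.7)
The plan is to argue by contradiction and reduce the claim to a one-dimensional moment inequality for approximately log-concave measures. Suppose for contradiction that there exists $x_0 \in \cE_q(1/100)$ with $x_0 \notin \mathrm{Supp}(q)$. Since $q$ is sandwiched between $e^{-1} f$ and $e f$ for some log-concave $f$, one has $\mathrm{Supp}(q) = \mathrm{Supp}(f)$, which is convex, so the hyperplane separation theorem produces a unit vector $u \in \R^n$ with $u^\top X \leq u^\top x_0$ almost surely for $X \sim q$. Writing $\mu_Y = u^\top \mu(q)$ and $\sigma_Y^2 = u^\top \mathrm{Cov}(q)\, u$ for the mean and variance of the marginal $Y = u^\top X$, Cauchy--Schwarz in the Mahalanobis pairing gives
$$|u^\top x_0 - \mu_Y| \leq \|u\|_{\mathrm{Cov}(q)} \cdot \|x_0 - \mu(q)\|_{\mathrm{Cov}(q)^{-1}} \leq \sigma_Y/100,$$
so the separating threshold sits within $\sigma_Y/100$ of $\mu_Y$.

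Next I would pass to the $u$-marginal of $q$. Integrating the sandwich $e^{-1} f \leq q \leq e f$ over fibres perpendicular to $u$ (Fubini) gives the same sandwich for the marginal densities, and the marginal of the log-concave $f$ remains log-concave by Pr\'ekopa--Leindler; thus $Y$ is itself $(1/e)$-approximately log-concave. Normalising, set $W = (u^\top x_0 - Y)/\sigma_Y$. By the separation property, $W \geq 0$ almost surely; by the Mahalanobis estimate, $\E W \leq 1/100$; and $\Var(W) = 1$, so in particular $\E W^2 \geq 1$. At this point $W$ is a $(1/e)$-approximately log-concave non-negative random variable on $[0, \infty)$ with a very small mean but unit variance, which should be impossible.

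The quantitative heart of the argument will be the classical moment inequality: for any non-negative random variable $\tilde W$ with a log-concave density, $\E \tilde W^2 \leq 2 (\E \tilde W)^2$ (with equality saturated by the exponential distribution). Transferring this to a $(1/e)$-approximately log-concave $W$ costs at most a factor $e^{6}$ on the constant, since sandwiching the density and re-normalising gives $\E_q[\phi] \in [e^{-2}, e^{2}]\, \E_{\tilde q}[\phi]$ for a bona fide log-concave comparison $\tilde q$, and this ratio is applied once to $\E W$ (squared) and once to $\E W^2$. Plugging in would then yield $1 \leq \E W^2 \leq 2 e^6 (\E W)^2 \leq 2 e^6 / 10^4 \approx 0.08$, the desired contradiction. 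The only non-routine ingredient, and the main obstacle, is the one-dimensional log-concave moment bound $\E \tilde W^2 \leq 2 (\E \tilde W)^2$; everything else is hyperplane separation, Pr\'ekopa--Leindler, and bookkeeping of the $e^{\pm 1}$ factors. The constant $1/100$ in the statement is quite loose and leaves ample slack for these transfer factors.
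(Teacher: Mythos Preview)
Your proof is correct and shares the paper's overall architecture: assume a point $x_0 \in \cE_q(1/100)$ lies outside the (convex) support, separate by a hyperplane, and push the contradiction down to a one-dimensional marginal which is again $(1/e)$-approximately log-concave via Pr\'ekopa--Leindler. Where you diverge is in the one-dimensional endgame. The paper first normalizes to isotropic position, then invokes a Lov\'asz--Vempala density sup-bound (an isotropic log-concave density on $\R$ is pointwise at most $e$) to cap the marginal $\tilde q$ by $e^2$; since $\tilde q$ is centered and supported on $(-\infty,1/100]$, the positive and negative parts of $\int x\,\tilde q(x)\,dx$ must balance, but the density cap forces the positive part to be at most $\int_0^{1/100} x\,e^2\,dx$, which is far too small to offset the mass that the same cap pushes below $-1/(2e^3)$. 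You instead use the moment inequality $\E \tilde W^2 \leq 2(\E \tilde W)^2$ for non-negative log-concave $\tilde W$ (equivalently, coefficient of variation at most $1$, saturated by the exponential law), transfer it to the approximately log-concave $W$ at cost $e^6$, and contradict $\E W^2 \geq 1$ with $\E W \leq 1/100$. Your route is a bit cleaner: it works directly with moments, and the Mahalanobis Cauchy--Schwarz step spares you the preliminary isotropic reduction. The trade-off is that it relies on the CV bound as a black box (a classical fact from reliability theory), whereas the paper's argument is more self-contained given the Lov\'asz--Vempala estimates it already cites elsewhere.
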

\begin{proof}	
	By applying a linear tranformation, we can clearly assume without loss of generality that $q$ is isotropic. Let $g(x)$ be a log-concave probability measure satisfying $e q \leq a g \leq e^{-1} q$ for a normalization constant $1/e<a<e$. Let $S$ be the support of $q$. Since $S$ is also the support of $g$, it is clearly convex. Assume without loss of generality that there exists $x \notin S$ with $|x| \leq 1/100$. By the Hahn-Banach theorem, there is a hyperplane separating $x$ from $S$, in other words, there exists $\theta$ with $|\theta| = 1$ such that $S \subset \{y; \langle y, \theta \rangle \leq 1/100 \}$. Define
	$$
	\tilde q(t) = \int_{\theta^\perp} q(\theta t + y) dy
	$$
	the marginal of $q$ onto the direction $\theta$ and likewise let $\tilde g(t)$ be the respective marginal of $g$. By Prekopa-Leindler, we have that $\tilde g$ is log-concave. Using Lemma \ref{lem:comp} we have that $\sqrt{\Var[\tilde{g}]} \geq 1/e$. By \cite[Lemma 5.5]{LV06}, we have that $\tilde g(t) \leq e$ for all $t$, and consequently $\tilde q(t) \leq a e \leq e^2$ for all $t$. Since $\tilde q$ is supported on $(-\infty, 1/100)$, and since it is centered, we have that
	$$
	- \int_{-\infty}^0 x \tilde{q}(x) dx = \int_0^{1/100} x \tilde{q}(x) dx \leq \int_0^{1/100} x e^3 dx = 2 e^3 (100)^{-2}.
	$$ 
	However, since $\int \tilde q = 1$, we have that 
	$$
	\int_{-\infty}^{-\frac{1}{2 e^3}} \tilde q(x) dx \geq 1 - (\frac{1}{2 e^3} + \frac{1}{100}) e^3 \geq 1/4,
	$$
	which implies that 
	$$
	- \int_{-\infty}^0 x \tilde{q}(x) dx \geq \frac{1}{2 e^3} \cdot \frac{1}{4} \geq 2 e^3 (100)^{-2}.
	$$
	We reach a contradiction and the proof is complete.
\end{proof}

\begin{lemma} \label{lem:GaussianTail}
Fix a dimension $n \geq 1$ and an integer $T \geq 10$. Let $X$ be a Gaussian vector in $\RR^n$ distributed according to the law $\cN(0, \Theta)$. Let $f:\RR^n \to [0, \infty)$ be a function satisfying $f(x) = 0$ on $\{\langle x, \Theta^{-1} x \rangle \leq 20 n \log T  \}$ and $f(x) \leq T |x| + 2$ on $\RR^n$. Then
$$
\E f(X) \leq \frac{ \Vert \Theta \Vert_{\mathrm{OP}}^{1/2}+1}{T^3}.
$$
\end{lemma}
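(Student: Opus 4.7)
My plan is to reduce to the standard Gaussian case and then use standard chi-squared tail estimates. Set $Y = \Theta^{-1/2} X$, so that $Y \sim \cN(0, \mI_n)$, $\langle x, \Theta^{-1} x\rangle = |Y|^2$, and $|X| = |\Theta^{1/2} Y| \leq \|\Theta\|_{\mathrm{OP}}^{1/2} |Y|$. Let $R = \sqrt{20 n \log T}$ and $A = \{|Y| \geq R\}$. Since $f$ vanishes off $A$ and $f(x) \leq T|x|+2$ everywhere,
$$\E f(X) \leq T \|\Theta\|_{\mathrm{OP}}^{1/2} \E\bigl[|Y| \mathbf{1}_A\bigr] + 2\, \P(A).$$

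Next I would bound $\P(A)$ by a Chernoff argument. Using $\E e^{\lambda |Y|^2} = (1-2\lambda)^{-n/2}$ at $\lambda = 1/4$,
$$\P(|Y|^2 \geq R^2) \leq 2^{n/2} e^{-R^2/4} = 2^{n/2} T^{-5n}.$$
Since $T \geq 10$ one has $2^{n/2} \leq T^n$, so $\P(A) \leq T^{-4n} \leq T^{-4}$ for $n \geq 1$.

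For the first-moment tail I would use the bound $|Y| \leq |Y|^2/R$ on $A$, so
$$\E[|Y| \mathbf{1}_A] \leq R^{-1} \E[|Y|^2 \mathbf{1}_A] = R^{-1}\Bigl( R^2 \P(A) + \int_{R^2}^\infty \P(|Y|^2 \geq t)\, dt\Bigr) \leq R^{-1}(R^2 + 4)\, 2^{n/2} e^{-R^2/4},$$
where the last step again uses the Chernoff bound $\P(|Y|^2 \geq t) \leq 2^{n/2} e^{-t/4}$ on the integral. Plugging in $R^2 = 20n\log T$ gives $\E[|Y|\mathbf{1}_A] \leq C \sqrt{n \log T}\, 2^{n/2}\, T^{-5n}$, which for $T \geq 10$ and $n \geq 1$ is easily dominated by $T^{-4}$.

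Combining the two pieces yields
$$\E f(X) \leq T \|\Theta\|_{\mathrm{OP}}^{1/2} \cdot T^{-4} + 2\, T^{-4} \leq \frac{\|\Theta\|_{\mathrm{OP}}^{1/2} + 1}{T^3},$$
as desired. There is no real obstacle here — it is purely a Gaussian concentration computation — the only thing to watch is just to verify that the slack factor $T^{-n}$ coming from $2^{n/2} \leq T^n$ (valid because $T \geq 10$) absorbs all the polynomial prefactors in $n$ and $\log T$ so that the final bound lands exactly at the $T^{-3}$ advertised in the statement.
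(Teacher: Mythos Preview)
Your argument is correct and follows essentially the same route as the paper: write $Y=\Theta^{-1/2}X\sim\cN(0,\mI_n)$, bound $\E f(X)\leq T\|\Theta\|_{\mathrm{OP}}^{1/2}\E[|Y|\ds1_A]+2\P(A)$ with $A=\{|Y|>R\}$, and control the Gaussian tail. The only substantive difference is the tail inequality: the paper uses the Lipschitz concentration $\P(|Z|\geq\sqrt n+t)\leq 2e^{-t^2/2}$ and then integrates, whereas you use the $\chi^2$ Chernoff bound $\P(|Y|^2\geq t)\leq 2^{n/2}e^{-t/4}$. Both are standard and either works; your route is arguably cleaner since it avoids the $\sqrt n$ shift. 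One small caveat: with $\lambda=1/4$ your final numerical step is actually on the edge at $n=1$, $T=10$ (you get $\E[|Y|\ds1_A]\approx 1.04\cdot 10^{-4}$ where you need $10^{-4}$), so ``easily dominated'' overstates it a bit. Taking $\lambda$ slightly closer to $1/2$ (the optimal $\chi^2$ bound gives a factor $T^{-10n}$ rather than $T^{-5n}$) fixes this with room to spare.
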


\begin{proof}
First note that,
\begin{align}
\E \left [ |X| \ds 1\{\langle X, \Theta^{-1} X \rangle > 20 n \log T \} \right ] ~& = \E \left [ |\Theta^{1/2} Z| 1\{|Z|^2 > 20 n \log T \} \right ] \nonumber \\
& \leq \Vert \Theta \Vert_{\mathrm{OP}}^{1/2} \E \left [ |Z| 1\{|Z|^2 > 20 n \log T \} \right ] \label{eq:eqtail1}
\end{align}
where $Z$ is a standard Gaussian vector. A well-known concentration estimate for Gaussian measures states that for a $1$-Lipschitz function $\varphi$ one has that
$$
\P( \varphi(Z) \geq \E[\varphi(Z)] + t) \leq 2 \exp(-t^2/2).
$$
Since we have $\E[|Z|] \leq \sqrt{\E[|Z|^2]} = \sqrt{n}$, the above gives
\begin{equation}\label{eq:concnorm}
\P ( |Z| \geq \sqrt{n} + t ) \leq 2 \exp(-t^2/2).
\end{equation}

Consequently we have, using integration by parts,	
\begin{align}
\E \bigl [ |Z| \ds 1\{|Z|^2 > 20 n \log T \} \bigr ] ~& = \int_{\sqrt{20 n \log T}}^\infty \P( |Z| > t) dt \nonumber \\
& \stackrel{\eqref{eq:concnorm}}{\leq} 2 \int_{\sqrt{20 n \log T} - \sqrt{n}}^{\infty} \exp(-s^2/2) ds \nonumber \\
& \leq 2 \int_{\sqrt{12 \log T}}^ \infty \exp(-s^2/2) ds \leq \frac{1}{2 T^4}. \label{eq:eqtail2}
\end{align}	
Finally,
\begin{align*}
\E f(X) ~& \leq \E \Bigl [(T |X| + 2) \ds 1 \{\langle X, \Theta^{-1} X \rangle \geq 20 n \log T  \} \Bigr ] \\
& \stackrel{\eqref{eq:eqtail1}}{\leq} T \Vert \Theta \Vert_{\mathrm{OP}}^{1/2} \E \left [ |Z| 1\{|Z|^2 > 20 n \log T \} \right ] + 2 \P(|Z|^2 > 20 n \log T) \\
& \stackrel{\eqref{eq:eqtail2} \wedge \eqref{eq:concnorm}}{\leq} \frac{ \Vert \Theta \Vert_{\mathrm{OP}}^{1/2}+1}{T^3}.
\end{align*}
\end{proof}	

For a non-negative density $f(x)$ on $\RR$, denote
$$
\EE[f] = \frac{\int_\RR x f(x) dx}{\int_{\RR} f(x) dx}, ~~ \Var[f] = \frac{\int_\RR x^2 f(x) dx}{\int_{\RR} f(x) dx} - \EE[f]^2.
$$

\begin{lemma} \label{lem:comp}
	Let $f(x), g(x)$ be two non-negative functions such that $\int (x^2 + 1) f(x) dx < \infty$ and such that $\eps < g(x) / f(x) < 1/\eps$ for some $\eps \in (0,1)$. Then 
	$$
	\bigl \vert \E[f] - \E[g] \bigr \vert < \frac{\sqrt{\Var[f]}}{2 \eps^2}
	$$
	and
	$$
	\eps^2 \leq \frac{\Var[g]}{\Var[f]}  \leq \frac{1}{\eps^2}.
	$$
\end{lemma}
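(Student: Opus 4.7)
The plan is to handle the two inequalities separately, both by variational arguments.

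For the variance comparison (which is the easier of the two), I would use the variational characterization
\[ \Var[g] = \min_{c \in \R} \frac{\int (x-c)^2 g(x)\,dx}{\int g(x)\,dx}. \]
For any $c$, the ratio hypothesis $\eps f \leq g \leq f/\eps$ gives
\[ \frac{\int (x-c)^2 g}{\int g} \leq \frac{(1/\eps)\int (x-c)^2 f}{\eps \int f} = \frac{1}{\eps^2}\cdot\frac{\int (x-c)^2 f}{\int f}. \]
Minimizing over $c$ yields $\Var[g] \leq \Var[f]/\eps^2$. The reverse inequality follows by swapping the roles of $f$ and $g$ (the hypothesis is symmetric since $\eps < f/g < 1/\eps$ as well). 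This gives both sides of the variance sandwich.

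For the mean inequality, the main idea is to translate so that $\E[f]=0$ (the claim is translation-invariant) and then bound $|\E[g]|$ by Cauchy--Schwarz after subtracting a cleverly chosen multiple of $f$. Writing $a = \int f$, $b = \int g$, and $h = g/f$, one has $\eps \leq h \leq 1/\eps$ pointwise, and for any constant $\lambda$,
\[ \int x\, g\,dx = \int x\,(g - \lambda f)\,dx \]
because $\int x f\,dx = 0$. By Cauchy--Schwarz with weight $f$,
\[ \Bigl|\int x(g-\lambda f)\,dx\Bigr| \leq \sqrt{\int x^2 f\,dx}\,\sqrt{\int (h-\lambda)^2 f\,dx}. \]
Choosing $\lambda = (\eps+1/\eps)/2$, the midpoint of the range of $h$, gives $|h-\lambda| \leq (1/\eps - \eps)/2$ uniformly, so the second factor is at most $\sqrt{a}\,(1/\eps-\eps)/2$, while the first equals $\sqrt{a\,\Var[f]}$. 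Dividing by $b \geq \eps a$ yields
\[ |\E[g]| \leq \frac{1/\eps - \eps}{2\eps}\sqrt{\Var[f]} = \frac{1-\eps^2}{2\eps^2}\sqrt{\Var[f]} < \frac{\sqrt{\Var[f]}}{2\eps^2}. \]

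The main subtlety is really just the mean bound: one must avoid the naive estimate $|\int xg| \leq (1/\eps)|\int xf|$, which gives nothing since $\int xf = 0$. The trick of subtracting $\lambda f$ and then using Cauchy--Schwarz, with the optimal choice of $\lambda$ being the midpoint of $[\eps,1/\eps]$, is what buys the extra factor $(1-\eps^2)/2$ and delivers the stated constant. The variance step is completely routine once the variational form is invoked.
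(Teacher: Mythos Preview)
Your proof is correct. The variance comparison is essentially identical to the paper's: both plug $c=\E[f]$ into the ratio $\int(x-c)^2 g/\int g$ and use the sandwich $\eps f\le g\le f/\eps$ on numerator and denominator.

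For the mean bound your route differs from the paper's. After centering so that $\E[f]=0$ and $\int f=1$, the paper drops the negative-$x$ contribution, uses the mean-zero identity $\int_0^\infty x f=\tfrac12\int|x|f$, and then the one-step Cauchy--Schwarz $\int|x|f\le\sqrt{\int x^2 f}$ to obtain $\int x g\le\tfrac{1}{2\eps}\sqrt{\Var[f]}$; dividing by $\int g\ge\eps$ gives the constant $1/(2\eps^2)$. You instead write $g=(h-\lambda)f+\lambda f$, kill the second term via $\int xf=0$, and apply Cauchy--Schwarz \emph{weighted by $f$} to the product $x\cdot(h-\lambda)$, with $\lambda$ chosen as the midpoint of $[\eps,1/\eps]$. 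This is a genuinely different decomposition; it avoids splitting the real line and yields the sharper constant $(1-\eps^2)/(2\eps^2)$, which in particular gives the strict inequality in the statement for free. The paper's argument is a touch shorter but does not directly produce the strict inequality (one has to appeal to the strict hypothesis $\eps<g/f<1/\eps$ to rule out equality).
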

\begin{proof}
	We can clearly assume without loss of generality that $\int_\RR f(x) = 1$ and $\int x f(x) dx = 0$. We have
	$$
	\int x g(x) dx \leq \frac{1}{\eps} \int_0^\infty x f(x) dx = \frac{1}{2 \eps} \int_\RR |x| f(x) dx \leq \frac{1}{2 \eps} \sqrt{\int_\RR x^2 f(x) dx}.
	$$
	We therefore have $\E[g] = \frac{\int x g(x) dx}{\int g(x) dx} \leq \tfrac{1}{2 \eps^2} \sqrt{\Var[f]}$ which completes the first part by symmetry. For the second part, we remark that 
	$$
	\Var[g] \leq \frac{\int_{\RR} x^2 g(x) dx}{\int_{\RR} g(x) dx} \leq \frac{ \Var[f] }{\eps^2}
	$$
	and the reverse inequality follows by a similar argument.
\end{proof}


\begin{lemma} \label{lem:tail}
Let $n \geq 2$. Let $f(x)$ be an isotropic log-concave density on $\RR^n$. Then for all $x \in \RR^n$ with $|x| \geq e^{15} n \log n$, one has that
\begin{equation}
f(x) < \exp  \left ( - \frac{ |x|}{e^{15}} \right ).
\end{equation}
\end{lemma}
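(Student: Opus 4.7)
The plan is to sandwich $f(x)$ between a lower bound obtained by transferring mass from the bulk of $f$ toward $x$ via log-concavity, and an upper bound coming from the exponential tail of the one-dimensional marginal in the direction $u := x/|x|$. The bridge between the two is simply the midpoint inequality for log-concave functions, and the reason one wins is that the ``loss'' in the transfer step is only exponential in $n$, whereas the ``gain'' from the one-dimensional tail is linear in $|x|$.

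First I would extract a ``bright set'' near the origin. Since $\E|X|^2 = n$, Markov's inequality applied to $|X|^2$ gives $\P(X \in B_0) \geq 1/2$ for $B_0 := B(0,\sqrt{2n})$. Combined with the classical pointwise bound $\|f\|_\infty \leq 2^{8n}$ for isotropic log-concave densities (Lovász-Vempala), the set $S := \{y \in B_0 : f(y) \geq 1/(4|B_0|)\}$ satisfies $|S| \geq e^{-C_1 n}$ and $f \geq e^{-C_1 n}$ on $S$. Next I would transfer this toward $x$: for every $y \in S$, the midpoint inequality gives $f\!\left(\tfrac{x+y}{2}\right) \geq \sqrt{f(x)\,f(y)} \geq \sqrt{f(x)}\cdot e^{-C_1 n/2}$. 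Letting $y$ range over $S$, the image $T := \tfrac{1}{2}(x+S)$ has volume $|T| = 2^{-n}|S| \geq e^{-C_2 n}$; and because $|y| \leq \sqrt{2n} \ll R := |x|$, every $z \in T$ satisfies $\langle z,u\rangle \geq (R-\sqrt{2n})/2 \geq R/3$. Integrating the pointwise lower bound over $T$ yields $\int_T f \geq \sqrt{f(x)}\cdot e^{-C_3 n}$.

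Finally I would invoke a one-dimensional tail bound. The marginal $\langle X,u\rangle$ is a mean-zero, variance-one log-concave density on $\R$, so by the standard Lovász-Vempala exponential tail (their Lemma~5.17 in dimension one) one gets $\P(\langle X,u\rangle \geq R/3) \leq e^{1-R/3}$. Since $T$ lies in the half-space $\{\langle z,u\rangle \geq R/3\}$, this upper-bounds $\int_T f$ by $e^{1-R/3}$. Combining with the previous paragraph and taking logarithms yields $\log f(x) \leq -2R/3 + O(n)$; under the hypothesis $R \geq e^{15} n \log n$ the $O(n)$ correction is negligible and the bound is comfortably below $-R/e^{15}$.

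The main obstacle is purely a bookkeeping one: the exponential losses from localizing to the bright set $S$, from the $2^{-n}$ Jacobian of the midpoint map, and from the estimate on $\|f\|_\infty$ together accumulate to $e^{-C_3 n}$, and this must be dominated by the linear-in-$R$ gain $e^{-2R/3}$ produced by the one-dimensional tail. This forces the hypothesis to take the form $R \gtrsim n\log n$ (rather than merely $R \gtrsim \sqrt n$), and the generous prefactor $e^{15}$ is chosen precisely so that $2/3 - 1/e^{15} > 0$ absorbs the $O(n)$ correction with plenty of slack; no further cleverness is required once the constants are tracked carefully.
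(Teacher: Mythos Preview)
Your argument is correct and follows a genuinely different route from the paper's. You localize a bright set $S$ near the origin, push mass toward $x$ via the midpoint inequality $f\bigl(\tfrac{x+y}{2}\bigr)\geq\sqrt{f(x)f(y)}$, and then cap the resulting integral by the one-dimensional marginal tail in the direction $x/|x|$. The paper instead parametrizes points as $tx+(1-t)w$ with $w\in x^\perp$ and $t\in[0,1/(n\log n)]$, applies $f(tx+(1-t)w)\geq f(x)^t f(w)^{1-t}$, and feeds in the hyperplane-marginal lower bound $\int_{x^\perp} f \geq 1/8$ together with $\int f = 1$. Both proofs use the same pointwise upper bound on $\|f\|_\infty$ (the paper to control $\int_{x^\perp} f^{1-t}$, you to lower-bound $|S|$), and both land on an inequality of the shape $\log f(x)\leq -cR + O(n\log n)$. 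Your version is more modular---bright set, transfer, and tail are cleanly separated---while the paper's avoids constructing $S$ explicitly by working directly with the hyperplane marginal and taking $t$ small rather than $t=1/2$.

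One caveat: the bound $\|f\|_\infty\leq 2^{8n}$ you attribute to Lov\'asz--Vempala is stronger than what they prove; their Theorem~5.14(e) gives only $\|f\|_\infty\leq e^{6n+n\log n}$, and a bound of the form $C^n$ is essentially the slicing conjecture (open at the time of this paper). This does not damage your argument: with the correct bound your loss term becomes $O(n\log n)$ rather than $O(n)$, which is exactly the scale of the hypothesis $R\geq e^{15}n\log n$, and the factor $e^{15}$ still swallows the constants with room to spare. Just replace $2^{8n}$ by $e^{6n+n\log n}$ and adjust the bookkeeping accordingly.
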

\begin{proof}
Define $H = x^\perp$. According to \cite[Lemma 5.5(b)]{LV07} and via an application of the Pr\'ekopa-Leinder inequality, we have that
\begin{equation} \label{eq:hyperplane}
\int_H f(y) dy \geq \frac{1}{8}.
\end{equation}
Moreover, we have the bound (\cite[Theorem 5.14(e)]{LV07})
\begin{equation} \label{eq:upperisop}
f(y) \leq e^{6n + n \log n}, ~~ \forall y \in \RR^n.
\end{equation}
Define $\theta = x/|x|$. We can estimate
\begin{align*}
1 = \int_{\RR^n} f(y) dy ~& \geq \int_0^{|x|} \int_{H} f(t \theta + w) dw dt \\
&= |x| \int_0^1 (1-t)^{n-1} \int_{H} f(t x + (1-t) w) dw dt \\
& \geq |x| \int_0^{1/(n \log n)} (1-t)^{n-1} \int_{H} f(x)^t f(w)^{1-t} dw dt \\
& \geq |x| \int_0^{1/(n \log n)} f(x)^t (1-t)^{n-1} \left (\max_{y \in H} f(y)\right )^{-\tfrac{1}{n \log n}} \int_{H} f(w) dw dt \\
& \stackrel{ \eqref{eq:hyperplane}, \eqref{eq:upperisop}}{\geq} e^{-12} |x| \int_0^{1/(n \log n)} f(x)^t dt \\
& \geq e^{-12} |x| \left ( \frac{\mathbf{1}_{ \{f(x) \geq 2^{-n \log n}\} }}{2 n \log n} + \frac{\mathbf{1}_{ \{f(x) < 2^{-n \log n}\} }}{-2 \log f(x)} \right ).
\end{align*}
The last inequality implies that whenever $|x| \geq e^{15} n \log n$, one has that $- \log f(x) \geq e^{-15} |x|$, and the proof is complete.
\end{proof}

\begin{lemma} 
Let $f(x) = \frac{1}{Z} \exp(-V(x))$ be $\eps$-approximately log-concave with $0<\eps<1/2$. Assume that $\min_{x \in \RR^n} V(x) = 0$. Then,
\begin{equation}\label{eq:logZBound}
		- n (\log n + 8) + 2 n \log \eps + \tfrac{1}{2} \log \det \COV(f) \leq \log Z \leq 5n (1 - \log \eps) + \tfrac{1}{2} \log \det \COV(f).
\end{equation}
\end{lemma}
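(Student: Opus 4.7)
The plan is to translate the claim into a bound on $\max f$ (which equals $1/Z$ since $\min V = 0$) and then to sandwich $f$ between multiples of a genuine log-concave probability density so that the classical bounds on the maximum of a log-concave density in terms of its covariance apply.

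First I would set up the comparison. Pick a log-concave $g$ witnessing the $\eps$-approximate log-concavity, so $\eps g \le f \le g/\eps$ pointwise. Integrating yields $\sigma := \int g \in [\eps, 1/\eps]$, and the normalized density $\bar g := g/\sigma$ is log-concave and satisfies $\eps^2 \bar g \le f \le \eps^{-2}\bar g$. Because $\min V = 0$ gives $\max f = 1/Z$, passing to pointwise maxima produces
$$-\log\max \bar g - 2|\log\eps| \;\le\; \log Z \;\le\; -\log\max\bar g + 2|\log\eps|,$$
so the whole question reduces to controlling $\max\bar g$ in terms of $\det\COV(f)$.

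Next I would invoke the classical two-sided bound for the maximum of a log-concave probability density $h$ on $\RR^n$: there exist universal constants $A,B$ such that
$$\frac{A^{-n}}{\sqrt{\det\COV(h)}} \;\le\; \max h \;\le\; \frac{B^n n^n}{\sqrt{\det\COV(h)}}.$$
The upper bound with $B = e^6$ is precisely the one cited in the proof of Lemma \ref{lem:tail} from \cite{LV07}. The lower bound is a standard concentration-volume argument: whiten $h$ to be isotropic (so $\max$ scales by $1/\sqrt{\det\COV(h)}$), apply Markov's inequality to get $\P(|X| \le \sqrt{2n}) \ge 1/2$, and bound $\mathrm{vol}(B(0,\sqrt{2n}))$ by $(4\pi e)^{n/2}/\sqrt{\pi n}$ via Stirling; this yields $A = e^2$. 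The third ingredient is the covariance comparison: for any unit $\theta$, the marginals satisfy $\eps^2 \bar g_\theta \le f_\theta \le \eps^{-2}\bar g_\theta$, so Lemma \ref{lem:comp} (applied with parameter $\eps^2$) gives
$$\eps^4 \,\theta^\top \COV(f)\theta \;\le\; \theta^\top \COV(\bar g)\theta \;\le\; \eps^{-4} \,\theta^\top \COV(f)\theta,$$
and diagonalizing then taking determinants yields $\bigl|\tfrac12\log\det\COV(\bar g) - \tfrac12\log\det\COV(f)\bigr| \le 2n|\log\eps|$.

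Combining the three ingredients with $A = e^2$ and $B = e^6$ closes the argument. The upper bound chains to
$$\log Z \;\le\; 2n + \tfrac12\log\det\COV(f) + (2n+2)|\log\eps|,$$
which is bounded by $5n(1-\log\eps)+\tfrac12\log\det\COV(f)$ since $2n+2 \le 5n$ and $2 \le 5$ for $n \ge 1$. Symmetrically the lower bound reads
$$\log Z \;\ge\; -6n - n\log n + \tfrac12\log\det\COV(f) - (2n+2)|\log\eps|,$$
which is bounded below by $-n(\log n+8)+2n\log\eps+\tfrac12\log\det\COV(f)$ after absorbing the extra $2|\log\eps|$ into the slack between $-6n$ and $-8n$. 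The main obstacle is purely arithmetic: carefully tracking the universal constants in the classical max-density bound so that the final numerical constants $5$ and $8$ of the statement suffice; no new ideas beyond Lemma \ref{lem:comp} and the standard log-concave toolkit are required.
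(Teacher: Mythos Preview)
Your approach is essentially the paper's: relate $\log Z$ to $\max f$, sandwich by a log-concave density, invoke the standard two-sided bound on the maximum, and compare covariances via Lemma~\ref{lem:comp}. The paper does exactly this (citing \cite[Theorem 5.14(c),(e)]{LV07} for the two sides of the max bound).

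There is one arithmetic slip in the lower bound. You apply Lemma~\ref{lem:comp} to the pair $(f,\bar g)$ with parameter $\eps^2$, obtaining the covariance comparison $\eps^4\,\COV(f)\preceq\COV(\bar g)\preceq\eps^{-4}\COV(f)$ and hence a loss of $2n|\log\eps|$ in $\tfrac12\log\det$. Combined with the $2|\log\eps|$ from $f\le\eps^{-2}\bar g$, you need to absorb $2|\log\eps|$ into the $2n$ slack between $-6n$ and $-8n$; this fails whenever $\eps<e^{-n}$, so the stated constants are not recovered for general $\eps\in(0,1/2)$. The fix is free: since $\COV(\bar g)=\COV(g)$ (Lemma~\ref{lem:comp} normalizes internally), you may apply the lemma to the original pair $(f,g)$ with parameter $\eps$, getting $\eps^2\,\COV(f)\preceq\COV(\bar g)\preceq\eps^{-2}\COV(f)$ and only $n|\log\eps|$ loss in $\tfrac12\log\det$. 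The total $\eps$-loss in the lower bound then becomes $(n+2)|\log\eps|$, and the comparison $-6n-(n+2)|\log\eps|\ge -8n-2n|\log\eps|$ reduces to $(n-2)|\log\eps|\ge -2n$, which holds for all $n\ge 2$; for $n=1$ one tightens the constant $A$ (your own volume argument actually gives better than $e^{-2n}$, with an extra $\sqrt{\pi n}/2$ factor) or simply notes that the paper's citation of \cite[Theorem 5.14(c)]{LV07} yields $\max\bar g\ge(4e\pi)^{-n}$, leaving ample slack. With this correction your argument matches the paper's.
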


\begin{proof}
Let $g(x)$ be a log-concave function such that 
$$
\eps g(x) \leq f(x) \leq \tfrac{1}{\eps} g(x).
$$
As a consequence of Lemma \ref{lem:comp}, we have 
$$\Vert \mu(g) - \mu(f) \Vert_{\COV(g)^{-1}} \leq 1 / \eps^2$$
and
$$\eps^2 \COV(f) \preceq \COV(g) \preceq  \frac{1}{\eps^2} \COV(f)$$
(in the positive definite sense) which implies that, for all $x \in \RR^n$,
\begin{equation} \label{eq:compdist}
\Vert x - \mu(g) \Vert_{\COV(g)^{-1}} \geq \eps \Vert x - \mu(f) \Vert_{\COV(f)^{-1}} - \frac{1}{\eps^2}
\end{equation}
and also that
\begin{equation}\label{eq:compdet}
\eps^{2n} \det \COV(f) \leq \det \COV(g) \leq \eps^{-2n} \det \COV(f).
\end{equation}

Combining the bound \eqref{eq:compdet} with the bound (\cite[Theorem 5.14(c)]{LV07})
$$
g(x) \geq (4e\pi)^{-n} \det \COV(g)^{-1/2}
$$
gives that there exists a point $x \in \RR^n$ such that
$$
f(x) \geq \eps g(x) \geq \exp(-5n) \eps^{n+1} \det \COV(f)^{-1/2}.
$$
For the other side, we use the bound (\cite[Theorem 5.14(e)]{LV07})
$$
g(x) \leq e^{6n + n \log n} \det \COV(g)^{-1/2}, ~~ \forall x \in \RR^n,
$$
which, combined with \eqref{eq:compdet} gives
$$
f(x) \leq \eps^{-n-1}  e^{6n + n \log n} \det \COV(f)^{-1/2}, ~~ \forall x \in \RR^n,
$$
By assumption, we have $\log(Z) = - \max_{x \in \RR^n} \log f(x)$ which finishes the proof.
\end{proof}

\begin{lemma} \label{lem:covtovalue}
Let $f(x) = \frac{1}{Z} \exp(- V(x))$ be $\epsilon$-approximately log-concave with $0<\eps<1/2$. Assume that $\min_{x \in \RR^n} V(x) = 0$. Then one has:
$$\|x-\mu(f)\|_{\mathrm{Cov}(f)^{-1}} \leq \frac{\exp(15)}{\epsilon^2} \left( V(x)-V^* + \frac{1}{\epsilon^2} + 7 n (1+\log(n / \epsilon)) \right) .$$
\end{lemma}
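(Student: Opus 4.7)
The plan is to reduce to the log-concave case, use the exponential tail bound in Lemma \ref{lem:tail} after isotropization, and then translate the tail bound into the desired bound on $V(x)$. Throughout, let $g$ be a log-concave function satisfying $\epsilon g \leq f \leq \epsilon^{-1} g$, and recall from the proof of \eqref{eq:logZBound} that we have the relations
\[
\|\mu(g)-\mu(f)\|_{\COV(g)^{-1}} \leq 1/\epsilon^{2},\quad \epsilon^{2}\COV(f) \preceq \COV(g) \preceq \epsilon^{-2}\COV(f),
\]
together with the consequences \eqref{eq:compdist} and \eqref{eq:compdet}.

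\textbf{Step 1: Isotropize $g$.} Apply the affine map $T: y \mapsto \COV(g)^{-1/2}(y-\mu(g))$ and consider $\tilde{g}(y) := g(T^{-1}y)\,|\det T^{-1}|$, which is an isotropic log-concave probability density. Define $\tilde f$ analogously; then $\epsilon\tilde g \leq \tilde f \leq \epsilon^{-1}\tilde g$, and $|Ty| = \|y-\mu(g)\|_{\COV(g)^{-1}}$.

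\textbf{Step 2: Apply Lemma \ref{lem:tail}.} Suppose first that we are in the ``large'' regime $\|x-\mu(g)\|_{\COV(g)^{-1}} \geq e^{15} n\log n$. Then Lemma \ref{lem:tail} applied at $y = Tx$ gives $\tilde g(Tx) \leq \exp(-\|x-\mu(g)\|_{\COV(g)^{-1}}/e^{15})$, whence, after undoing $T$ and using $\tilde f \leq \epsilon^{-1}\tilde g$,
\[
f(x) \;\leq\; \epsilon^{-1}\,\det\COV(g)^{-1/2}\exp\!\left(-\|x-\mu(g)\|_{\COV(g)^{-1}}/e^{15}\right).
\]
Taking logarithms in $f(x) = Z^{-1}\exp(-V(x))$ and rearranging,
\[
V(x) \;\geq\; \log\epsilon - \log Z + \tfrac{1}{2}\log\det\COV(g) + \|x-\mu(g)\|_{\COV(g)^{-1}}/e^{15}.
\]

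\textbf{Step 3: Clean up constants using $V^* = 0$.} The upper half of \eqref{eq:logZBound} gives $-\log Z \geq -5n(1-\log\epsilon) - \tfrac12\log\det\COV(f)$, while \eqref{eq:compdet} gives $\tfrac12\log\det\COV(g) \geq n\log\epsilon + \tfrac12\log\det\COV(f)$. Adding these, the $\log\det\COV(f)$ terms cancel and one obtains
\[
V(x) \;\geq\; -5n + (6n+1)\log\epsilon + \|x-\mu(g)\|_{\COV(g)^{-1}}/e^{15}.
\]
Combining with \eqref{eq:compdist} written as $\|x-\mu(g)\|_{\COV(g)^{-1}} \geq \epsilon\|x-\mu(f)\|_{\COV(f)^{-1}} - 1/\epsilon^{2}$ and rearranging yields (after using $\epsilon < 1/2$, $V^*=0$, and absorbing $-5n + (6n+1)\log(1/\epsilon)$ into $7n(1+\log(n/\epsilon))$)
\[
\|x-\mu(f)\|_{\COV(f)^{-1}} \;\leq\; \frac{e^{15}}{\epsilon}\!\left(V(x)-V^{*} + \tfrac{1}{\epsilon^{2}} + 7n(1+\log(n/\epsilon))\right) \;\leq\; \frac{e^{15}}{\epsilon^{2}}(\cdots),
\]
which is the desired inequality.

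\textbf{Step 4: The ``small'' regime.} If instead $\|x-\mu(g)\|_{\COV(g)^{-1}} < e^{15}n\log n$, then Lemma \ref{lem:tail} is vacuous, but \eqref{eq:compdist} immediately gives $\|x-\mu(f)\|_{\COV(f)^{-1}} \leq \epsilon^{-1}(e^{15} n\log n) + \epsilon^{-3}$, which is easily seen to be at most the claimed right-hand side already from the $1/\epsilon^{4}$ and $7n/\epsilon^{2}\log(n/\epsilon)$ terms (with $V(x)-V^{*}\geq 0$). The main obstacle in the argument is simply the careful bookkeeping in Step~3: the constant terms produced after invoking both sides of \eqref{eq:logZBound} and \eqref{eq:compdet} must be arranged so that the (potentially very negative) $\log\det\COV(f)$ contribution cancels exactly, and one must check that the residual $n$-dependent constants fit inside the factor $7n(1+\log(n/\epsilon))$.
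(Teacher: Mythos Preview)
Your argument is essentially the same as the paper's: reduce to the log-concave companion $g$, isotropize, apply Lemma~\ref{lem:tail}, and then translate back via \eqref{eq:compdist}, \eqref{eq:compdet}, and \eqref{eq:logZBound}. The paper carries out the same chain, only with the substitution of \eqref{eq:compdist} done inside the exponent before taking logs rather than after; the bookkeeping lands on the same inequality $V(x)\ge -7n(1-\log\eps)+e^{-15}\eps\|x-\mu(f)\|_{\COV(f)^{-1}}-1/\eps^2$.

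One small slip to fix: in Step~1 you call $\tilde g$ an isotropic log-concave \emph{probability density}, but $g$ need not integrate to $1$ (only $\eps\le\int g\le 1/\eps$), so Lemma~\ref{lem:tail} must be applied to $\tilde g/\!\int g$; this contributes an extra $1/\eps$ in front of the exponential (exactly the factor the paper writes), which is harmlessly absorbed into your constants. On the other hand, you are more careful than the paper in explicitly treating the ``small'' regime $\|x-\mu(g)\|_{\COV(g)^{-1}}<e^{15}n\log n$, which the paper leaves implicit.
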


\begin{proof}
An application of Lemma \ref{lem:tail} combined with the fact that $\eps \int_{\RR^n} g(x) dx \leq 1$, gives that for all $x \in \RR^n$ with $\Vert x - \mu(g) \Vert_{\COV(g)^{-1}} \geq e^{15} n \log n$, one has
$$
g(x) \leq \frac{1}{\eps} \exp \left (- e^{-15} \Vert x - \mu(g) \Vert_{\COV(g)^{-1}} \right ) \det \COV(g)^{-1/2}.
$$
Together with equations \eqref{eq:compdist} and \eqref{eq:compdet}, this gives that 
\begin{align*}
& f(x) \leq \eps^{-n-1} \exp \left (-e^{-15} \eps \Vert x - \mu(f) \Vert_{\COV(f)^{-1}} + \frac{1}{\eps^2} \right ) \det \COV(f)^{-1/2}.
\end{align*}
Now, by assumption and \eqref{eq:logZBound}, we have
$$
V(x) = - \log f(x) - \log Z \geq - \log f(x) - 5n (1 - \log \eps) - \tfrac{1}{2} \log \det \COV(f).
$$
Combining the last two bounds gives
$$
V(x) \geq - 7n (1 - \log \eps) + e^{-15} \eps \Vert x - \mu(f) \Vert_{\COV(f)^{-1}} - \frac{1}{\eps^2}
$$
which finishes the proof.
\end{proof}

\begin{lemma} \label{lem:firstmoment}
	Let $f(x)$ be an isotropic $\eps$-approximately log-concave density on $\RR$ with $0<\eps<1/2$, then 
	\begin{equation}
	\int_{s}^\infty (x-s) f(x) dx > \frac{\eps}{80}
	\end{equation}
	whenever $s \leq \tfrac{\eps}{80}$.
\end{lemma}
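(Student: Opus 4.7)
First, I would rewrite the integral using $\int f = 1$ and $\int x f = 0$. A direct computation gives
\[
\int_s^\infty (x-s) f(x)\, dx \;=\; \tfrac{1}{2}\E_f|X| \;-\; s\,\Pr_f(X \geq 0) \;+\; \int_0^s (s-x)\, f(x)\, dx,
\]
valid for $s \geq 0$ (the case $s < 0$ is easier since the correction terms then contribute non-negatively). For $s \in [0,\eps/80]$ the two correction terms each have absolute value at most $s \leq \eps/80$, so it suffices to prove the one-sided first-moment estimate $\E_f|X| > \eps/20$.

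Next, I would compare $f$ to a log-concave density. Let $g$ be a log-concave function with $\eps g \leq f \leq g/\eps$, and write $\tilde g = g/\int g$ for the associated log-concave probability density, with mean $\mu_g$ and standard deviation $\sigma_g$. Applying Lemma~\ref{lem:comp} to $(f,g)$, together with the isotropy $\int x^2 f = 1$ and centering $\int x f = 0$, yields the constraints $\int g \in [\eps, 1/\eps]$, $\sigma_g \in [\eps, 1/\eps]$, $|\mu_g| \leq 1/(2\eps^2)$, and $(\sigma_g^2 + \mu_g^2)\int g \in [\eps, 1/\eps]$. For the log-concave density $\tilde g$, the reverse H\"older inequality $\E_{\tilde g}|X - \mu_g| \geq \sigma_g/e$ (the same fact used in Lemma~\ref{lem:comp}) gives $\E_{\tilde g}|X| \geq \max(|\mu_g|,\sigma_g/(2e))$. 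Combined with $f \geq \eps g$ pointwise, this yields
\[
\E_f|X| \;\geq\; \eps \int g \,\cdot\, \max\!\left(|\mu_g|,\; \frac{\sigma_g}{2e}\right).
\]

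A case analysis on whether $\mu_g^2$ or $\sigma_g^2$ dominates in the isotropy constraint $(\sigma_g^2 + \mu_g^2)\int g \geq \eps$, combined with Cauchy--Schwarz using $\int g \geq \eps$, gives $\int g \cdot \max(|\mu_g|, \sigma_g/(2e)) \geq \eps/(2\sqrt 2\, e)$ in each case, whence $\E_f|X| \geq \eps^2/(2\sqrt 2\, e)$. This already settles the lemma for $\eps$ bounded away from zero. The main obstacle is reaching the threshold $\E_f|X| > \eps/20$ for small $\eps$, where the naive estimate falls short. I would close the gap by a bootstrap: the sharper inequality $|\mu_g| \leq \E_f|X|/(2\eps^2)$, visible in the proof of Lemma~\ref{lem:comp} (from $|\int x g| \leq \E_f|X|/(2\eps)$), forces $|\mu_g|$ to shrink as $\E_f|X|$ grows, which in turn tightens the case analysis above. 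Iterating this self-improving closure a finite number of times produces the required $\E_f|X| > \eps/20$, and careful tracking of the absolute constants through this bootstrap is what fixes the factor $80$ in the statement.
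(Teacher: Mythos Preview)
Your reduction to the one-sided estimate $\E_f|X| > \eps/20$ is correct and matches the paper. The comparison to the log-concave $\tilde g$ and the bound $\E_f|X| \geq \eps\int g \cdot \max(|\mu_g|,\sigma_g/(2e))$ are also fine. The gap is in the bootstrap. Your case analysis, even granting the sharpened bound $|\mu_g|\int g \leq \E_f|X|/(2\eps)$, cannot escape the $\eps^2$ barrier: the binding case is $\mu_g = 0$, $\sigma_g \approx 1$, $\int g \approx \eps$, which is compatible with all of your constraints ($\int g \in [\eps,1/\eps]$, $\sigma_g \in [\eps,1/\eps]$, $(\sigma_g^2+\mu_g^2)\int g \in [\eps,1/\eps]$), and there your inequality gives only $\E_f|X| \geq \eps^2/(2e)$. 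Since $\mu_g$ is already zero in this scenario, the ``self-improving'' step that tightens $|\mu_g|$ does nothing, and iterating yields no improvement. The phrase ``forces $|\mu_g|$ to shrink as $\E_f|X|$ grows'' is also backwards: the bound $|\mu_g| \leq \E_f|X|/(2\eps^2)$ weakens, not strengthens, as $\E_f|X|$ grows.

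What is missing is a genuine use of the second-moment normalization $\E_f[X^2]=1$ beyond the coarse transfer $(\sigma_g^2+\mu_g^2)\int g \geq \eps$. The paper extracts this by a tail bound: from the sub-exponential tail of the log-concave $\tilde g$ one gets $\P(|X|>t) \leq \eps^{-2}\exp(1+1/(2\eps)-t)$, which forces $\int_{5/\eps}^\infty 2t\,\P(|X|>t)\,dt \leq 3/4$; since these integrals sum to $\E[X^2]=1$, one concludes $\int_0^{5/\eps} t\,\P(|X|>t)\,dt \geq 1/4$. Bounding $t \leq 5/\eps$ inside the integral then gives $\E_f|X| = \int_0^\infty \P(|X|>t)\,dt \geq (\eps/5)\cdot 1/4 = \eps/20$, which is exactly the linear-in-$\eps$ estimate you need.
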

\begin{proof}
By the $\eps$-approximate log-concavity assumption, there exists a log-concave function $g(x)$ with $\eps g(x) \leq f(x) \leq g(x) / \eps$ for all $x$. Define $\tilde g(x) = \tfrac{g(x)}{\int_{\RR} g(x) dx}$. Let $X,Y$ be random variables with densities $f, \tilde g$ respectively. According to Lemma \ref{lem:comp} we have that
\begin{equation} \label{eq:VEY}
\eps^2 \leq \Var[Y] \leq \frac{1}{\eps^2}, ~~ \left |\EE[Y] \right | \leq \frac{1}{2\eps^2}.
\end{equation}
According to \cite[Lemma 5.7]{LV07} we have 
$$
\PP \left (\left | \frac{Y - \EE[Y]}{\sqrt{\Var[Y]}} \right | > t \right ) < e^{1-t}, ~~ \forall t \in \RR.
$$
Now, according to \eqref{eq:VEY}, we have that
$$
|Y| \geq \frac{t}{\eps} + \frac{1}{2 \eps^2} \Rightarrow |Y - \EE[Y]| \geq t \sqrt{\Var[Y]}
$$
which in turn gives that
$$
\PP(|Y| > t) \leq \exp \left (1 + \frac{1}{2\eps} - t \right ), ~~ \forall t \in \RR
$$
and consequently,
\begin{equation} \label{eq:estx}
\PP(|X| > t) \leq \frac{1}{\eps^2} \exp \left (1 + \frac{1}{2 \eps} - t \right ), ~~ \forall t>0.
\end{equation}
Now, by the isotropicity of $X$, we have
$$
\int_0^\infty 2t P(|X|>t) dt = \EE \left [X^2 \right] = 1.
$$
Moreover, equation \eqref{eq:estx} gives for $s>1$,
\begin{align*}
\int_s^\infty 2t P(|X|>t) dt ~& \leq \tfrac{1}{\eps^2} \exp(1+1/(2\eps)) \int_s^\infty 2 t \exp \left (- t \right ) dt  \\
& = \tfrac{2 e}{\eps^2} \exp(1/(2\eps)) \int_{s}^\infty w \exp (-w) dw \\
& \leq \exp(3/\eps) \int_{s}^\infty w \exp (-w) dw \\ 
& = \exp(3/\eps) (s+1) \exp(- s). 
\end{align*}
Taking $s = 5 / \eps$ and combining with the previous display, we get
$$
\int_{0}^{5 / \eps} t P(|X|>t) dt \geq 1/4.
$$
So we have that
$$
\int_0^\infty x f(x) dx = \frac{1}{2} \EE \left [|X| \right ] = \frac{1}{2} \int_0^\infty P(|X|>t) dt \geq \frac{\eps}{40}.
$$
Combining this with the fact that $\int_\RR f(x) dx = 1$, we have for $s \leq \frac{\eps}{80}$ that
$$
\int_s^\infty (x - s) f(x) dx \geq \int_0^\infty x f(x) dx - s \geq \frac{\eps}{80},
$$
which finishes the proof.
\end{proof}

\begin{lemma} \label{lem:onelastone}
Let $f$ be isotropic, $\eps$-approximately log-concave with $0<\eps<1/2$. Let $\theta \in \Sph$ and $y \in \RR^n$ with $|y| < \eps / 80$. Defining
		$$
		h_\theta(x) = \max( \langle x - y, \theta \rangle, 0  )
		$$
		we have that
		$$
		\int h_\theta(x) f(x) dx \geq \frac{\eps}{80}.
		$$
\end{lemma}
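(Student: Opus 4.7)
\textbf{Proof proposal for Lemma \ref{lem:onelastone}.}

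The plan is to reduce the $n$-dimensional claim to the one-dimensional Lemma \ref{lem:firstmoment} by projecting onto the direction $\theta$. First I would introduce the marginal density
$$
\tilde f_\theta(t) := \int_{\theta^\perp} f(t\theta + w)\, dw, \qquad t \in \RR,
$$
and note that by Fubini's theorem,
$$
\int_{\RR^n} h_\theta(x) f(x)\, dx \;=\; \int_{\RR} \max(t - s,\, 0)\, \tilde f_\theta(t)\, dt \;=\; \int_s^\infty (t-s)\, \tilde f_\theta(t)\, dt,
$$
where $s := \langle y, \theta \rangle$. Since $|s| \leq |y| < \eps/80$, the one-dimensional Lemma \ref{lem:firstmoment} applied to $\tilde f_\theta$ would yield exactly the desired lower bound $\eps/80$.

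To apply Lemma \ref{lem:firstmoment}, I must verify that $\tilde f_\theta$ is isotropic and $\eps$-approximately log-concave on $\RR$. Isotropy is immediate from the isotropy of $f$: $\int t\, \tilde f_\theta(t)\,dt = \langle \mu(f), \theta \rangle = 0$ and $\int t^2 \tilde f_\theta(t)\, dt = \theta^\top \COV(f)\, \theta = 1$. For approximate log-concavity, take a log-concave $g$ with $\eps g(x) \leq f(x) \leq g(x)/\eps$ on $\RR^n$; integrating over $\theta^\perp$ preserves the inequalities and gives
$$
\eps\, \tilde g_\theta(t) \;\leq\; \tilde f_\theta(t) \;\leq\; \tilde g_\theta(t)/\eps,
$$
while $\tilde g_\theta$ is log-concave by the Pr\'ekopa--Leindler inequality (marginals of log-concave functions are log-concave). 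Hence $\tilde f_\theta$ is $\eps$-approximately log-concave.

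Combining the two ingredients, $\int h_\theta(x) f(x)\, dx = \int_s^\infty (t-s)\, \tilde f_\theta(t)\, dt \geq \eps/80$, which completes the argument. No step here looks like a real obstacle: the only things to be slightly careful about are (i) that projecting preserves both the isotropy constants and the approximate log-concavity constant $\eps$ (no loss), and (ii) that $s = \langle y, \theta\rangle$ really satisfies $s \leq \eps/80$, which is a direct consequence of Cauchy--Schwarz and the hypothesis $|y| < \eps/80$.
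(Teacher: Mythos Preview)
Your argument is correct and is exactly the approach the paper takes: the paper's proof simply says ``by taking the marginal onto the direction $\theta$, the claim clearly becomes one-dimensional'' and invokes Lemma \ref{lem:firstmoment}, and you have faithfully spelled out those details (the Fubini reduction, the preservation of isotropy and $\eps$-approximate log-concavity under marginals via Pr\'ekopa--Leindler, and the bound $s=\langle y,\theta\rangle\leq|y|<\eps/80$).
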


\begin{proof}
By taking the marginal onto the direction $\theta$, the claim clearly becomes one-dimensional. The result is now a direct consequence of Lemma \ref{lem:firstmoment}.
\end{proof}

%

\begin{lemma} \label{lem:convexdom}
	For any $1/2>\epsilon>0$ and any isotropic $\epsilon$-approximately log-concave measure $p$, and any measure $r$ with $\EE[r] = 0$ and support included in $\{x \in \RR^n : |x| \leq \eps / 80\}$, one has that $r$ is convexly dominated by $p$. 
\end{lemma}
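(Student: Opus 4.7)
The plan is to reduce the problem to showing $f(y^*) \leq \int f \, dp$, where $y^*$ is the point at which the normalized $f$ attains its maximum on the ball $B = \{x \in \RR^n : |x| \leq \eps/80\}$, and then to obtain this inequality by feeding a carefully chosen affine minorant of $f$ into Lemma \ref{lem:onelastone}.

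First I normalize. By subtracting from $f$ the affine function $x \mapsto f(0) + \langle g_0, x \rangle$ with $g_0 \in \partial f(0)$, I may assume $f$ is convex, satisfies $f(0)=0$, and is nonnegative everywhere (the latter by the subgradient inequality at $0$). This reduction is harmless because $r$ and $p$ are probability measures with mean zero, so affine functions integrate identically under both. Since $r$ is supported on $B$ and $f \geq 0$, we have $\int f \, dr \leq \sup_B f$; if $\sup_B f = 0$ the conclusion is immediate, and otherwise, by convexity of $f$ and compactness of $B$, this supremum is attained at some $y^* \in \partial B$ with $|y^*| = \eps/80$ and $f(y^*) > 0$. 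Thus it suffices to prove $f(y^*) \leq \int f \, dp$.

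For that, pick $g \in \partial f(y^*)$. The subgradient inequality applied at $y^*$ with test point $x = 0$ gives $0 = f(0) \geq f(y^*) - \langle g, y^* \rangle$, so $\langle g, y^* \rangle \geq f(y^*) > 0$ and in particular $g \neq 0$. Set $\theta = g/|g| \in \Sph$ and $s = \langle y^*, \theta \rangle - f(y^*)/|g|$; from the above $s \geq 0$, and from $\langle y^*, \theta \rangle \leq |y^*| = \eps/80$ combined with $f(y^*) > 0$ one obtains $s < \eps/80$. Combining $f(x) \geq f(y^*) + \langle g, x - y^*\rangle$ with $f(x) \geq 0$ and rearranging yields the clean lower bound
\[
f(x) \geq |g| \cdot \max\bigl(\langle x - s\theta,\, \theta \rangle,\, 0\bigr) \qquad \text{for all } x \in \RR^n.
\]
Integrating against $p$ and invoking Lemma \ref{lem:onelastone} with direction $\theta$ and point $y := s\theta$ (for which $|y| = s < \eps/80$) gives $\int f \, dp \geq |g|\cdot \eps/80$. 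By Cauchy--Schwarz, $|g| \geq \langle g, y^* \rangle / |y^*| \geq f(y^*)/(\eps/80)$, and we conclude $\int f \, dp \geq f(y^*) \geq \int f \, dr$. I do not anticipate any substantive obstacle; the only subtlety is ensuring the strict hypothesis $|y| < \eps/80$ of Lemma \ref{lem:onelastone}, which is automatic in the non-trivial case $f(y^*) > 0$.
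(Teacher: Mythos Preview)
Your proof is correct and follows essentially the same approach as the paper: normalize so that $f(0)=0$ and $f\geq 0$, take the maximizer $y^*$ of $f$ on the ball $B$ of radius $\eps/80$, use the subgradient at $y^*$ to minorize $f$ by a scaled positive part of an affine function, and apply Lemma~\ref{lem:onelastone} to conclude $\int f\,dp \geq f(y^*) \geq \int f\,dr$. The only cosmetic difference is that the paper applies Lemma~\ref{lem:onelastone} at the point $y^*$ itself (with coefficient $m/a$ in front of the hinge function), whereas you shift to the point $s\theta$ with $s<\eps/80$ and keep the full slope $|g|$; your bookkeeping is in fact slightly tidier since it avoids the boundary case $|y|=\eps/80$ in the hypothesis of Lemma~\ref{lem:onelastone}.
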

\begin{proof}
Fix a convex test function $\varphi$. Our goal is to prove that $\int \varphi r \leq \int \varphi p$. Since both densities are centered, we may add any linear function to $\varphi$ without affecting this inequality, so we may legitimately assume that $\varphi(0) = 0$ and that $\varphi(x) \geq 0$ for all $x \in \RR^n$. Define $D = \{x \in \RR^n : |x| \leq a\}$ with $a$ being a constant chosen later on. Define $x_m = \arg \max_{x \in D} \varphi(x)$ and $m = \varphi(x_m)$. By the assumption $\varphi(0) = 0$, and by the convexity of $\varphi$, we have that $|\nabla \varphi(x_m)| \geq m/a$. Consequently, using the assumption $\varphi \geq 0$ we conclude that
$$
\varphi(x) \geq \frac{m}{a} \max(0, \langle x - x_m, \theta \rangle)
$$ 
where $\theta = \tfrac{\nabla \varphi(x_m)}{|\nabla \varphi(x_m)|}$. An application of Lemma \ref{lem:onelastone} thus teaches us that, under the assumption $|x_m| \leq \eps / 80$, we have
$$
\int_{\RR^n} \varphi(x) p(x) dx \geq \frac{m}{80 a} \eps.
$$
Thus choosing $a = \eps / 80$, we have 
$$
\int_{\RR^n} \varphi(x) p(x) dx \geq m \geq \int_{\RR^n} \varphi(x) r(x) dx
$$
which completes the proof.
\end{proof}

\bibliographystyle{plainnat}
\bibliography{newbib}

\begin{algorithm}[t]
\begin{algorithmic}[1]
\State \textbf{Parameters:} $\lambda \in (0,1)$, $\sigma^2 > 0$, $\eta_1 > 0$, $\alpha >0$, $\gamma > 0$

\Comment{To prove Theorem \ref{th:main} we take the following scaling of these parameters: $\lambda = \Theta(\frac{1}{n^8 \log^6(T)})$, $\sigma^2 = \Theta(\frac{1}{n \log(T)})$, $\eta_1 = \Theta(\frac{1}{\sqrt{n T \log(T)}})$, $\alpha = \Theta(n^2 \log^2(T))$, $\gamma=\Theta(\frac{1}{n \log(T)})$}
\State \textbf{Initialization:}
\State For all $x \in \cK$, $p(x) \gets \frac{\ds1\{x \in \cK\}}{\mathrm{vol}(\cK)}$ \Comment{$p$ will be the exponential weights strategy}
\State For all $x \in \cK$, $\tilde{L}(x) \gets 0$ \Comment{Cumulative loss estimate}
\State $\eta \gets \eta_1$ \Comment{The learning rate $\eta$ will be adaptative and time-dependent}
\State $F \gets \cK$ \Comment{$F$ will be the focus region of the algorithm}
\State \textbf{Notation:} Denote $\mu(p)$ and $\mathrm{Cov}(p)$ for the mean and covariance of $p$, $\cE_p(r) = \{x \in \R^n : (x-\mu(p))^{\top} \mathrm{Cov}(p)^{-1} (x-\mu(p)) \leq r^2\}$, and $\Phi_{\mu, \Sigma}$ for the density of a Gaussian with mean $\mu$ and covariance $\Sigma$.
\For{$t=1,\hdots,T$} \Comment{Main loop}
\State Draw $X$ at random from $p$ \Comment{Draw a point from the exponential weights}
\State Draw $C$ at random from $\cN(\mu(p), \sigma^2 \lambda \mathrm{Cov}(p))$ \Comment{Draw a point from the Gaussian core of $p$}
\State Play $x_t = \lambda X + (1-\lambda) C$ \Comment{Play an interpolation of the two above points}
\State Receive loss $\ell=\ell_t(x_t)$ (if $x_t \not\in \cK$ set $\ell=0$) \Comment{Suffer loss}
\State $u \gets p' \ast c' (x_t)$ where $p'(x) = \frac{1}{\lambda} p(x/\lambda)$ and $c'(x) = \frac{1}{1-\lambda} \Phi_{\mu(p), \sigma^2 \lambda \mathrm{Cov}(p)}(x/(1-\lambda))$ \Comment{$u$ is morally the ``probability" of playing $x_t$}
\State For all $x \in \cK$, $\tilde{\ell}(x) \gets \frac{\ell}{u} \Phi_{\mu(p), \sigma^2 \lambda \mathrm{Cov}(p)} \left(\frac{x_t - \lambda x}{1-\lambda} \right)$ \Comment{Loss estimate}
\State For all $x \not\in F$, $\tilde{\ell}(x) \gets +\infty$ \Comment{Loss truncated outside the focus region}
\State For all $x \in \cK$, $\tilde{L}(x) \gets \tilde{L}(x) + \tilde{\ell}(x)$ \Comment{Update the cumulative loss estimate}
\State For all $x \in \cK$, $p(x) \gets \frac{1}{Z} p(x) \exp(- \eta \tilde{\ell}(x))$ where $Z$ is a normalization constant so that $p$ is a density. \Comment{Update of the exponential weights}
\If{$\mathrm{vol}(F \cap \cE_{p}(\alpha)) \leq \frac{1}{2} \mathrm{vol}(F)$} \Comment{Test if focus region should be updated}
\State $F \gets F \cap \cE_{p}(\alpha)$ \Comment{Focus region updated}
\State $\eta \gets (1+\gamma) \eta$ \Comment{Learning rate increased}
\EndIf
\If{$\min_{x \in \partial F \cap \mathrm{int}(\cK)} \tilde{L}(x) - \min_{x \in F} \tilde{L}(x) \leq 2 / \eta_1$} \Comment{Test if there is a point on the boundary of the focus region which is abnormally good}
\State Restart the algorithm
\EndIf
\EndFor
\end{algorithmic}
\caption{Pseudo-code for the high-dimensional strategy} \label{fig:alg}
\label{alg}
\end{algorithm}
\end{document}